%% file: arXiv2.tex
\documentclass[12pt]{article}

\usepackage[margin=1in]{geometry}

\usepackage{setspace}

\usepackage{graphicx}

\usepackage{enumitem}

\usepackage{microtype}
\usepackage{subfigure}
\usepackage{booktabs}
\usepackage{comment}
\usepackage{wrapfig}
\usepackage{arydshln}
\usepackage{enumitem}
\usepackage{multirow}
\usepackage{url}
\usepackage{natbib}
\usepackage{authblk}

\RequirePackage[colorlinks,citecolor=blue,linkcolor=blue,urlcolor=blue,pagebackref]{hyperref}

\usepackage{amsmath}
\usepackage{amssymb}
\usepackage{mathtools}
\usepackage{amsfonts}
\usepackage{bm}
\usepackage{bbm}

\input{Macros/algorithm.tex}
\input{Macros/math.tex}
\input{Macros/theorems.tex}

\input{Macros/misc.tex}

\usepackage[capitalize,noabbrev]{cleveref}

\allowdisplaybreaks

\input{Head/Title}
\input{Head/AuthorInf}

\begin{document}

\maketitle

\input{Main.tex}

\end{document}

%% file: Macros/algorithm.tex
\usepackage{algorithm}
\usepackage{algorithmic}

%% file: Macros/math.tex
\def\1{\bm{1}}

\DeclareMathAlphabet{\mathsfit}{\encodingdefault}{\sfdefault}{m}{sl}
\SetMathAlphabet{\mathsfit}{bold}{\encodingdefault}{\sfdefault}{bx}{n}

\def\calD{{\mathcal{D}}}

\def\calT{{\mathcal{T}}}

\newcommand{\E}{\mathbb{E}}

\newcommand{\R}{\mathbb{R}}

\newcommand{\Var}{\mathrm{Var}}

\newcommand{\Cov}{\mathrm{Cov}}

\newcommand{\Prb}[1]{\mathbb{P}\left[#1\right]}


%% file: Macros/theorems.tex
\usepackage{amsthm}

\theoremstyle{plain}

\newtheorem{theorem}{Theorem}[section]

\newtheorem{corollary}[theorem]{Corollary}
\newtheorem{proposition}[theorem]{Proposition}

\newtheorem{assumption}{Assumption}[section]

%% file: Macros/misc.tex
\usepackage[textsize=tiny]{todonotes}
\usepackage{multirow}
\usepackage{wrapfig}
\usepackage{subfigure}
\usepackage{tabularx}

\usepackage{xcolor}
\newcount\Comments  
\newcommand{\kibitz}[2]{\ifnum\Comments=1\textcolor{#1}{#2}\fi}

\usepackage{listings}

\usepackage{tabularx}

%% file: Head/Title.tex
\title{Generative Marketing Mix Modeling: A Causal Inference Framework Linking GEO and GEM to Business Impact}

%% file: Head/AuthorInf.tex
\author[1]{Masahiro Kato\thanks{Email: \texttt{mkato-csecon@g.ecc.u-tokyo.ac.jp}}$\,$}
\author[2]{Daiki Honma}
\author[2]{Taka Kato}

\affil[1]{The University of Tokyo and Mizuho-DL Financial Technology Co., Ltd.}
\affil[2]{NP-hard}

\date{\today}

%% file: Main.tex
\begin{abstract}
Generative artificial intelligence changes how firms reach customers, but standard marketing data do not record how often users see and notice a firm's name in generated answers. We develop Generative Marketing Mix Modeling (GMMM) to estimate the causal effects of Generative Engine Optimization (GEO) and Generative Engine Marketing (GEM). For GEO, GMMM combines repeated generated answers with question counts, shares of use across generative systems, and notice probabilities. For GEM, it combines records of sponsored placements with notice probabilities. GMMM compares expected business responses under alternative treatment sequences and establishes sufficient conditions for identifying the resulting effects. We investigate the empirical performance of the proposed method using simulated answers to product recommendation in English and Japanese. 
\end{abstract}

\noindent\textbf{Keywords:} marketing mix modeling; generative engine optimization; generative engine marketing; causal inference; measurement error; channel attribution; Bayesian inference

\section{Introduction}
Marketing mix modeling (MMM) relates an aggregate response, such as sales or conversions, to media inputs observed across markets and periods. Two features of advertising motivate the transformations used in MMM. An advertisement can affect the response after the period in which it is shown, so a carryover function combines current and past inputs. The marginal change in response can also become smaller when the accumulated input is already high, so a saturation function maps the carried-over input into a bounded or slowly increasing regressor. MMM applies these transformations before estimating the coefficients of the media channels \citep{Nerlove1962optialadvertizing,Clarke1976econometricmeasurement,Hanssens2001marketresponse}.

Generative artificial intelligence (AI) changes what must be measured before this model can be used. Through Generative Engine Optimization (GEO), a firm modifies source material that may alter answers about the firm, for example by adding a frequently asked questions section or revising product documentation. Through Generative Engine Marketing (GEM), a firm pays for sponsored placements in generated answers \citep{Aggarwal2024geogenerative,Feizi2026onlineadvertisements,Hu2026gembencha}. Conventional impression logs do not record nonsponsored occurrences of a firm's name in generated answers, and a platform record of sponsored placements does not show whether users noticed them.

The frequency with which collected answers contain a firm's name is not yet a media input for a market and period. To obtain an expected count for a market and period, that frequency must be combined with the number of relevant questions, the share handled by each generative system, and the probability that a user notices the name. Referral sessions measure a different event because users may read an answer without following a link. For sponsored placements, spending must likewise be related to the number of placements shown and the probability of notice. GEO and GEM require this additional measurement step before they can be analyzed alongside media channels whose impressions are recorded directly.

Estimating a treatment effect adds a second requirement. Carryover makes the response in period $t$ depend on inputs from earlier periods, so disabling GEO or GEM changes the entire sequence of media inputs over the affected periods. The relevant comparison must replace the full sequence under one treatment with the full sequence under another treatment before the response is evaluated. A regression coefficient by itself does not perform this comparison. Throughout this study, GEO denotes the specified source modification, and its treatment effect is the total effect of applying that modification. The response may change through the generated answers represented by the GEO input and through other consequences of the modification, such as conventional search traffic or conversion on a landing page.

\subsection{Contributions}
We develop GMMM to connect measurements of generated answers and sponsored placements to MMM. For GEO, GMMM models the probability that an answer contains a specified feature and combines that probability with market query counts and notice probabilities to obtain the expected number of noticed occurrences under each source state. This count can be positive without the source modification because GEO changes the occurrence probability rather than creating the entire input from zero. For GEM, GMMM relates spending to the number of sponsored placements and then accounts for notice. The resulting inputs receive the same carryover and Hill transformations as established media before they enter the response model.

Estimation need not be Bayesian. A likelihood or a penalized criterion can support plug-in, joint, and other regularized procedures. We use a Bayesian formulation to average over uncertainty in the occurrence and notice probabilities. The cut posterior leaves the distribution of the measurement parameters determined by the measurement data, whereas the joint posterior also allows the response data to update it. Comparing the two procedures shows how estimation of the media transformations and uncertainty about the constructed inputs affect the treatment effect of GEO.

The identification analysis gives conditions under which the treatment effect remains identifiable even when the coefficients on the GEO source state and the GEO input cannot be determined separately after the media transformations are fixed. It also characterizes when replacing the expected market count by a sequence of occurrence probabilities changes only the scale and when changes in market composition prevent that reduction.

The empirical collection contains 2,240 complete answers from GPT-5.6 Luna and GPT-4o. Each model answers the same set of 56 questions, comprising 28 English questions and 28 Japanese questions. Every call uses the same system instruction and requires web search. The target name is not included in the system instruction or in the questions. Glasp occurs in $33.8\%$ of the GPT-5.6 Luna answers and $27.8\%$ of the GPT-4o answers, although GPT-4o has the higher rate for the English questions. These observations estimate occurrence probabilities under the source state present during collection. One simulation design uses those estimates for the baseline occurrence probabilities, while the simulation itself generates the treatment effect. Across the controlled comparisons, estimating carryover and saturation explains most of the improvement in coefficient recovery over a plug-in method that fixes them, while updating the measurement parameters with response data has no uniform advantage for estimating the treatment effect.

\subsection{Related Work}
Classical MMM combines distributed lags with nonlinear response functions, and recent methods use regularization or Bayesian pooling across markets \citep{Jin2017bayesianmethods,Ng2024bayesiantime,Runge2025packagingup,Gong2024causalmmmlearning,Sun2017geolevelbayesian}. These methods estimate a response model once its media inputs have been defined. Causal interpretation requires further evidence because predictive fit alone does not identify an advertising effect. Work on incrementality and geographic experiments studies the assignment mechanisms and randomized comparisons that can support such an interpretation \citep{Chan2017challengesand,Lewis2015theunfavorable,Gordon2019acomparison,Vaver2011measuringad,Chen2022robustcausal}.

Research on GEO studies how changes in source material affect generated answers and their ranking, while work on GEM considers sponsored retrieval and placement \citep{Bagga2026egeoa,Kim2026sageoarena,Martinez2026optimizingvisibility,Hajiaghayi2024adauctions,Dubey2024auctionswith,Dutting2024mechanismdesign,Xu2026adinsertion}. Studies of GEO treat citation of a source and use of its content in an answer as different outcomes \citep{Zhang2026fromcitation}. Neither quantity alone gives the number of users who notice the target feature. GMMM addresses the next step by placing the occurrence probability on a market and time scale suitable for MMM.

The use of records on reach and frequency in MMM provides a precedent for replacing spending with quantities closer to exposure \citep{Zhang2023bayesianhierarchical}. Randomized estimates have likewise been incorporated through informative priors or likelihood terms \citep{Zhang2024mediamix}. Meridian and PyMC-Marketing are Bayesian implementations of MMM, and Meridian supports inputs based on reach and frequency.\footnote{See \url{https://developers.google.com/meridian} and \url{https://www.pymc-marketing.io/}.} Our contribution concerns the measurements required when the media input itself must be inferred from generated answers or sponsored placements.

Cut and joint posteriors arise in modular Bayesian inference \citep{Plummer2015cutsin,Jacob2017bettertogether,Carmona2020semimodularinference}. A cut posterior passes uncertainty from one module to another without allowing the later module to revise the first distribution. Its interpretation in GMMM depends on the assignment of parameters to modules: fixing the prior for the media transformations also prevents their estimation from response data, whereas cutting the update of the measurement parameters still permits those transformations to be estimated.

\section{Setup}
\label{sec:setup}
We use \emph{response} for the business variable analyzed by MMM and \emph{answer} for text returned by a generative system. A \emph{treatment} is a specified setting of GEO or GEM. When several periods are involved, a treatment sequence lists those settings from period $1$ through period $T$ and includes any earlier values needed to initialize carryover.

\subsection{Response and Media Inputs in MMM}
For every observational unit $i\in\{1,\ldots,n\}$ and period $t\in\{1,\ldots,T\}$, let $Y_{it}$ denote the response and let $W_{it}$ contain baseline variables and controls observed before the period-$t$ treatment. The unit may be a geographic market or another aggregation used consistently in the measurement and response models. Let $\mathcal M_0$ be the set of established media channels. For every $m\in\mathcal M_0$, let $E_{mit}\geq0$ denote the media input in period $t$ and define its sequence through period $t$ by $\boldsymbol E_{mi,1:t}=(E_{mi1},\ldots,E_{mit})$.

A standard MMM transforms each sequence of media inputs before including it in the response model. We write
\begin{align}
H_{mit}
&=
h_m\left(
A_m(\boldsymbol E_{mi,1:t};\alpha_m);\theta_m
\right),
\label{eq:general-media-transform}
\end{align}
where $A_m$ combines current and earlier inputs according to carryover parameters $\alpha_m$, and $h_m$ represents saturation with parameters $\theta_m$. Let $b(W;\gamma)$ be the baseline response with coefficient vector $\gamma$. If $\mathcal I_0$ is a specified set of distinct pairs of media channels, the conditional mean satisfies
\begin{align}
g_Y\left(\mu^{\mathrm{MMM}}_{it}\right)
&=
b(W_{it};\gamma)
+
\sum_{m\in\mathcal M_0}\beta_mH_{mit}
+
\sum_{(m,m')\in\mathcal I_0}
\beta_{mm'}H_{mit}H_{m'it},
\label{eq:general-mmm}
\end{align}
where $\mu^{\mathrm{MMM}}_{it}=\E[Y_{it}\mid\mathcal H_{it}]$. The information set $\mathcal H_{it}$ contains the controls and the sequences of media inputs used to model $Y_{it}$. We specify $Y_{it}\mid\mathcal H_{it}\sim\mathcal F(\mu^{\mathrm{MMM}}_{it},\psi)$, where $\psi$ contains the remaining parameters of the response distribution. The identity link gives the usual model for a continuous response with additive noise of mean zero, while other links accommodate counts or rates.

\subsection{Channels Through Generative AI}
GMMM enlarges the media set to $\mathcal M=\mathcal M_0\cup\{G,P\}$, where $G$ indexes the input constructed from generated answers and $P$ indexes the input constructed from sponsored placements. For each period, the first input is the expected number of generated answers in which the specified property occurs and is noticed under the current source state. The second is the expected number of sponsored placements that users notice. The first count need not be zero when the source modification is absent; GEO changes its value by changing generated answers. Inputs for established media are observed directly, while the two additional inputs are constructed from generated answers, platform records, market counts, and observations of user attention.

Let $Z_{it}\in\{0,1\}$ denote the GEO source state, where $Z_{it}=1$ means that the specified source modification is present and $Z_{it}=0$ means that it is absent. The response specification studied here is
\begin{align}
g_Y\left(\mu_{it}\right)
={}&
b(W_{it};\gamma)
+
\sum_{m\in\mathcal M_0}\beta_mH_{mit}
+
\beta_D Z_{it}
+
\beta_GH^G_{it}
+
\beta_PH^P_{it}
+
\beta_{GP}H^G_{it}H^P_{it}.
\label{eq:gmmm-response}
\end{align}
The term $\beta_D Z_{it}$ permits the source modification to affect the response through variables that are not represented by the constructed GEO input, such as conventional search traffic or conversion on a landing page. The coefficient $\beta_{GP}$ permits the effect associated with one generative channel to depend on the other. Either term may be omitted when the corresponding mechanism is excluded from the model.

\subsection{Data Sources}
Let $\calD_M$ contain the observations used to construct the GEO and GEM inputs. For GEO, these observations include repeated generated answers, counts of relevant questions, shares of use across generative systems, and observed notice indicators. For GEM, they include spending, the number of sponsored placements shown, predictors of that number, and observed notice indicators. Let $\calD_Y$ contain $Y_{it}$ together with the controls and media inputs in the response model. When a randomized experiment estimates the same treatment effect for a comparable population and evaluation period, its estimate and standard error form $\calD_E$.

The collection described in Section~\ref{sec:answer-collection} contains 20 answers from each of two GPT models for each of 56 questions, comprising 28 English questions and 28 Japanese questions. The public referral series analyzed in Appendix~\ref{sec:empirical} comes from an earlier period and lacks the corresponding question counts, generated answers, and notice observations. These missing observations prevent the two sources from being combined in one GMMM analysis.

The simulations use units of the form $i=(g,j)$, where $g\in\{1,\ldots,5\}$ indexes geographic markets and $j\in\{1,\ldots,6\}$ indexes product clusters. Each cluster contains pages and questions that respond to the same GEO treatment. A substantive application must use an observational unit that is common to the construction of the media inputs, the response model, and the treatment effect.

\subsection{Treatment Effect}
Because carryover links the period-$t$ response to media inputs observed before $t$, we define the treatment effect from complete treatment sequences over a specified evaluation window.

Let $a_G\in\{0,1\}$ select one of two GEO sequences: $a_G=1$ selects the specified sequence with GEO, and $a_G=0$ selects the sequence with the source modification removed. Let $a_P\in\{0,1\}$ similarly select the specified GEM spending sequence or the sequence with GEM removed. The pair $(a_G,a_P)$ selects one complete GEO sequence and one complete GEM sequence across all periods. The comparison uses the same history before period $1$ unless an intervention begins earlier; in that case, the treatment and input values before period $1$ follow the selected sequence. The simulations set all media inputs before period $1$ to zero. For an evaluation set $\calT_E\subseteq\{1,\ldots,T\}$, define
\begin{align}
V(a_G,a_P)
&=
\sum_{i=1}^{n}
\sum_{t\in\calT_E}
\E\left[Y_{it}(a_G,a_P)\right],
\label{eq:business-value}
\end{align}
where $Y_{it}(a_G,a_P)$ is the potential response under the selected treatment sequences and the specified evolution of all other variables. The primary estimand is
\begin{align}
\Delta_G
&=
V(1,1)-V(0,1).
\label{eq:geo-treatment-effect}
\end{align}
We refer to $\Delta_G$ as the treatment effect of GEO. It is the total effect of the specified source modification, including the change transmitted through $H^G$ and any additional change represented by $\beta_DZ_{it}$. The contrast retains the specified GEM sequence. The evaluation window may extend beyond the periods in which GEO is active so that the effect includes responses associated with carried-over inputs. The analogous treatment effect of GEM is $\Delta_P=V(1,1)-V(1,0)$, which compares the specified GEM sequence with the sequence in which GEM is disabled while retaining the GEO sequence.

\section{Generative Marketing Mix Modeling}
\label{sec:method}
The data record the source state, spending, and established media inputs, but they omit the expected counts associated with GEO and GEM. GMMM constructs these two counts before applying the standard media transformations and estimating the response model.

\subsection{The GEO Input}
Let $q\in\{1,\ldots,Q\}$ index clusters of questions and let $p\in\mathcal P$ index generative systems. A system is defined by the model version and the settings used to obtain an answer. The set $\mathcal Q(i)$ contains the question clusters associated with unit $i$. For every $i$, $q$, and $t$, let $N_{iqt}\geq0$ be the number of times that a question in cluster $q$ is submitted in unit $i$ during period $t$. For every $i$, $p$, and $t$, let $\omega_{ipt}\in[0,1]$ be the share of those questions handled by system $p$, with $\sum_{p\in\mathcal P}\omega_{ipt}=1$.

Before collecting answers, the analyst specifies the property to be recorded. For repetition $r\in\{1,\ldots,R_{qpt}\}$ and source state $z\in\{0,1\}$, let $X_{qptr}(z)=1$ when the stored answer has that property and let it equal zero otherwise. In the empirical collection, the property is the occurrence of a spelling of the target name in a fixed dictionary. Let
$\pi_{qpt}(z)=\Prb(X_{qptr}(z)=1)$ be its occurrence probability. Conditional on $X_{qptr}(z)=1$, let $\lambda_{qpt}\in[0,1]$ be the probability that the user notices the recorded property. The input associated with GEO is
\begin{align}
E^G_{it}(z)
&=
\sum_{q\in\mathcal Q(i)}
N_{iqt}
\sum_{p\in\mathcal P}
\omega_{ipt}\pi_{qpt}(z)\lambda_{qpt}.
\label{eq:geo-exposure}
\end{align}
The quantity $E^G_{it}(z)$ is the expected number of generated answers in unit $i$ and period $t$ for which the specified property occurs and is noticed under source state $z$. It may be positive when $z=0$; the change in this input caused by the source modification is $E^G_{it}(1)-E^G_{it}(0)$. A user who encounters the property more than once contributes more than once to this count. The comparison between $z=1$ and $z=0$ holds $N_{iqt}$ and $\omega_{ipt}$ fixed. We assume that the source modification does not change notice conditional on occurrence, which is why $\lambda_{qpt}$ has no argument $z$.

Equation~\eqref{eq:geo-exposure} pools occurrence and notice probabilities across markets after conditioning on the question cluster, generative system, period, and source state. It also uses a system share that is common across question clusters within a market and period. When the data distinguish these cells, the same construction can use $\pi_{iqpt}(z)$, $\lambda_{iqpt}$, and $\omega_{iqpt}$ without changing the media transformations or the treatment contrasts.

We estimate the occurrence probabilities with the hierarchical model
\begin{align}
X_{qptr}\mid\pi_{qpt}
&\sim\operatorname{Bernoulli}(\pi_{qpt}),
\label{eq:occurrence-binomial}\\
\operatorname{logit}(\pi_{qpt})
&=
A_{qpt}^{\mathsf T}\alpha+u_q,
\qquad
\boldsymbol u=B_Q\sigma_u\widetilde{\boldsymbol u},
\qquad
\widetilde{\boldsymbol u}\sim\mathcal N(0,I_{Q-1}).
\label{eq:occurrence-logit}
\end{align}
The columns of $B_Q\in\R^{Q\times(Q-1)}$ form an orthonormal basis for vectors whose coordinates sum to zero, so $\sum_{q=1}^{Q}u_q=0$. The predictor vector $A_{qpt}$ may contain indicators for the generative system, the source state, and other observed determinants of occurrence. Its coefficient vector is $\alpha$. The effect for question cluster $q$ is represented in noncentered form by standard normal coordinates $\widetilde{\boldsymbol u}$ and scale $\sigma_u>0$. Replacing the observed sequence of source states by another treatment sequence yields probabilities under that sequence only when the data contain the required variation in source state. Answers collected under one source state identify only the occurrence probabilities for that state; the change caused by the source modification requires observations under both states.

Suppose that a user study presents generated answers and records $C_{qpt}$ cases in which participants notice the property among $n_{qpt}$ answers in which it occurs. We use
\begin{align}
C_{qpt}\mid\lambda_{qpt}
&\sim\operatorname{Binomial}(n_{qpt},\lambda_{qpt}),
\label{eq:notice-binomial}\\
\operatorname{logit}(\lambda_{qpt})
&=
(D^\lambda_{qpt})^{\mathsf T}\xi+v_q,
\qquad
\boldsymbol v=B_Q\sigma_v\widetilde{\boldsymbol v},
\qquad
\widetilde{\boldsymbol v}\sim\mathcal N(0,I_{Q-1}).
\label{eq:notice-logit}
\end{align}
Here, $D^\lambda_{qpt}$ is a vector of observed predictors with coefficient vector $\xi$. The effect for question cluster $q$, denoted by $v_q$, uses the same zero-sum basis and its own scale $\sigma_v>0$. Together, the two models estimate the probabilities needed in \eqref{eq:geo-exposure}.

\subsection{The GEM Input}
A sponsored placement contributes to the GEM input only if the platform shows it and the user notices it. Let $S^P_{it}\geq0$ denote the spending level selected by the firm for the GEM intervention in unit $i$ and period $t$, before the platform determines delivery. Let $L^P_{it}$ denote the number of sponsored placements shown. Conditional on a placement being shown, let $\rho^P_{it}\in[0,1]$ be the probability that the user notices it. For positive spending, we model the number shown by
\begin{align}
L^P_{it}\mid S^P_{it}>0
&\sim\operatorname{Poisson}(\mu^P_{it}),
\label{eq:paid-placement-distribution}\\
\log\mu^P_{it}
&=
\phi_0
+
\phi_S\log S^P_{it}
+
\phi_DD_{it}
+
\phi_RR_{it},
\qquad
\phi_S>0,
\label{eq:paid-placement-mean}
\end{align}
where $D_{it}$ is a demand predictor observed before the period-$t$ treatment and $R_{it}$ is a promotion indicator. We set $\mu^P_{it}=0$ when $S^P_{it}=0$. The input associated with GEM is
\begin{align}
E^P_{it}
&=
\mu^P_{it}\rho^P_{it}.
\label{eq:paid-exposure}
\end{align}
The restriction $\phi_S>0$ makes the expected number of sponsored placements increase with positive spending. A user study in which participants report whether they noticed each sponsored placement can estimate $\rho^P_{it}$ with a binomial model. The analysis below uses one notice probability for all sponsored placements, while \eqref{eq:paid-exposure} permits variation across units and periods when the data support it. When $L^P_{it}$ is observed and the analysis conditions on realized delivery, $L^P_{it}\rho^P_{it}$ can be used directly. The Poisson model is needed for missing delivery counts and for spending sequences that were not observed.

\subsection{Carryover and Saturation}
The two expected counts are transformed in the same way as inputs for established media. We use normalized geometric carryover,
\begin{align}
A_{mit}
&=
\frac{
\sum_{\ell=0}^{L_m}
\alpha_m^{\ell}E_{mi,t-\ell}/s_m
}{
\sum_{\ell=0}^{L_m}\alpha_m^{\ell}
},
\qquad
0\leq\alpha_m<1,
\label{eq:adstock}
\end{align}
where $L_m\in\{0,1,\ldots\}$ is the maximum lag and $s_m>0$ is a fixed scale used in the analysis. Values before period $1$ come from the earlier history assigned to the treatment sequence; the simulations set these values to zero. The transformed input is
\begin{align}
H_{mit}
&=
\frac{A_{mit}^{\kappa_m}}
{A_{mit}^{\kappa_m}+\theta_m^{\kappa_m}},
\qquad
\theta_m>0,
\qquad
\kappa_m>0.
\label{eq:hill}
\end{align}
The Hill function equals $1/2$ when $A_{mit}=\theta_m$, and $\kappa_m$ determines how sharply it changes near that point. Carryover is applied before the Hill transformation \citep{Jin2017bayesianmethods}. Substituting $H^G_{it}$ and $H^P_{it}$ into \eqref{eq:gmmm-response} completes the response specification.

\subsection{Estimation from Measurement and Response Data}
Let $\eta_M$ collect the parameters used in \eqref{eq:geo-exposure} and \eqref{eq:paid-exposure}, let $\eta_T$ collect the carryover and Hill parameters, and let $\vartheta$ collect the response parameters. We write $\mathcal J_M(\eta_M;\calD_M)$ for a loss based on the data used to construct the two inputs and $\mathcal J_Y(\vartheta,\eta_M,\eta_T;\calD_Y)$ for a loss based on the response data. If $\calD_E$ contains a randomized estimate of the same treatment effect, its contribution is $\mathcal J_E(\vartheta,\eta_M,\eta_T;\calD_E)$; otherwise, we set $\mathcal J_E=0$. A general estimator minimizes
\begin{align}
\mathcal J(\eta_M,\eta_T,\vartheta)
&=
\mathcal J_M
+
\mathcal J_Y
+
\mathcal J_E
+
\mathcal P_M(\eta_M)
+
\mathcal P_T(\eta_T)
+
\mathcal P_Y(\vartheta),
\label{eq:general-criterion}
\end{align}
where the penalty terms may be zero. With negative log likelihoods and no penalties, this criterion gives maximum likelihood estimation. Nonzero penalties permit regularization.

The criterion covers several ways of using the data. A plug-in method estimates $\eta_M$ from $\calD_M$, substitutes that estimate into the response model, and either fixes $\eta_T$ or estimates it from $\calD_Y$. A joint likelihood estimates $\eta_M$, $\eta_T$, and $\vartheta$ together. Resampling can be used with either procedure to assess uncertainty. A Bayesian version makes both the treatment of uncertainty and the direction of updating explicit.

\subsection{Bayesian Computation}
Minimizing the criterion with negative log likelihoods gives point estimates. In the Bayesian analysis, the cut posterior leaves the distribution of the measurement parameters determined by $\calD_M$, whereas the joint posterior allows $\calD_Y$ to update it. Both procedures average over uncertainty in the constructed inputs. The hierarchical occurrence and notice models are fitted in noncentered coordinates. Their posterior modes and analytic Hessians define a Laplace approximation $q_L(\eta_M\mid\calD_M)$ to the posterior based only on $\calD_M$. Sampling the transformation parameters from their prior gives
\begin{align}
q(\eta\mid\calD_M)
&=
q_L(\eta_M\mid\calD_M)p(\eta_T),
\qquad
\eta=(\eta_M,\eta_T).
\label{eq:measurement-distribution}
\end{align}
Let $\beta=(\beta_D,\beta_G,\beta_P,\beta_{GP})^{\mathsf T}$. After replacing the exact posterior for $\eta_M$ by the Laplace approximation, the joint posterior is
\begin{align}
\widetilde p(\eta,\beta,\sigma_Y^2\mid\calD_M,\calD_Y)
&\propto
p(\calD_Y\mid\eta,\beta,\sigma_Y^2)
p(\beta,\sigma_Y^2)
q(\eta\mid\calD_M).
\label{eq:joint-posterior}
\end{align}
For each $k\in\{1,\ldots,K\}$, we sample $\eta_k\sim q(\eta\mid\calD_M)$ and construct the corresponding sequences of media inputs. Conditional on $\eta_k$, the response parameters are integrated under the normal inverse-gamma model with the sign restrictions in Appendix~\ref{sec:response-posterior}. Let $m_k$ be the numerical approximation to $p(\calD_Y\mid\eta_k)$ obtained with a Gaussian approximation to the posterior probability of the sign restrictions. The joint calculation assigns weight
\begin{align}
w_k
&=
\frac{m_k}{\sum_{h=1}^{K}m_h}.
\label{eq:importance-weights}
\end{align}
After selecting $\eta_k$ according to these weights, we sample the response parameters from their conditional posterior and evaluate $\Delta_G$.

The two-stage procedure uses equal weights for the samples from \eqref{eq:measurement-distribution}, leaving both the Laplace approximation for $\eta_M$ and the prior for $\eta_T$ unchanged by $\calD_Y$. A cut posterior keeps only the first of these distributions fixed:
\begin{align}
p_{\mathrm{cut}}(\eta_M,\eta_T,\beta,\sigma_Y^2\mid\calD_M,\calD_Y)
&=
q_L(\eta_M\mid\calD_M)
p(\eta_T,\beta,\sigma_Y^2\mid\calD_Y,\eta_M).
\label{eq:cut-posterior}
\end{align}
Because the conditional posterior on the right is normalized separately for every $\eta_M$, integrating over $(\eta_T,\beta,\sigma_Y^2)$ leaves $q_L(\eta_M\mid\calD_M)$ unchanged \citep{Plummer2015cutsin,Jacob2017bettertogether}.

For computation, we use $M$ samples of the measurement parameters and a common set of $J$ samples from the transformation prior. Let $m_{mj}$ approximate $p(\calD_Y\mid\eta_{M,m},\eta_{T,j})$. On this common set, the cut and joint weights are
\begin{align}
w^{\mathrm{cut}}_{mj}
&=
\frac{1}{M}
\frac{m_{mj}}{\sum_{h=1}^{J}m_{mh}},
&
w^{\mathrm{joint}}_{mj}
&=
\frac{m_{mj}}
{\sum_{r=1}^{M}\sum_{h=1}^{J}m_{rh}}.
\label{eq:modular-weights}
\end{align}
The plug-in method that estimates the transformations uses the same $J$ samples at the point estimate of $\eta_M$, while the two-stage procedure assigns weight $1/(MJ)$ to every pair. For the independent samples in \eqref{eq:importance-weights}, we report
\begin{align}
\operatorname{ESS}
&=
\left(\sum_{k=1}^{K}w_k^2\right)^{-1}
\label{eq:ess}
\end{align}
as a numerical diagnostic. A small effective sample size means that a few samples carry most of the weight. For the common set of samples, Appendix~\ref{sec:modular-diagnostics} reports the effective sample size within each measurement sample and the effective sample size of the joint weights across measurement samples.

Figure~\ref{fig:framework} summarizes the order of the calculations.
\begin{figure}[!htbp]
\centering
\includegraphics[width=\textwidth,draft=false]{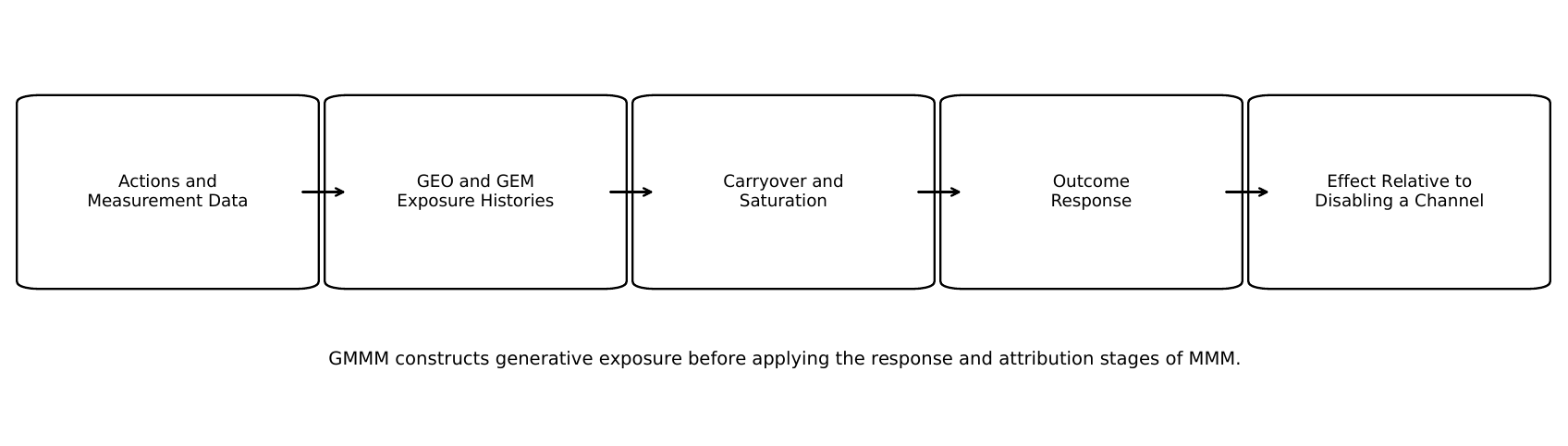}
\caption{GMMM constructs the expected counts associated with GEO and GEM before applying carryover, the Hill transformation, and the response model. The plug-in, cut, and joint procedures differ in whether they average over uncertainty in the measurement parameters and whether the response data update those parameters.}
\label{fig:framework}
\end{figure}

\subsection{Randomized Estimate of the GEO Treatment Effect}
A randomized experiment can inform GMMM when it estimates the same treatment effect for a population, response definition, and evaluation window that can be related to the GMMM target. Suppose that the experiment reports $\widehat\tau_E$ with standard error $s_E$, and let $\tau_E(\vartheta,\eta)$ be the corresponding effect implied by the GMMM parameters. We use
\begin{align}
\widehat\tau_E\mid\vartheta,\eta
&\sim
\mathcal N\left(
\tau_E(\vartheta,\eta),
s_E^2+\sigma_{\mathrm{tr}}^2
\right),
\label{eq:randomized-estimate-likelihood}
\end{align}
where $\sigma_{\mathrm{tr}}\geq0$ is fixed before estimation and represents residual differences between the experimental population and the target population after the measured design variables have been aligned. This likelihood contributes $\mathcal J_E$ to \eqref{eq:general-criterion}. In the Bayesian analysis, it changes the weights and the conditional posterior for the response coefficients. For every parameter sample, the experimental effect is computed from the complete treatment and control sequences before the media transformations are applied.

The aligned simulation uses the average treatment effect of GEO, including the term associated with the source state. A second simulation adds a nonzero difference between the experimental and target populations to the same effect and thereby examines misspecification of the transport model. If an additive attribution across interacting channels is also required, Appendix~\ref{sec:interaction-allocation} gives a Shapley allocation as an additional summary.

\section{Identification of the Treatment Effect of GEO}
\label{sec:theory}
Recovering the treatment effect of GEO requires both statistical variation in the response model and observations that support the expected response under each treatment sequence. Statistical variation determines whether the response coefficients or the linear combination that defines the effect can be identified, while the observed market inputs and treatment assignments determine whether the comparison can be evaluated.

\subsection{Identification Within the Response Model}
If the two constructed inputs were observed directly, GMMM would differ from a standard MMM only through the additional term for the GEO source state. The following reduction states this relation before the rank condition for the response coefficients.

\begin{proposition}[Reduction to standard MMM]
Suppose that $\boldsymbol E^G_{i,1:T}$ and $\boldsymbol E^P_{i,1:T}$ are observed for every $i\in\{1,\ldots,n\}$. If $\beta_D=0$, then \eqref{eq:gmmm-response} is an instance of \eqref{eq:general-mmm} with channel set $\mathcal M_0\cup\{G,P\}$ and interaction set $\{(G,P)\}$. If $\beta_{GP}=0$ also holds, the response is additive on this enlarged channel set. Setting all four coefficients associated with GEO and GEM to zero gives the additive model for established media with $\mathcal I_0=\varnothing$.
\end{proposition}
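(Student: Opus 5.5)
The plan is to argue by direct term-by-term matching of \eqref{eq:gmmm-response} against the template \eqref{eq:general-mmm}. First, since $\boldsymbol E^G_{i,1:T}$ and $\boldsymbol E^P_{i,1:T}$ are observed, I will check that $H^G_{it}$ and $H^P_{it}$ arise from \eqref{eq:general-media-transform} with $m\in\{G,P\}$. Take $A_G$ and $A_P$ to be the normalized geometric carryover \eqref{eq:adstock} with parameters $\alpha_G,\alpha_P$, and take $h_G$ and $h_P$ to be the Hill maps \eqref{eq:hill} with parameters $(\theta_G,\kappa_G)$ and $(\theta_P,\kappa_P)$. Then $H^G_{it}$ and $H^P_{it}$ are deterministic functions of the observed input sequences through period $t$, exactly as $H_{mit}$ is for $m\in\mathcal M_0$. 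The information set $\mathcal H_{it}$ is enlarged to contain these two sequences, so the conditional mean $\mu_{it}$ plays the role of $\mu^{\mathrm{MMM}}_{it}$ and the same family $\mathcal F(\cdot,\psi)$ and link $g_Y$ apply unchanged.

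Second, with $\beta_D=0$ the term $\beta_D Z_{it}$ vanishes. The right side of \eqref{eq:gmmm-response} then becomes
\begin{align*}
b(W_{it};\gamma)+\sum_{m\in\mathcal M_0\cup\{G,P\}}\beta_mH_{mit}+\beta_{GP}H_{Git}H_{Pit}.
\end{align*}
Here I write $H_{Git}=H^G_{it}$ and $H_{Pit}=H^P_{it}$. This is \eqref{eq:general-mmm} with channel set $\mathcal M_0\cup\{G,P\}$ and $\mathcal I=\{(G,P)\}$, a set of one distinct pair since $G\neq P$. If also $\beta_{GP}=0$, the interaction sum is empty, which is the additive case $\mathcal I=\varnothing$ on the enlarged set. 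Setting $\beta_D=\beta_G=\beta_P=\beta_{GP}=0$ removes every GEO and GEM term, leaving $b(W_{it};\gamma)+\sum_{m\in\mathcal M_0}\beta_mH_{mit}$. That is \eqref{eq:general-mmm} with $\mathcal I_0=\varnothing$.

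The only point needing care, and the main obstacle, is the first step. It must justify that $H^G$ and $H^P$ qualify as media transforms in the sense of \eqref{eq:general-media-transform}, which requires nonnegativity and observability of the inputs. Nonnegativity follows from \eqref{eq:geo-exposure} and \eqref{eq:paid-exposure}, because $N_{iqt}$, $\omega_{ipt}$, $\pi$, $\lambda$, $\mu^P$ and $\rho^P$ are all nonnegative. Observability is exactly the hypothesis of the proposition, so there is no dependence on the estimated measurement parameters $\eta_M$. I also need to note that the pre-period-$1$ values used by carryover are fixed histories, as assumed for established media.
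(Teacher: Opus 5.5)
Your proposal is correct and follows the same route as the paper. The observed input sequences determine $H^G$ and $H^P$ through \eqref{eq:general-media-transform}, and substituting them into \eqref{eq:gmmm-response} under each stated coefficient restriction gives the corresponding case of \eqref{eq:general-mmm}; your extra checks on nonnegativity of the constructed inputs and on the pre-period initialization are details the paper leaves implicit, and they do no harm.
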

\begin{proof}
The observed sequences determine $H^G$ and $H^P$ through \eqref{eq:general-media-transform}. Substitution into \eqref{eq:gmmm-response}, together with the stated restrictions on the coefficients, gives the corresponding cases of \eqref{eq:general-mmm}.
\end{proof}

GMMM adds the construction of $E^G$ and $E^P$ and the definition of their values under each treatment sequence. Once these quantities are fixed, identification of the response coefficients is a rank problem.

\begin{proposition}[Identification of response coefficients and linear effects]
\label{prop:response-rank}
Fix the sequences of media inputs and all parameters of the media transformations. Suppose that the response model has an identity link and $b(W;\gamma)=W\gamma$. Let $H_0$ be the matrix whose columns are the transformed inputs for established media and let $W_*=(W,H_0)$. Let $X_\eta$ contain the variable for source state, $H^G$, $H^P$, and $H^GH^P$. If $M_{W_*}$ is the orthogonal projection onto the complement of the column space of $W_*$, define
\begin{align}
D_\eta
&=M_{W_*}X_\eta,
&
G_\eta
&=X_\eta^{\mathsf T}M_{W_*}X_\eta.
\label{eq:response-rank}
\end{align}
Conditional on the fixed sequences and transformations, the four response coefficients are identified from the conditional mean if and only if $G_\eta$ is nonsingular. Under a Gaussian conditional response with nonsingular variance, the same condition is equivalent to identification by the likelihood. For a fixed vector $d\in\R^4$, the linear effect $d^{\mathsf T}\beta$ is identified if and only if $d^{\mathsf T}v=0$ for every $v\in\R^4$ such that $D_\eta v=0$.
\end{proposition}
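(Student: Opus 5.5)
With the transformations fixed, the conditional mean is linear in the parameters. I would write it in stacked form as
\begin{align*}
\mu=W\gamma+H_0\beta_0+X_\eta\beta=W_*\delta+X_\eta\beta,
\qquad
\delta=(\gamma^{\mathsf T},\beta_0^{\mathsf T})^{\mathsf T}.
\end{align*}
Identification of $\beta$ from the conditional mean means the following: if two parameter values $(\delta,\beta)$ and $(\delta',\beta')$ give the same $\mu$, then $\beta=\beta'$. By linearity, this is equivalent to the implication
\begin{align*}
W_*a+X_\eta v=0\ \Longrightarrow\ v=0,
\end{align*}
where $a=\delta-\delta'$ and $v=\beta-\beta'$. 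The nuisance coefficients $\delta$ need not themselves be identified, so $W_*$ may be rank deficient. The central step is to eliminate $a$ with the projection $M_{W_*}$.

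\emph{Step 1: reduce to $D_\eta$.} If $W_*a+X_\eta v=0$, applying $M_{W_*}$ gives $D_\eta v=0$. Conversely, if $D_\eta v=0$, then $X_\eta v$ lies in the column space of $W_*$. Hence $X_\eta v=-W_*a$ for some $a$, and both parameter values produce the same mean. So the set of unidentifiable differences in $\beta$ is exactly $\ker D_\eta$.

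\emph{Step 2: kernel of $D_\eta$ and the Gram matrix.} Since $M_{W_*}$ is symmetric and idempotent, $G_\eta=D_\eta^{\mathsf T}D_\eta$. Therefore $v^{\mathsf T}G_\eta v=\|D_\eta v\|^2$, which gives $\ker G_\eta=\ker D_\eta$. Consequently $\beta$ is identified if and only if $\ker D_\eta=\{0\}$, which holds if and only if $G_\eta$ is nonsingular.

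\emph{Step 3: Gaussian likelihood.} Under a Gaussian conditional response with nonsingular variance $\Sigma$, the distribution of $Y$ given the regressors is $\mathcal N(\mu,\Sigma)$. Distinct means give distinct distributions. If $\Sigma$ is also a free parameter, it is identified by the variance regardless of the mean. So two parameter values produce the same likelihood exactly when they produce the same $\mu$ and the same $\Sigma$, and likelihood identification of $\beta$ reduces to Steps 1--2.

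\emph{Step 4: linear effects.} The effect $d^{\mathsf T}\beta$ is identified if and only if it takes the same value at all observationally equivalent parameters. By Step 1, this means $d^{\mathsf T}v=0$ for every $v\in\ker D_\eta$, which is exactly the stated condition. It is equivalent to $d$ lying in the row space of $D_\eta$.

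The only delicate point is Step 1 when $W_*$ lacks full column rank. I would handle it by defining $M_{W_*}$ as the projection onto the orthogonal complement of $\operatorname{col}(W_*)$, which does not require inverting $W_*^{\mathsf T}W_*$. I would also state explicitly that the notion of identification used here concerns $\beta$ only, so that non-identifiability of $\delta$ does not affect the argument.
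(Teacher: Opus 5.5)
Your proposal is correct and follows essentially the same argument as the paper. Observational equivalence reduces to $X_\eta(\beta-\beta')\in\operatorname{col}(W_*)$, i.e.\ $D_\eta(\beta-\beta')=0$; the identity $G_\eta=D_\eta^{\mathsf T}D_\eta$ gives the rank equivalence, and the null-space condition gives the linear-effect result. You give more detail than the paper on the Gaussian-likelihood step, while the paper also notes that necessity survives the sign restrictions $\beta_G,\beta_P\geq0$ because that parameter space has nonempty interior.
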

\begin{proof}
Two coefficient vectors $\beta$ and $\beta'$ give the same conditional mean after the nuisance coefficients are adjusted if and only if $X_\eta(\beta-\beta')$ belongs to the column space of $W_*$. Equivalently, $D_\eta(\beta-\beta')=0$. The four coefficients are identified exactly when $D_\eta$ has full column rank, which is equivalent to nonsingularity of $G_\eta$. The linear effect is constant over observationally equivalent coefficient vectors exactly when it vanishes on the null space of $D_\eta$. The necessity statement remains valid under the sign restrictions used in the simulations because their parameter space has nonempty interior.
\end{proof}

The simulations omit established media, so $H_0$ has no columns and residualization with respect to $W$ is sufficient. In the general model, $H_0$ must be included. For example, if a transformed input for an established channel equals $H^G$, its coefficient cannot be separated from $\beta_G$ even when the four columns of $X_\eta$ are linearly independent after removing $W$ alone.

A linear effect may remain identified when its component coefficients are not. This distinction is especially relevant when the variable for source state and $H^G$ are nearly proportional after the other regressors have been removed. Proposition~\ref{prop:response-rank} states the exact null-space condition; finite-sample stability still depends on the degree of collinearity.

The proposition treats the transformations as fixed. Joint identification of $(\beta_G,\alpha_G,\theta_G,\kappa_G)$ additionally requires the observed input sequences to provide independent information about response amplitude, carryover, and the Hill curve. With all other parameters fixed, full column rank of the Jacobian of the residualized mean with respect to these four parameters is sufficient for local identification at an interior point of a continuously differentiable model. If other parameters are unknown, full column rank of the Jacobian with respect to the complete free parameter vector after removal of the fixed linear controls is sufficient. A step treatment whose untransformed GEO input has one level before treatment and another after treatment cannot separate response amplitude from the Hill parameters using steady-state observations alone. Identification then depends on transition dynamics and on variation within source states, including variation in question counts, system shares, or occurrence probabilities. A prior can stabilize estimation without identifying a likelihood that lacks this variation. Related ambiguities arise when nonlinear response and coefficients that change over time produce similar observed sequences but different allocation decisions \citep{Dew2024yourmmm}.

\subsection{Occurrence Probabilities and Market Counts}
The market input in \eqref{eq:geo-exposure} weights an occurrence probability by the number and composition of questions, the shares of use across systems, and the probability of notice. In a special case, this difference amounts only to a change of scale. Let $A(\pi)$ denote linear carryover applied to a sequence of exact occurrence probabilities, with the same initialization as \eqref{eq:adstock}.

\begin{proposition}[Constant rescaling of the GEO input]
\label{prop:exposure-scale}
Suppose that $E^G_{it}(z)=c_0\pi_{it}(z)$ for the same constant $c_0>0$ in every market, period, and evaluated source state, including values used to initialize carryover. For the Hill function $h(a;\theta,\kappa)=a^\kappa/(a^\kappa+\theta^\kappa)$, it holds that
\begin{align}
h(A(E^G);\theta,\kappa)
&=
h(c_0A(\pi);\theta,\kappa)
=
h(A(\pi);\theta/c_0,\kappa).
\label{eq:constant-scale-equivalence}
\end{align}
Rescaling $\theta$ preserves the transformed sequence and the treatment effect computed from it. A single common value of $\theta$ does not generally absorb scale factors that vary across markets or periods.
\end{proposition}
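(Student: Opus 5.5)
The plan is to verify the two equalities in \eqref{eq:constant-scale-equivalence} separately, then deduce the statement about the treatment effect, and finally exhibit a counterexample for the last sentence.

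\textbf{First equality.} Since $E^G_{it}(z)=c_0\pi_{it}(z)$ holds at every market, period and evaluated source state, including the initialization values, every term $E^G_{i,t-\ell}$ entering \eqref{eq:adstock} equals $c_0\pi_{i,t-\ell}$. Normalized geometric carryover is linear in the input sequence: the weights $\alpha^\ell/\sum_\ell\alpha^\ell$ and the fixed scale $s_G$ do not depend on the input. Hence $A(E^G)=c_0A(\pi)$ termwise. This is the only place where the hypothesis on the initialization values is needed; without it the pre-period-$1$ terms would break the proportionality.

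\textbf{Second equality.} For $a\ge0$,
\[
h(c_0a;\theta,\kappa)=\frac{c_0^\kappa a^\kappa}{c_0^\kappa a^\kappa+\theta^\kappa}=\frac{a^\kappa}{a^\kappa+(\theta/c_0)^\kappa}=h(a;\theta/c_0,\kappa).
\]
The middle step divides numerator and denominator by $c_0^\kappa>0$, and $\theta/c_0>0$ stays in the parameter space.

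\textbf{Treatment effect.} Apply the identity under both GEO sequences $a_G\in\{0,1\}$. Because the hypothesis covers every evaluated source state, the transformed sequences $H^G$ under $(\theta,\kappa)$ with input $E^G$ coincide with those under $(\theta/c_0,\kappa)$ with input $\pi$. The response mean \eqref{eq:gmmm-response} depends on the GEO input only through $H^G$, and $Z_{it}$, $H^P$ and the remaining terms are unchanged. Therefore $V(a_G,a_P)$ and $\Delta_G$ from \eqref{eq:geo-treatment-effect} agree under the two parameterizations.

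\textbf{Non-absorption.} For the final sentence, I would give a two-market example with $c_{0}$ replaced by $c_1\ne c_2$, no carryover ($L=0$), and $\pi$ taking a common value $a>0$. A single $\theta'$ would need $h(c_1a;\theta,\kappa)=h(a;\theta',\kappa)$ and $h(c_2a;\theta,\kappa)=h(a;\theta',\kappa)$. Since $h(\cdot;\theta,\kappa)$ is strictly increasing on $(0,\infty)$ and $c_1a\neq c_2a$, these two requirements cannot both hold. The identical argument works across periods.

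The main obstacle is phrasing this last sentence precisely. "Does not generally absorb" is a non-universality claim, so a single counterexample suffices; I would state it as such rather than attempt a characterization.
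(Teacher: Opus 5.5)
Your proof is correct and follows the paper's argument. Linearity of carryover gives $A(E^G)=c_0A(\pi)$. Dividing numerator and denominator by $c_0^\kappa$ gives the rescaled midpoint $\theta/c_0$. The two-market example with a common positive probability sequence and $c_1\neq c_2$ rests on strict monotonicity of $h$, exactly as in the paper.

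Two of your steps spell out what the paper's proof leaves implicit. You check explicitly that $\Delta_G$ depends on the GEO input only through $H^G$. Your choice $L=0$ also makes the counterexample cover scale factors that vary across periods, not just across markets.
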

\begin{proof}
Linearity gives $A(c_0\pi)=c_0A(\pi)$. Dividing the numerator and denominator of $h(c_0a;\theta,\kappa)$ by $c_0^\kappa$ proves \eqref{eq:constant-scale-equivalence}. For the final statement, consider two markets with the same positive occurrence-probability sequence and different constants $c_1$ and $c_2$. The transformed market counts differ because $h$ is strictly increasing, whereas a common transformation of the identical probability sequences is the same. One common value of $\theta$ cannot represent both.
\end{proof}

The same equivalence holds within each market if its scale factor $c_i$ is constant over time and the model permits a market-specific parameter $\theta_i$, rescaled to $\theta_i/c_i$. A Bayesian analysis must transform the prior and its support consistently. Under the common-scale condition in Proposition~\ref{prop:exposure-scale}, a common multiplicative level in question volume or notice probability is absorbed by the Hill midpoint. Question volume, system shares, and notice probabilities affect the treatment effect when their relative values vary across markets, periods, systems, question clusters, or source states. When that variation changes the proportionality between occurrence probabilities and expected counts, one occurrence-probability sequence cannot represent every market input. A finite collection of answers adds a separate source of uncertainty about the probabilities. Appendix~\ref{sec:uncertainty-integration} examines both issues in a static linear model and under a nonlinear transformation.

\subsection{Identification across Treatment Sequences}
The rank condition concerns the response model after the inputs have been fixed. Identification of $\Delta_G$ also requires the observed data to determine the expected response under each complete treatment sequence. For every $i\in\{1,\ldots,n\}$ and $t\in\{1,\ldots,T\}$, let $\mathcal H^-_{it}$ contain the variables observed immediately before the period-$t$ GEO and GEM treatments. It includes earlier responses, prior media inputs, earlier treatments, and any aggregate variables used to represent interference. Let $\mathcal H_{it}$ add the period-$t$ treatments and the values of $E^G_{it}$ and $E^P_{it}$ implied by them. For a pair $(a_G,a_P)$ of complete treatment sequences, let $F_{it}^{a_G,a_P}$ be the distribution of $\mathcal H_{it}$ obtained by recursively replacing the observed treatment assignment with those sequences. Finally, define $m_{it}(h)=\E[Y_{it}\mid\mathcal H_{it}=h]$. The following assumptions place the standard longitudinal g-formula on these GMMM treatment sequences.

\begin{assumption}[Consistency]
If the observed GEO and GEM treatments through period $t$ equal the values specified by $(a_G,a_P)$, then the observed variables through period $t$ equal their potential values under those treatment sequences.
\end{assumption}

\begin{assumption}[Sequential exchangeability]
Conditional on $\mathcal H^-_{it}$, the period-$t$ GEO and GEM treatments are independent of future potential information sets and potential responses under the treatment sequences evaluated in \eqref{eq:business-value}.
\end{assumption}

\begin{assumption}[Positivity]
For every value of $\mathcal H^-_{it}$ in the target support, each discrete treatment specified by the evaluated sequences has positive conditional probability. A positive continuous GEM spending level lies in the interior of its conditional support and has positive density in a neighborhood of that level. A sequence that sets spending to zero requires positive conditional mass at zero when the absence of a campaign is represented by a separate point mass.
\end{assumption}

\begin{assumption}[Observed-data laws in the target population]
\label{ass:measurement-population}
The measurement and response data identify $m_{it}$ and the conditional laws used to construct $F_{it}^{a_G,a_P}$ on the support of the evaluated treatment sequences. When the response model uses the expected counts in \eqref{eq:geo-exposure} and \eqref{eq:paid-exposure}, the measurement parameters and market inputs identify those counts for the target population. Identification of $m_{it}$ on the same support must also be established. If $\mathcal H_{it}$ instead contains unobserved realized impressions, their joint law with the observed variables and $Y_{it}$ must also be identified; a marginal distribution of impressions is insufficient.
\end{assumption}

\begin{assumption}[Interference through specified aggregates]
Spillovers within an observational unit are included in that unit's treatments and response. Treatments assigned to other units may affect its potential response or generative inputs only through aggregate variables included in $\mathcal H^-_{it}$. The distribution $F_{it}^{a_G,a_P}$ specifies how those aggregates evolve under the evaluated treatment sequences.
\end{assumption}

Under these conditions, the longitudinal g-formula applies to the complete GEO and GEM treatment sequences.

\begin{theorem}[Longitudinal g-formula for GMMM]
\label{thm:g-formula}
Under the preceding assumptions, the expected response in \eqref{eq:business-value} is identified by
\begin{align}
V(a_G,a_P)
&=
\sum_{i=1}^{n}
\sum_{t\in\calT_E}
\int
m_{it}(h)
\,dF_{it}^{a_G,a_P}(h).
\label{eq:g-formula}
\end{align}
The difference between this expression for $(a_G,a_P)=(1,1)$ and $(a_G,a_P)=(0,1)$ identifies $\Delta_G$.
\end{theorem}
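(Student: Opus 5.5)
The plan is the standard backward-induction argument for the longitudinal g-formula (Robins), applied one unit and one evaluation period at a time. Linearity of expectation and of the double sum in \eqref{eq:business-value} then gives \eqref{eq:g-formula}. So I fix $i$, $t\in\calT_E$, and a pair $(a_G,a_P)$, and aim to show
\begin{align*}
\E\left[Y_{it}(a_G,a_P)\right]=\int m_{it}(h)\,dF_{it}^{a_G,a_P}(h).
\end{align*}
Interference through specified aggregates lets me treat unit $i$ in isolation. Other units' treatments act only through aggregates in $\mathcal H^-_{is}$, and $F_{it}^{a_G,a_P}$ specifies how those aggregates evolve. So the potential outcome of unit $i$ is well defined as a function of its own sequences together with the specified aggregate law.

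The key steps, in order, are as follows.

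\textbf{Step 1 (last period).} Write $\E[Y_{it}(a_G,a_P)]$ as an iterated expectation over the potential history $\mathcal H^-_{it}(a_G,a_P)$. Sequential exchangeability at period $t$ lets me condition on the observed period-$t$ treatment equal to the value specified by the sequences. Positivity guarantees that this conditioning event has positive probability, or positive density for interior continuous GEM spending, or point mass at zero spending. Consistency then replaces potential variables by observed ones. The added period-$t$ components of $\mathcal H_{it}$, namely $E^G_{it}$ and $E^P_{it}$, are deterministic functions of the treatment and the measurement parameters through \eqref{eq:geo-exposure} and \eqref{eq:paid-exposure}. The inner expectation therefore becomes $m_{it}$ evaluated at $\mathcal H_{it}$.

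\textbf{Step 2 (induction backward in $s=t-1,\ldots,1$).} Define recursively $Q_s(h^-_s)$ as the integral of $Q_{s+1}$ against the observed conditional law of the next history given $h^-_s$ and the treatment fixed at the value specified for period $s$. At each stage, apply exchangeability, positivity, and consistency exactly as in Step 1. This shows that the conditional expectation of the potential outcome given the potential history equals $Q_s$. The initial history before period $1$ is the one assigned to the sequence, which is zero in the simulations. Composing the conditional laws yields precisely the recursive replacement that defines $F_{it}^{a_G,a_P}$, which gives the displayed identity.

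\textbf{Step 3 (identification of the ingredients).} By Assumption~\ref{ass:measurement-population}, $m_{it}$ and every conditional law entering $F_{it}^{a_G,a_P}$ are functionals of the observed-data distribution on the required support. This includes the expected counts $E^G_{it}(z)$ and $E^P_{it}$ under treatment values that were never observed, as well as the carried-over inputs that enter through $\mathcal H_{it}$. Hence the right side is identified. Summing over $i$ and $t$ gives \eqref{eq:g-formula}, and $\Delta_G$ follows by subtracting the $(0,1)$ expression from the $(1,1)$ expression.

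The main obstacle I expect is Step 1/2's handling of continuous GEM spending. There, conditioning on a measure-zero event requires choosing a version of the conditional expectation that is continuous, or at least well defined, at the specified level. I would use positive density in a neighborhood, as positivity provides, together with the interior-support condition to fix the version. I would also need care in checking that the deterministic constructed inputs coincide under consistency, since they depend on the treatment value rather than on realized impressions. If unobserved realized impressions enter $\mathcal H_{it}$, I would invoke the joint-law clause of Assumption~\ref{ass:measurement-population}.
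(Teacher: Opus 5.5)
Your proposal is correct and takes essentially the same route as the paper. The paper's proof combines consistency, sequential exchangeability, positivity, the observed-data-law assumption and interference through aggregates with iterated expectation, as in Robins's g-formula; your backward recursion spells out that iterated-expectation step.

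One caution on the continuous-spending issue you flag. Positive density near the evaluated level only makes a continuous version of the conditional mean unique if such a version exists. Its existence, and the use of consistency on a probability-zero event, must therefore come from a continuity condition or from Assumption~\ref{ass:measurement-population}, which is where the paper implicitly places it.
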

\begin{proof}
Consistency links observed variables to their potential values under the realized treatments. Sequential exchangeability permits the observed assignment mechanism to be replaced, conditional on $\mathcal H^-_{it}$, by the evaluated treatment sequences. Positivity ensures that the required conditional laws are defined on their support. The measurement assumption identifies the conditional laws and $m_{it}$, including their dependence on the two constructed inputs. The interference assumption makes the potential response for each unit well defined at the chosen aggregation. Iterated expectation yields the longitudinal g-formula in \eqref{eq:g-formula} \citep{Robins1987agraphical}.
\end{proof}

Assumption~\ref{ass:measurement-population} states the general observed-data requirement. The next corollary gives concrete sufficient conditions for the parametric GMMM used in the simulations.

\begin{corollary}[Parametric identification with fixed market inputs]
\label{cor:expected-exposure}
Consider \eqref{eq:business-value} conditional on measured market inputs, with every response input other than GEO fixed across the two GEO treatment sequences. Suppose that consistency, sequential exchangeability, positivity, and the interference assumption hold. Assume that $\eta_M$ is identified on the required support and that the occurrence probabilities apply to the target population. Suppose that the media transformations are fixed or identified and that the response model with an identity link is correctly specified. If $W_*$ has full column rank and $D_\eta$ in Proposition~\ref{prop:response-rank} has rank four, then $\Delta_G$ is identified. More generally, for a known vector $d_\eta$, the condition $d_\eta^{\mathsf T}v=0$ whenever $D_\eta v=0$ is sufficient to identify $d_\eta^{\mathsf T}\beta$ even when the four coefficients are not separately identified.
\end{corollary}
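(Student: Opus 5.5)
The plan is to reduce $\Delta_G$ to a known linear combination of the four response coefficients, and then invoke Proposition~\ref{prop:response-rank}.

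\emph{Step 1: the expected counts are identified.} Under the stated assumptions, Theorem~\ref{thm:g-formula} expresses $V(a_G,a_P)$ through $m_{it}$ integrated against $F_{it}^{a_G,a_P}$. By hypothesis, every non-GEO input is fixed across the two GEO sequences, and the analysis conditions on the measured market inputs $N_{iqt}$ and $\omega_{ipt}$. The distribution $F_{it}^{a_G,1}$ is then degenerate at a history determined by four ingredients: the GEO source-state sequence $Z_{it}(a_G)$; the counts $E^G_{it}(Z_{it}(a_G))$ from \eqref{eq:geo-exposure}; the common GEM count $E^P_{it}$ from \eqref{eq:paid-exposure}; and the common controls. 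Since $\eta_M$ is identified on the required support and the occurrence probabilities transport to the target population, both $E^G_{it}(z)$ and $E^P_{it}$ are identified. This holds for both source states, including the values used to initialize carryover.

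\emph{Step 2: $\Delta_G$ is a known linear combination of $\beta$.} The media transformations are fixed or identified, so $H^G_{it}(a_G)$ and $H^P_{it}$ are identified functions for each sequence. Under the correctly specified identity-link model \eqref{eq:gmmm-response}, the terms $W_{it}\gamma$, $\sum_m\beta_mH_{mit}$, and $\beta_PH^P_{it}$ are common to both sequences and cancel in $V(1,1)-V(0,1)$. This leaves
\begin{align*}
\Delta_G=d_\eta^{\mathsf T}\beta,
\qquad
d_\eta=\sum_{i}\sum_{t\in\calT_E}
\bigl(Z_{it}(1)-Z_{it}(0),\;\Delta H^G_{it},\;0,\;\Delta H^G_{it}H^P_{it}\bigr)^{\mathsf T},
\end{align*}
where $\Delta H^G_{it}=H^G_{it}(1)-H^G_{it}(0)$. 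The vector $d_\eta$ is known once the identified quantities from Step~1 are plugged in. Consistency and sequential exchangeability ensure that the observed conditional mean equals the structural mean at the histories where it is evaluated. Positivity ensures those histories lie in the support on which $m_{it}$ is identified.

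\emph{Step 3: apply the rank condition.} Full column rank of $W_*$ lets $\gamma$ and the established-media coefficients be treated as nuisance parameters eliminated by $M_{W_*}$. If $D_\eta$ has rank four, Proposition~\ref{prop:response-rank} identifies $\beta$, so $d_\eta^{\mathsf T}\beta=\Delta_G$ is identified. If instead only the null-space condition holds, the final clause of Proposition~\ref{prop:response-rank} gives identification of $d_\eta^{\mathsf T}\beta$ directly, and hence of $\Delta_G$.

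\emph{Main obstacle.} The difficult point is the support issue in Step~2. The proposition identifies $\beta$ from the observed regressor matrix, but $d_\eta$ is evaluated at the counterfactual inputs. Because the model is correctly specified and linear in $\beta$, the coefficients are global constants, so identifying them on the observed design suffices. I would state this explicitly, together with the positivity and transport hypotheses, so that using the model to evaluate the mean at the counterfactual histories is justified.
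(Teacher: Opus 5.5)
Your proposal is correct and follows essentially the same route as the paper. The identified measurement parameters and fixed market inputs determine $E^G$ and $E^P$ under both sequences, and the fixed or identified transformations give the regressors. Proposition~\ref{prop:response-rank} then identifies $\beta$ or $d_\eta^{\mathsf T}\beta$, and the causal assumptions together with the correctly specified model justify evaluating these quantities under both sequences. Your explicit $d_\eta$, which is the decomposition of Proposition~\ref{prop:decomposition}, makes the link from $\Delta_G$ to the linear effect more concrete than the paper's own proof does.
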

\begin{proof}
The identified measurement parameters and fixed market inputs determine $E^G$ and $E^P$ under the two treatment sequences. Fixed or identified transformations then determine the corresponding regressors and their differences. Proposition~\ref{prop:response-rank} identifies either the four coefficients or the specified linear effect. The causal assumptions permit these identified quantities to be evaluated under both treatment sequences. Because the response model is written in terms of expected counts, these identified quantities suffice without modeling unobserved realized impressions.
\end{proof}

The corollary relies on occurrence probabilities that apply to the target population and on a correctly specified conditional response mean. Under a nonlinear transformation, a model for an expected count differs from a model for an unobserved realized count: for a random input $M$, $\E[h(M)\mid\calD_M]$ generally differs from $h(\E[M\mid\calD_M])$. Either specification also requires control of confounding between the media input and the response.

The general formula permits treatments to change variables observed later. The simulations condition on generated question counts, shares of system use, notice probabilities, and all response inputs other than GEO. For every parameter sample, the calculation replaces the complete GEO sequence before evaluating the response. The rollout dates are fixed by design, so the simulation obtains counterfactual values by evaluating the specified parametric response model under both sequences rather than by using nonparametric overlap for every cluster history.

A randomized rollout can supply treatment variation by design. An observational rollout instead requires pre-treatment controls and support for both treatment sequences. A variable such as referral traffic or brand search may transmit part of the effect of a current treatment to revenue or conversion. Fixing that variable does not recover the total effect and is not sufficient by itself to identify a controlled direct effect \citep{Acharya2016explainingcausal}. If the variable is affected by an earlier treatment and also influences a later treatment, it may belong to $\mathcal H^-_{it}$; the longitudinal g-formula then averages over its distribution under the treatment sequence \citep{Robins1987agraphical}.

\subsection{Decomposition under the Linear Response Model}
Under the linear response used in the simulations, $\Delta_G$ can be written as a component associated with the source state and a component due to the change in the transformed GEO input.

\begin{proposition}[Decomposition under the linear response model]
\label{prop:decomposition}
Suppose that $g_Y$ is the identity link and that every input to the response model other than the GEO source state and the GEO input is the same under the two treatment sequences. Assume that the terms below are integrable. Let $\boldsymbol Z$ denote the sequence of GEO source states selected by $a_G=1$, and let $\boldsymbol 0$ denote the sequence selected by $a_G=0$. Under \eqref{eq:gmmm-response}, it holds that
\begin{align}
\Delta_G
={}&
\beta_D\sum_{i=1}^{n}\sum_{t\in\calT_E}\E[Z_{it}]
+
\beta_G\sum_{i=1}^{n}\sum_{t\in\calT_E}
\E\left[H^G_{it}(\boldsymbol Z)-H^G_{it}(\boldsymbol 0)\right]
\nonumber\\
&+
\beta_{GP}\sum_{i=1}^{n}\sum_{t\in\calT_E}
\E\left[H^P_{it}
\left(H^G_{it}(\boldsymbol Z)-H^G_{it}(\boldsymbol 0)\right)\right].
\label{eq:decomposition}
\end{align}
\end{proposition}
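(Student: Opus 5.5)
The plan is to write $\Delta_G=V(1,1)-V(0,1)$ as a sum over $(i,t)\in\{1,\ldots,n\}\times\calT_E$ of the differences $\E[Y_{it}(1,1)]-\E[Y_{it}(0,1)]$. Each difference will then be expressed through the conditional mean in \eqref{eq:gmmm-response} evaluated under the two GEO sequences.

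\textbf{Step 1: reduce to conditional means.} Under the identity link, $Y_{it}(a_G,1)=\mu_{it}(a_G,1)+\varepsilon_{it}$ with $\E[\varepsilon_{it}\mid\mathcal H_{it}]=0$. The tower property then gives $\E[Y_{it}(a_G,1)]=\E[\mu_{it}(a_G,1)]$. The integrability assumption makes each expectation finite, so differences of sums can be split term by term.

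\textbf{Step 2: take the difference.} Insert the right side of \eqref{eq:gmmm-response} for both sequences. By hypothesis, $b(W_{it};\gamma)$, every $H_{mit}$ with $m\in\mathcal M_0$, and $H^P_{it}$ coincide under $a_G=1$ and $a_G=0$. Their contributions therefore cancel inside the difference. Because both contrasts retain the specified GEM sequence $a_P=1$, the same $H^P_{it}$ appears in both, which is essential for the interaction term. Three terms survive:
\begin{align*}
\beta_D\bigl(Z_{it}-0\bigr)+\beta_G\bigl(H^G_{it}(\boldsymbol Z)-H^G_{it}(\boldsymbol 0)\bigr)+\beta_{GP}H^P_{it}\bigl(H^G_{it}(\boldsymbol Z)-H^G_{it}(\boldsymbol 0)\bigr).
\end{align*}
Here the source state under $a_G=0$ is identically zero. $H^G_{it}(\cdot)$ denotes the carryover-and-Hill transform \eqref{eq:adstock}--\eqref{eq:hill} applied to the full input sequence $E^G_{i,1:t}(\cdot)$ built from \eqref{eq:geo-exposure} under the selected source-state sequence. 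The transform uses the same pre-period initialization for both sequences.

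\textbf{Step 3: aggregate.} Take expectations, using linearity with the fixed coefficients $\beta_D,\beta_G,\beta_{GP}$. Then sum over $i$ and $t\in\calT_E$; this yields \eqref{eq:decomposition}.

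The only delicate point is Step 1, namely justifying that the potential response's expectation equals the expectation of the structural mean evaluated at the counterfactual inputs. This holds if \eqref{eq:gmmm-response} is read as a structural model for potential responses. Otherwise, Theorem~\ref{thm:g-formula} provides it via $m_{it}=\mu_{it}$. Within that step, the point to check is that $H^P_{it}$ is not affected by the GEO sequence. Under the stated hypothesis it is not, so the interaction factors cleanly.
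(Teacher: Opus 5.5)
Your proposal is correct and follows the same route as the paper's proof. You subtract the two conditional means under \eqref{eq:gmmm-response}, cancel every term that involves neither $Z_{it}$ nor $H^G_{it}$ (the hypothesis guarantees that the controls, the established-media inputs, and $H^P_{it}$ are unchanged), and then take expectations and sum over $i$ and $\calT_E$. Your Step 1 simply makes explicit what the paper's one-line argument takes for granted: \eqref{eq:gmmm-response} is read as a model for the potential responses, so $\E[Y_{it}(a_G,1)]$ equals the expectation of the conditional mean evaluated at the counterfactual inputs.
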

\begin{proof}
Subtract the two conditional means. Every term that contains neither the GEO source state nor the GEO input cancels. The remaining terms are the term associated with the source state, the change in $H^G$, and the interaction between that change and $H^P$. Taking expectations and summing over $\calT_E$ gives \eqref{eq:decomposition}.
\end{proof}

The component due to the GEO input contains $(\beta_G+\beta_{GP}H^P_{it})(H^G_{it}(\boldsymbol Z)-H^G_{it}(\boldsymbol 0))$, so the interaction makes this component depend on the GEM input. Any other regressor changed by the GEO treatment would add another term.

Equation~\eqref{eq:decomposition} is an algebraic decomposition of the total GEO effect under the specified response model. We use it to study estimation error in the term for the source state and in the terms that contain the constructed GEO input. A causal mediation interpretation would require interventions that can vary the source state and the generated exposure separately, together with exchangeability and support for both interventions \citep{Imai2010ageneral,Pearl1995causaldiagrams}. Our estimand remains the causal effect of the complete source modification.

\section{Answer Collection and Simulation Evidence}
\label{sec:experiments}
The answer collection and the simulations serve different roles. The collection estimates occurrence probabilities under one observed source state. The simulations introduce changes in source state and generate the corresponding market responses, which permits evaluation of estimators of $\Delta_G$. Appendices~\ref{sec:response-collection-details} and~\ref{sec:additional-simulation-results} give the collection procedure and additional numerical results.

\subsection{Target Name in Generated Answers}
\label{sec:answer-collection}
We submit a fixed list of 56 questions that ask for product recommendations in seven use cases to GPT-5.6 Luna and GPT-4o. The list contains 28 English questions and 28 Japanese questions. Each API call uses the same system instruction and requires web search. Its user message contains only the requested language and question, and Glasp appears in neither message. After storing the complete answer, we record one if it contains a spelling of Glasp in a fixed dictionary and zero otherwise; the indicator records occurrence regardless of sentiment.

Each model answers every question 20 times, giving 2,240 answers. The requests are randomized within four blocks collected during one window, and Table~\ref{tab:model-occurrence} summarizes the resulting occurrence rates. Because the source state is unchanged throughout collection, these observations estimate probabilities for that state. Identification of the GEO treatment effect also uses variation in source state, market counts of the questions, shares of use across systems, notice probabilities, and contemporaneous response data.
\begin{table}[!htbp]
\centering
\caption{Occurrence of Glasp in the Collected Answers}
\label{tab:model-occurrence}
\footnotesize
\setlength{\tabcolsep}{3.8pt}
\begin{tabular}{lrrrcrr}
\toprule
Model & English & Japanese & Panel Rate & Posterior Mean (95\% Interval) & Mean Rank & Rank 1 (\%) \\
\midrule
GPT-4o & 35.4 & 20.2 & 27.8 & $28.8\ [27.0,30.7]$ & 1.62 & 60.5 \\
GPT-5.6 Luna & 28.6 & 38.9 & 33.8 & $34.5\ [32.8,36.3]$ & 1.94 & 39.4 \\
\bottomrule
\end{tabular}
\medskip
\begin{minipage}{0.94\textwidth}
\footnotesize
English, Japanese, and Panel Rate are percentages of answers that contain Glasp. Panel Rate averages the 56 observed rates with equal weights. Posterior means and intervals average the distributions for the 56 questions in \eqref{eq:occurrence-jeffreys}. Mean Rank is the rank at the first occurrence of Glasp among 11 candidate brands, conditional on its occurrence.
\end{minipage}
\end{table}

Glasp occurs in $33.8\%$ of the GPT-5.6 Luna answers and $27.8\%$ of the GPT-4o answers. Averaging the posterior distributions for the individual questions gives a difference of $5.70$ percentage points with a 95\% interval of $[3.12,8.27]$. The ordering reverses for English questions, where GPT-4o has the higher rate. Conditional on occurrence, GPT-4o also names Glasp first among the candidate brands more often. These results show why the recorded property and the weights assigned to the questions must match the business application. Appendix~\ref{sec:response-collection-details} reports results by language and use case.

\subsection{Simulation Design}
\label{sec:simulation-design}
The simulations compare estimators on the same panels and with the same construction of $E^G$ and $E^P$. Five geographic markets each contain six product clusters and are observed for 36 periods. Two clusters begin the GEO treatment in period 19 and two begin in period 23, while two never receive it. Every product cluster contains two question clusters observed on two generative systems. The resulting data contain 1,080 observations across units and periods but only six units of treatment assignment within each market. The response follows \eqref{eq:gmmm-response} with
\begin{align}
(\beta_D,\beta_G,\beta_P,\beta_{GP})
&=
(0.55,4.00,2.20,0.50).
\label{eq:true-coefficients}
\end{align}
In the controlled design, baseline occurrence probabilities follow the hierarchical logit model. A second design uses probabilities estimated from the collected answers and preserves the pairing of the two models for each question, as specified in \eqref{eq:paired-occurrence-probabilities}. The treatment effect of GEO is generated in both designs, and the evaluation window is $\calT_E=\{1,\ldots,36\}$. The simulations isolate uncertainty in the occurrence and notice models together with estimation of the media transformations: question counts and system shares are known, the demand variable is included among the controls, and every cell has 50 notice observations. A cell consists of one question cluster, one generative system, and one period. Established media channels are omitted. Appendix~\ref{sec:simulation-details} gives the complete data-generating process.

The principal comparisons use 120 paired replications in every condition. We evaluate the four response coefficients and $\Delta_G$ using root mean squared error (RMSE) and coverage of 95\% intervals. Appendix~\ref{sec:simulation-evaluation} defines the metrics and numerical settings. Paired intervals for differences between methods resample the 120 common replication indices 5,000 times. Differences between relative RMSE values are reported in percentage points. Appendix~\ref{sec:additional-simulation-results} gives additional results for one, five, and 20 answers per cell and for randomized estimates included in estimation.

\subsection{Estimation of Carryover and Saturation}
\label{sec:transformation-comparison}
Before comparing how uncertainty passes between the two models, we examine whether carryover and saturation are estimated or fixed. On the same 120 controlled panels with five answers per cell, one plug-in method fixes the carryover and Hill parameters at their prior means, while a second estimates them from the response data using the same candidates as the joint posterior. Three diagnostic benchmarks use the true transformation parameters. Table~\ref{tab:transformation-comparison} reports the results.
\begin{table}[!htbp]
\centering
\caption{Effect of Estimating the Transformation Parameters With Five Answers per Cell}
\label{tab:transformation-comparison}
\small
\begin{tabular}{llrrr}
\toprule
Estimator & Transformations & Effect RMSE (\%) & Vector RMSE & Coverage \\
\midrule
Plug-in & Prior mean & 17.642 & 1.207 & 0.975 \\
Plug-in & Estimated from response data & 17.525 & 0.945 & 0.975 \\
Two-stage & Prior distribution & 17.705 & 1.032 & 0.958 \\
Joint & Estimated from response data & 17.541 & 0.950 & 0.975 \\
Plug-in & Known & 17.135 & 0.882 & 0.950 \\
Two-stage & Known & 17.066 & 0.876 & 0.950 \\
Joint & Known & 17.191 & 0.884 & 0.958 \\
\bottomrule
\end{tabular}
\end{table}

Estimating the transformation parameters reduces the RMSE of the coefficient vector from $1.207$ to $0.945$ for the plug-in method. The corresponding value for the joint posterior is $0.950$. The 95\% paired interval for the joint value minus the plug-in value is $[-0.006,0.016]$, and the interval for the difference in effect RMSE also contains zero. Estimation of carryover and saturation explains the principal improvement in coefficient recovery over the plug-in method with fixed transformations. The designs based on the collected answers and the designs with misspecification give the same qualitative result (Appendix~\ref{sec:transformation-details}).

\subsection{Use of Response Data in the Measurement Model}
\label{sec:modular-comparison}
The cut posterior in \eqref{eq:cut-posterior} estimates the transformation parameters from the response data while keeping the distribution of $\eta_M$ fixed by $\calD_M$. Comparing it with the joint posterior isolates the effect of allowing $\calD_Y$ to revise $\eta_M$. We use $M=128$ samples of the measurement parameters, a common set of $J=700$ samples from the transformation prior, and 1,000 conditional samples of the response coefficients for each method. Appendix~\ref{sec:modular-diagnostics} describes the computation.

Let $J_A$ be the number of generated answers per cell. The controlled design combines $J_A\in\{1,5\}$ with standard deviations $\sigma_Y\in\{0.4,1.2\}$ for the response errors. A fifth condition uses occurrence probabilities estimated from the collected answers, with $J_A=5$ and $\sigma_Y=1.2$. The 120 replications in each condition give 600 panels. Methods use the same observations within a condition, and the controlled conditions use the same market inputs and standardized response innovations.
\begin{table}[!htbp]
\centering
\caption{Comparison on a Common Set of Transformation Draws}
\label{tab:modular-comparison}
\small
\begin{tabular}{lrrr}
\toprule
Method & Vector RMSE & Effect RMSE (\%) & Coverage \\
\midrule
\multicolumn{4}{l}{Controlled: $J_A=1$, $\sigma_Y=0.4$} \\
Plug-in (estimated) & 0.667 & 6.000 & 0.958 \\
Cut (estimated) & 0.687 & 6.010 & 0.950 \\
Joint & 0.657 & 6.046 & 0.950 \\
Two-stage (prior) & 0.918 & 7.075 & 0.975 \\
Oracle & 0.493 & 5.723 & 0.950 \\
\midrule
\multicolumn{4}{l}{Controlled: $J_A=1$, $\sigma_Y=1.2$} \\
Plug-in (estimated) & 0.975 & 17.450 & 0.967 \\
Cut (estimated) & 0.943 & 17.519 & 0.975 \\
Joint & 0.974 & 17.545 & 0.975 \\
Two-stage (prior) & 1.020 & 17.882 & 0.967 \\
Oracle & 0.886 & 17.236 & 0.950 \\
\midrule
\multicolumn{4}{l}{Controlled: $J_A=5$, $\sigma_Y=0.4$} \\
Plug-in (estimated) & 0.639 & 5.912 & 0.967 \\
Cut (estimated) & 0.642 & 5.930 & 0.967 \\
Joint & 0.636 & 5.923 & 0.975 \\
Two-stage (prior) & 0.897 & 7.070 & 0.958 \\
Oracle & 0.493 & 5.723 & 0.950 \\
\midrule
\multicolumn{4}{l}{Controlled: $J_A=5$, $\sigma_Y=1.2$} \\
Plug-in (estimated) & 0.956 & 17.513 & 0.975 \\
Cut (estimated) & 0.953 & 17.442 & 0.975 \\
Joint & 0.963 & 17.479 & 0.975 \\
Two-stage (prior) & 1.027 & 17.858 & 0.975 \\
Oracle & 0.886 & 17.236 & 0.950 \\
\midrule
\multicolumn{4}{l}{Estimated occurrence probabilities: $J_A=5$, $\sigma_Y=1.2$} \\
Plug-in (estimated) & 0.931 & 17.359 & 0.942 \\
Cut (estimated) & 0.912 & 17.326 & 0.942 \\
Joint & 0.939 & 17.480 & 0.950 \\
Two-stage (prior) & 0.974 & 18.099 & 0.950 \\
Oracle & 0.821 & 16.984 & 0.958 \\
\bottomrule
\end{tabular}
\end{table}
\noindent Each row uses 120 paired replications. Effect RMSE is the relative RMSE of $\Delta_G$. Coverage refers to its 95\% interval. The parenthetical labels indicate whether the transformation parameters are estimated from the response data or retained at their prior distribution. With 120 independent replications, coverage near $95\%$ has a binomial Monte Carlo standard error of about 2 percentage points.

A smaller standard deviation for the response errors substantially reduces effect RMSE, but the joint posterior has no uniform advantage over the cut posterior or the plug-in method that estimates the transformations. When $J_A=1$ and $\sigma_Y=0.4$, the joint posterior reduces coefficient RMSE relative to the cut posterior by $0.0300$; the 95\% paired interval for this reduction is $[0.0221,0.0383]$. The corresponding interval for the difference in effect RMSE contains zero. In the condition based on estimated occurrence probabilities, effect RMSE is $17.326\%$ for the cut posterior and $17.480\%$ for the joint posterior. Their difference is $0.154$ percentage points with an interval of $[0.046,0.259]$. The interval comparing the cut posterior with the plug-in method contains zero. Table~\ref{tab:modular-paired} reports all 15 paired comparisons.

To assess numerical sensitivity, we doubled both numbers of candidates for eight replications in each condition. The cut and joint estimates of $\Delta_G$ changed by at most $1.602\%$ of the true effect, although the effective sample size for the transformation candidates can approach one when the standard deviation of the response errors is small. Appendix~\ref{sec:modular-diagnostics} reports the full diagnostics and states which approximation errors remain when the candidate counts increase.

\subsection{Accuracy of the Components of the Treatment Effect}
\label{sec:effect-components}
The main simulation sets $\beta_D=0.55$ because a source modification can affect the response outside generated answers. The total effect therefore combines the term for the source state, the change in the transformed GEO input, and its interaction with GEM. Errors in the two components can partially cancel when the sum is estimated. Let $n_Z=\sum_{i=1}^{n}\sum_{t\in\calT_E}Z_{it}$ be the number of treated observations across units and periods. Define the component for the source state by $C_D=n_Z\beta_D$ and the component that contains the GEO input by $C_E=\Delta_G-C_D$. Every method estimates them as $\widehat C_D=n_Z\widehat\beta_D$ and $\widehat C_E=\widehat\Delta_G-\widehat C_D$, using its own estimate of $\beta_D$. The second component includes the interaction in Proposition~\ref{prop:decomposition} and is used here as a decomposition of the specified response model.

The treatment sequences give $n_Z=320$ and $C_D=176$ in every replication. With five answers per cell and $\sigma_Y=1.2$, the mean total effect is $266.944$ in the controlled design and $234.755$ in the design based on estimated occurrence probabilities. The corresponding mean values of $C_D/\Delta_G$ are $66.34\%$ and $75.38\%$. The term associated with the source state accounts for most of the simulated total effect even though its coefficient is unknown to the estimators.
\begin{table}[!htbp]
\centering
\caption{Accuracy of the Treatment Effect and Its Model Components}
\label{tab:effect-components}
\small
\begin{tabular}{lrrrr}
\toprule
Method & Total (\%) & Source State (\%) & GEO Input (\%) & Error Correlation \\
\midrule
\multicolumn{5}{l}{Controlled: $J_A=5$, $\sigma_Y=1.2$} \\
Plug-in (estimated) & 17.513 & 32.638 & 48.351 & -0.627 \\
Cut (estimated) & 17.442 & 32.526 & 47.756 & -0.626 \\
Joint & 17.479 & 32.646 & 48.366 & -0.631 \\
Two-stage (prior) & 17.858 & 36.426 & 47.860 & -0.677 \\
Oracle & 17.236 & 29.351 & 34.825 & -0.523 \\
\midrule
\multicolumn{5}{l}{Estimated occurrence probabilities: $J_A=5$, $\sigma_Y=1.2$} \\
Plug-in (estimated) & 17.359 & 27.551 & 47.129 & -0.555 \\
Cut (estimated) & 17.326 & 27.483 & 45.929 & -0.556 \\
Joint & 17.480 & 27.872 & 47.295 & -0.561 \\
Two-stage (prior) & 18.099 & 29.810 & 48.439 & -0.614 \\
Oracle & 16.984 & 24.470 & 31.468 & -0.409 \\
\bottomrule
\end{tabular}
\end{table}
\noindent The first three numerical columns report relative RMSE for each component. For a component $C$ and $R=120$ replications, the reported value is $100(R^{-1}\sum_{r=1}^{R}((\widehat C_r-C_r)/C_r)^2)^{1/2}$. The columns have different denominators. The final column is the correlation between estimation errors in the component associated with the source state and the component from the GEO input. Section~\ref{sec:modular-comparison} reports all five conditions.

In the condition based on estimated occurrence probabilities, the plug-in method has relative RMSE $17.359\%$ for the total effect and $47.129\%$ for the component from the GEO input. The latter value is $45.929\%$ for the cut posterior, $47.295\%$ for the joint posterior, and $31.468\%$ for the oracle. The negative correlations in Table~\ref{tab:effect-components} explain why the total can be more accurate than either component. If $e_D$ and $e_E$ denote their errors, the squared error of the total contains $2e_De_E$ in addition to the two squared component errors. Errors with opposite signs can cancel through this term. We retain $\Delta_G$ as the primary estimand and report the decomposition alongside the total effect.

\section{Discussion}
\label{sec:discussion}
GMMM separates construction of the two media inputs from the causal comparison. Generated answers determine occurrence probabilities; question counts, system shares, and notice probabilities put those probabilities on the scale of an MMM; complete treatment sequences then define the effects of GEO and GEM. Plug-in, cut, and joint procedures differ only in how they estimate the transformations and use uncertainty in the constructed inputs.

\subsection{Interpretation of the Results}
The GEO input describes expected noticed occurrence under a source state, and it may be positive both with and without the source modification. GEO changes this input through the occurrence probabilities. The estimand $\Delta_G$ compares the complete source-modification sequences, so it includes the path through the constructed input and the additional path represented by $\beta_DZ_{it}$. For GEM, the corresponding comparison changes the spending sequence that determines sponsored placements.

The collected answers also show that one overall occurrence rate is inadequate. The ordering of GPT-5.6 Luna and GPT-4o changes with language, and rates vary substantially across use cases. A feature used in an application should be chosen for the business question, and the probabilities for individual questions should receive the weights of the target population. Equal weights estimate the rate for the fixed panel used here. They also estimate a population rate when that population has the same distribution of questions.

Identification of $\beta_D$ and $\beta_G$ requires variation in the transformed GEO input beyond a proportional change in the variable for source state after the other regressors have been removed. When the two regressors move together, the condition on the null space in Proposition~\ref{prop:response-rank} determines whether $\Delta_G$ remains identified. The component results show why this distinction matters in finite samples: errors in the component associated with the source state and the component from the GEO input often have opposite signs and partially cancel in the total effect.

The design that assigns the treatment supplies its causal interpretation. A randomized rollout provides treatment variation by design. An observational rollout requires pre-treatment controls, support for both treatment sequences, and a contemporaneous comparison group when treated and untreated units experience common changes in the generative systems. Variables affected by an earlier treatment may also influence later assignment, in which case the g-formula in \eqref{eq:g-formula} averages over their distribution under each treatment sequence.

For estimation, the plug-in method that estimates carryover and saturation is a useful reference when the measurement distribution is concentrated. The cut posterior averages over measurement uncertainty while keeping the distribution of the measurement parameters fixed by the data used to construct the inputs. The joint posterior also uses the response likelihood to revise those parameters. We find no universal ranking: the joint posterior improves coefficient RMSE in one condition with a small standard deviation for the response errors, but it does not improve effect RMSE there and performs worse than the cut posterior in the condition based on estimated occurrence probabilities.

Numerical approximation raises a different question. Concentrated importance weights require checks of the number and placement of candidates, while a Markov chain would require checks of mixing and convergence. Increasing the number of candidates leaves the errors from the Laplace approximation for the measurement parameters and the Gaussian approximation to the posterior probability of the sign restrictions unchanged.

The simulations hold the GEO and GEM inputs fixed across estimators and omit established media channels, which isolates estimation of a common GMMM response model. A separate comparison of input construction would keep the response model and transformation estimation fixed while comparing a source indicator, occurrence probabilities, and expected counts under changes in question volume, system shares, and attention. A randomized estimate included in the likelihood informs the same treatment effect; an independently evaluated treatment would address predictive transfer to a different intervention.

\subsection{Alternative Response Models}
The construction of $E^G$ and $E^P$ precedes the choice of response distribution, which allows an application to use a link and distribution suited to sales, conversions, counts, or rates. Hierarchical coefficients can share information across related units, and coefficients that vary over time can represent changes in the relation between a media input and the response. Other carryover or saturation functions can replace \eqref{eq:adstock} and \eqref{eq:hill} without changing the definitions of the two expected counts or $\Delta_G$.

\subsection{Alternative Estimation Procedures}
The distinction between measurement and response parameters also applies outside the Bayesian formulation. A procedure based on likelihood can estimate all parameters together and use resampling for uncertainty, while a regularized procedure can penalize selected terms in \eqref{eq:general-criterion}. In the Bayesian analysis used here, the cut and joint posteriors differ in whether $\calD_Y$ updates $\eta_M$.

\subsection{Uncertainty in Market Inputs}
The simulations treat the question counts $N_{iqt}$ and shares $\omega_{ipt}$ as known. In an application, the counts may be estimated from a sample and the shares may change during the evaluation window. A probability model for either quantity can be added to $\eta_M$, after which its uncertainty passes through \eqref{eq:geo-exposure}, the media transformations, and the calculation of $\Delta_G$. This extension changes the construction of the input but not the response model or the definition of the treatment effect.

\section{Conclusion}
GMMM extends MMM to settings in which the media inputs associated with generative AI are not recorded directly. Under each source state, the GEO input is the expected number of generated answers in which a specified property occurs and is noticed; GEO changes this count but need not create it from zero. The GEM input is the expected number of sponsored placements that users notice. Both inputs are defined for each market and period before carryover and the Hill transformation are applied. The treatment effect of GEO is the total effect of the specified source modification under two complete treatment sequences.

The identification results allow the treatment effect to be determined in some designs even when its component coefficients cannot be recovered separately. They also identify the conditions under which a sequence of occurrence probabilities differs from a market count only by scale. In the simulations, estimating carryover and saturation explains most of the improvement in coefficient recovery over a plug-in method that fixes them. Allowing response data to revise the measurement parameters has no consistent benefit for estimating $\Delta_G$, and the total effect can conceal larger errors in its two model components.

The collected answers identify occurrence probabilities under the source state present during collection. Identification of the change caused by GEO uses observations under both source states together with contemporaneous market counts, notice observations, and business responses. GMMM combines these measurements and evaluates the causal effect from the complete sequence of media inputs.

\bibliography{arXiv2.bbl}

\bibliographystyle{tmlr}

\clearpage

\appendix

\section{Collection of Generated Answers and Simulation Inputs}
\label{sec:response-collection-details}
Section~\ref{sec:answer-collection} reports whether Glasp occurs in a fixed collection of generated answers. We specified the questions, model identifiers, language instructions, and search settings before collecting those answers.

\paragraph{Recorded indicator.}
An answer receives value one when it contains a spelling of Glasp in the target dictionary and zero otherwise. The dictionary is applied after the complete API record has been stored, so the target name is absent from the question and the instructions sent to the model. The indicator records occurrence regardless of whether the surrounding text endorses Glasp. The source state was constant during collection, so the answers identify occurrence probabilities only for that state.

\paragraph{Connection to the GEO input.}
The collected answers estimate occurrence probabilities under the information environment present during collection. Equation~\eqref{eq:geo-exposure} combines these probabilities with the number of relevant questions, the share handled by each generative system, and the probability of notice. Repeated calls characterize variation for each question and model, whereas the choice of questions determines the population described by their weighted average. Applying the estimated probabilities to a market requires weights for that market and generation settings that correspond to those used in the collection.

\subsection{Collection Design and Target Quantities}
The 56 questions ask for product recommendations in seven use cases, including web highlighting and summarization with AI. Each use case contains four questions in English and four in Japanese. The questions exclude the target name and its listed aliases. Each model receives the product question, the requested language, and a common system instruction that requires an independent recommendation and a web search before answering. The API call supplies a tool for web search and selects it through \texttt{tool\_choice}. Appendix~\ref{sec:reproducibility} gives the complete instruction and API settings.

For each question, we obtained 20 answers from GPT-5.6 Luna and 20 from GPT-4o. Four randomized blocks contained five repetitions for each combination of question and model, giving
\begin{align}
56\times2\times4\times5
&=2{,}240
\label{eq:answer-count}
\end{align}
API records. Collection ran from 05:58 UTC on September 3, 2026, to 04:40 UTC on September 4, 2026. The four blocks randomize request order within this single collection interval and do not represent distinct observation dates.

Let $R^{\mathrm{text}}_{qpbr}$ denote the complete answer for question $q\in\{1,\ldots,56\}$, model $p\in\{1,2\}$, block $b\in\{1,\ldots,4\}$, and repetition $r\in\{1,\ldots,5\}$. We define
\begin{align}
I_{qpbr}
&=
\mathbf 1\left(
\text{a listed spelling of Glasp occurs in }R^{\mathrm{text}}_{qpbr}
\right).
\label{eq:target-occurrence-indicator}
\end{align}
For the observed collection period and source state, $I_{qpbr}$ realizes the Bernoulli variable in \eqref{eq:occurrence-binomial}. The complete answer and the position of the first occurrence among the 11 candidate brands remain available for analyses of placement.

Let $t^\ast$ denote the collection period and $z^\ast$ the prevailing source state. The occurrence probability for question $q$ and model $p$ is $p_{qp}=\pi_{qpt^\ast}(z^\ast)$. Conditional on $p_{qp}$, the 20 calls are modeled as Bernoulli variables with a common probability during the collection interval. Let $J=20$ and $S_{qp}=\sum_{b=1}^{4}\sum_{r=1}^{5}I_{qpbr}$. The estimator for one question and model is $\widehat p_{qp}=S_{qp}/J$. Given weights $\omega_q\geq0$ such that $\sum_{q=1}^{56}\omega_q=1$, define the index for model $p$ by
\begin{align}
P_p
&=
\sum_{q=1}^{56}\omega_qp_{qp}.
\label{eq:panel-occurrence-rate}
\end{align}
The primary analysis sets $\omega_q=1/56$. The resulting index describes the fixed panel of questions; it equals a market occurrence rate only when the weights reproduce the market distribution of questions.

For each question and model, uncertainty from 20 answers is represented by
\begin{align}
p_{qp}\mid S_{qp}
&\sim
\operatorname{Beta}\left(
S_{qp}+\frac{1}{2},
J-S_{qp}+\frac{1}{2}
\right).
\label{eq:occurrence-jeffreys}
\end{align}
Independent samples from these distributions are averaged with the weights $\omega_q$ to obtain intervals for $P_p$ and differences between models. These intervals condition on the Bernoulli model and the fixed question weights. They exclude uncertainty from replacing the panel by a different population of questions.

All 2,240 requests returned complete answers, and no request or response identifier was duplicated. Each answer contains at least one completed call to the web search tool, giving 3,140 calls in total. GPT-5.6 Luna returned the requested model identifier, whereas GPT-4o returned the dated identifier \texttt{gpt-4o-2024-08-06}.

\subsection{Occurrence Results}
Table~\ref{tab:model-occurrence} reports the occurrence rates. Glasp occurs in 378 of 1,120 GPT-5.6 Luna answers and 311 of 1,120 GPT-4o answers, which gives raw panel rates of $33.8\%$ and $27.8\%$. Averaging the beta distributions for the 56 questions gives posterior means of $34.5\%$ and $28.8\%$. The posterior mean of the difference between the models is $5.70$ percentage points, with a 95\% posterior interval of $[3.12,8.27]$ percentage points and posterior probability above $0.999$ of being positive.

The aggregate rates conceal a reversal by language. GPT-4o exceeds GPT-5.6 Luna by $6.79$ percentage points for the English questions, whereas GPT-5.6 Luna exceeds GPT-4o by $18.75$ percentage points for the Japanese questions. The corresponding 95\% posterior intervals are $[2.90,10.02]$ and $[14.11,21.57]$ percentage points. Conditional on occurrence, Glasp is the first brand from the candidate list in $60.5\%$ of the GPT-4o answers and $39.4\%$ of the GPT-5.6 Luna answers. GPT-5.6 Luna has the higher overall rate but the lower frequency of first placement.

Figure~\ref{fig:occurrence-category-rates} shows variation across use cases. Both models have their highest occurrence rates for questions about web highlighting and their lowest rates for questions about knowledge management. The reversal by language is especially pronounced for summarization with AI and YouTube learning. A model with system-specific effects for the questions can represent this variation, whereas a model with one common occurrence probability cannot.
\begin{figure}[!htbp]
\centering
\includegraphics[width=0.96\textwidth,draft=false]{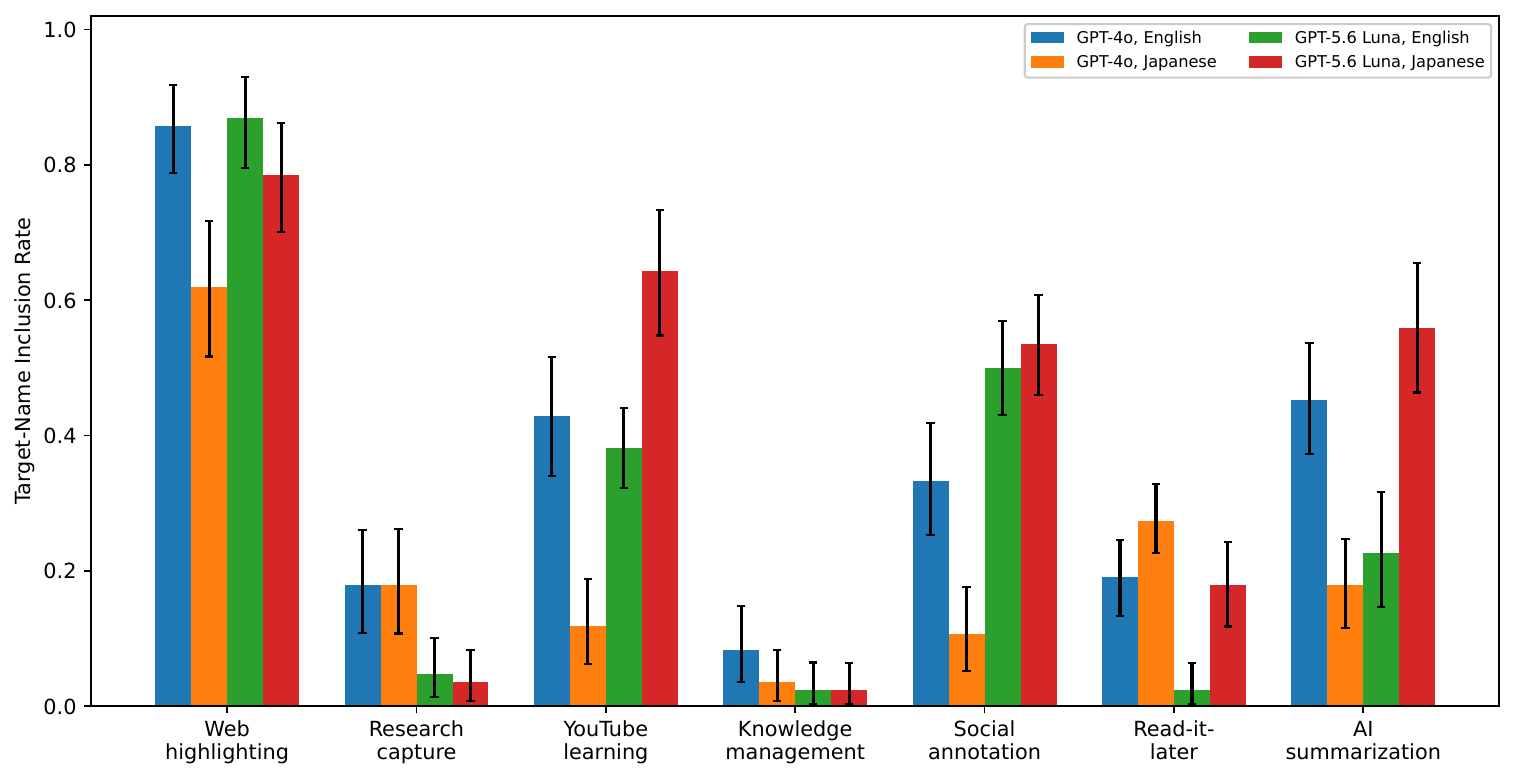}
\caption{Occurrence rates by use case and answer language. Bars show posterior means, and error bars show 95\% intervals obtained by averaging the distributions for the four questions in each cell.}
\label{fig:occurrence-category-rates}
\end{figure}
The ordering of the two models is the same in every collection block. GPT-4o rates range from $24.3\%$ to $30.0\%$, and GPT-5.6 Luna rates range from $31.4\%$ to $36.8\%$. Because the blocks cover different parts of one collection interval, comparisons across them assess stability during collection and do not estimate a time trend.

\subsection{Occurrence Probabilities Used in the Simulations}
The variation across questions supplies baseline probabilities for a second simulation design. Within each language, queries are sampled with replacement, and the same sampled query is used for both models. The treatment effect of GEO is generated by the simulation and does not equal the observed difference between the models.

Let $C_{jp}$ be the occurrence count for question $j$ and model $p$ among $J=20$ answers. In each replication, six English and six Japanese question indices are sampled with replacement within language. The selected index $s(q)$ is common to the two models. Conditional on that selection, the baseline probabilities are sampled independently across models from
\begin{align}
\pi^0_{qp}\mid s(q),\calD_M
&\sim
\operatorname{Beta}\left(
C_{s(q),p}+\tfrac12,
J-C_{s(q),p}+\tfrac12
\right).
\label{eq:paired-occurrence-probabilities}
\end{align}
The common question selection retains the observed pairing and language composition. Variation within each beta distribution represents uncertainty from the finite number of answers. The sampled probabilities determine the baseline log odds, after which the simulation adds the same time component and treatment shift in log odds as the controlled design. The simulation specifies the response coefficients and treatment shift independently of the collected answers.

The fitted occurrence model includes a system-specific centered question effect, permitting the difference in log odds between systems to vary across questions. The remaining models used to construct the inputs and the equation for the business response are unchanged.

\section{Additional Simulation Results}
\label{sec:additional-simulation-results}

The simulations keep the panel structure fixed while varying the number of generated answers, the treatment of the transformation parameters, and the relation between the constructed inputs and the business response. Because the treatment assignments are common across methods, the comparisons can isolate the source of each difference before examining numerical accuracy and alternative data-generating processes.

\subsection{Panel Design and Evaluation Criteria}
\label{sec:simulation-evaluation}

The controlled simulation observes five geographic markets for 36 periods, with $\calT_E=\{1,\ldots,36\}$. Each market contains six product clusters, every product cluster is linked to two question clusters, and every question cluster is observed on two generative systems. Two product clusters begin the GEO treatment in period 19 and two begin it in period 23; the remaining two never receive GEO. Occurrence of the target name follows the hierarchical logit model. The number of relevant questions varies with latent demand and seasonality, while system shares vary across markets. GEM spending depends on demand observed before treatment and on promotion, and a model for sponsored placements determines the number shown.

The design using estimated occurrence probabilities retains the same panel and treatment structure. Its baseline probabilities follow \eqref{eq:paired-occurrence-probabilities}, and the fitted measurement model includes interactions between question and system. The demand variable that affects the response is included among the controls in both simulations. The panel has 1,080 observations, while treatment is assigned at the level of six product clusters and shared across markets.

The response coefficients are given in \eqref{eq:true-coefficients}.

For each collection size of one, five, or 20 answers per cell, we run 120 paired replications. The market panel and the simulated randomized estimate are held fixed across collection sizes. Each fit uses 700 candidates from the proposal distribution and 1,000 samples from the conditional posterior for the response parameters defined in Section~\ref{sec:method}. The plug-in benchmark fixes the sequences of media inputs. The two-stage procedure gives equal weight to samples of the measurement parameters, whereas Joint GMMM reweights the same samples with the response data. The randomized estimate from the target population is the average treatment effect of GEO under the specified rollout relative to the complete sequence with GEO disabled. A second version adds $0.75$ to represent a difference between the experimental and target populations. The oracle observes the true sequences of media inputs and transformations.

We assess the response coefficients and $\Delta_G$ by root mean squared error (RMSE) and coverage of 95\% intervals. For $R$ replications, the RMSE of the coefficient vector is $(\sum_{r=1}^R\|\widehat\beta_r-\beta_0\|_2^2/(4R))^{1/2}$. Relative effect RMSE is $(R^{-1}\sum_{r=1}^R((\widehat\Delta_{G,r}-\Delta_{G,r})/\Delta_{G,r})^2)^{1/2}$, reported as a percentage. Paired bootstrap intervals use 5,000 resamples of the common replication indices and recompute both RMSEs. Differences between relative effect RMSEs are measured in percentage points. Effective sample size (ESS) describes concentration of the importance weights. Results with five answers per cell represent the central collection size, while the comparison that adds a randomized estimate uses 20 answers per cell to reduce uncertainty from the answer collection.

\subsection{Simulation Based on the Collected Answers}
\label{sec:estimated-occurrence-simulation}

Using the paired question probabilities in \eqref{eq:paired-occurrence-probabilities}, this design assigns GPT-5.6 Luna to the first system and GPT-4o to the second. Both models share the six selected query indices per language in each replication, while the fitted model with interactions between question and system allows their baseline log-odds differences to vary across queries.

With five simulated answers per cell, the RMSE of the coefficient vector is $0.942$ for Joint GMMM, $1.120$ for Plug-in GMMM, and $0.981$ for Two-stage GMMM (Table~\ref{tab:estimated-occurrence-simulation}). The paired difference for Joint minus Plug-in is $-0.178$, with a 95\% bootstrap interval of $[-0.261,-0.096]$. For Joint minus Two-stage, the difference is $-0.038$ and the interval is $[-0.096,0.021]$. The comparison with Plug-in GMMM supports lower coefficient error when the transformation parameters are estimated. For $\Delta_G$, the RMSEs are $17.576\%$, $18.316\%$, and $18.258\%$, and both paired intervals for Joint GMMM include zero. Section~\ref{sec:transformation-comparison} separates the role of estimating the transformation parameters from the role of averaging over measurement uncertainty.

\begin{table}[!htbp]
\centering
\caption{Simulation Based on Occurrence Probabilities Estimated From Collected Answers}
\label{tab:estimated-occurrence-simulation}
\footnotesize
\setlength{\tabcolsep}{3.8pt}
\begin{tabular}{lrrrrr}
\toprule
Method & Vector RMSE & Effect Bias (\%) & Effect RMSE (\%) & Coverage & Median ESS \\
\midrule
Plug-in GMMM & 1.120 & -0.966 & 18.316 & 0.950 & -- \\
Two-stage GMMM & 0.981 & -0.246 & 18.258 & 0.950 & -- \\
Joint GMMM & 0.942 & 0.380 & 17.576 & 0.958 & 91.2 \\
Joint + aligned estimate & 0.938 & 0.458 & 17.057 & 0.967 & 91.3 \\
Joint + shifted estimate & 0.942 & 3.813 & 17.804 & 0.950 & 91.4 \\
Oracle & 0.823 & 0.369 & 17.049 & 0.958 & -- \\
\bottomrule
\end{tabular}

\medskip
\begin{minipage}{0.94\textwidth}
\footnotesize
Effect Bias and Effect RMSE report the relative bias and relative RMSE of the treatment effect of GEO as percentages. Coverage is the empirical coverage of its 95\% interval. ESS is the effective sample size of the importance weights for the joint computations. ``Aligned estimate'' uses the randomized estimate from the target population, and ``shifted estimate'' adds $0.75$ to that estimate. Dashes mark methods whose weights are fixed by construction. Each row is based on 120 paired replications. A cell is defined by a question, a generative system, and a period.
\end{minipage}
\end{table}

Adding the randomized estimate from the target population lowers the RMSE of $\Delta_G$ from $17.576\%$ to $17.057\%$. The paired difference is $-0.519$ percentage points with a 95\% bootstrap interval of $[-0.862,-0.182]$. The RMSE of the coefficient vector changes only from $0.942$ to $0.938$, and the paired interval for that change includes zero. When $0.75$ is added to the randomized estimate, signed bias in $\Delta_G$ rises from $0.380\%$ to $3.813\%$. Its RMSE is $17.804\%$, and the interval for its paired difference from Joint GMMM is $[-0.387,0.826]$ percentage points.

Figure~\ref{fig:estimated-occurrence-rmse} traces coefficient error as the number of answers increases. More answers reduce uncertainty in the occurrence probabilities, while variation in the response and uncertainty in the transformation parameters remain. The fitted model allows the difference between systems to vary across questions. The controlled comparison below holds the baseline probability model fixed and isolates the effect of averaging over measurement uncertainty.

\begin{figure}[!htbp]
\centering
\includegraphics[width=0.82\textwidth,draft=false]{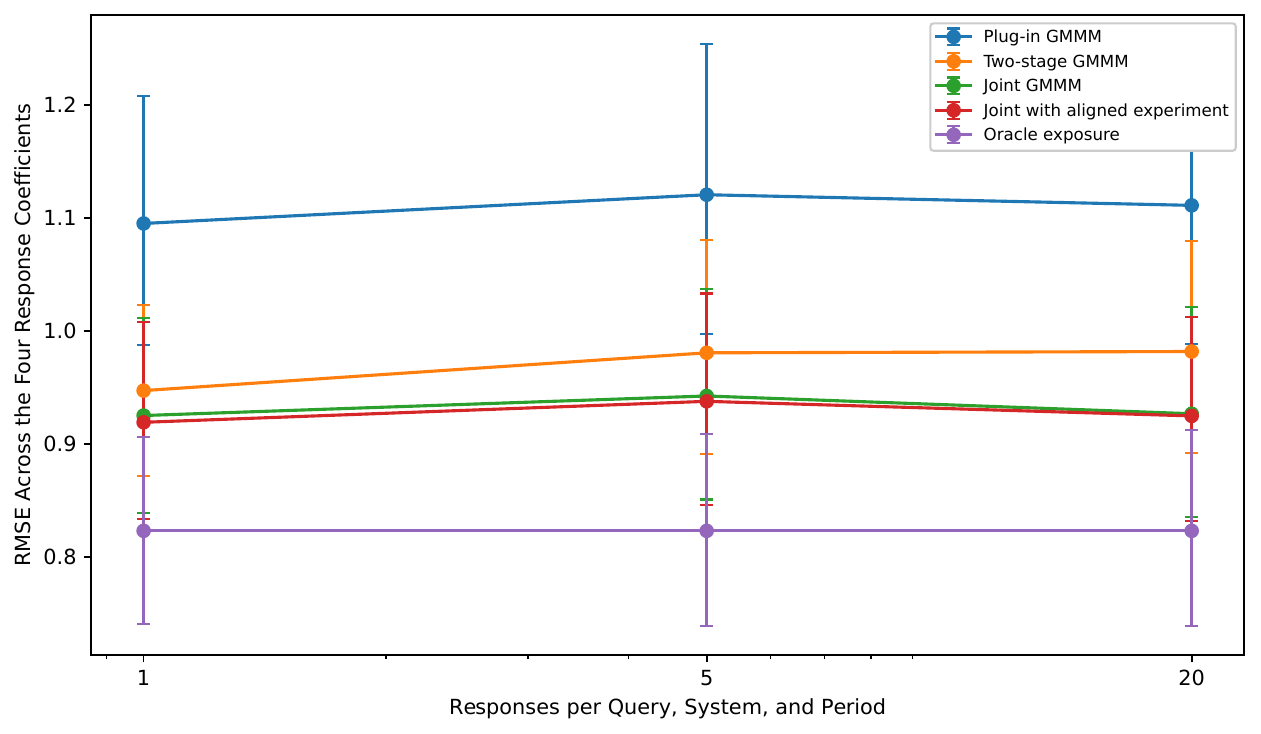}
\caption{Root mean squared error of the coefficient vector in the simulation using estimated occurrence probabilities. The baseline distributions of occurrence indicators are estimated from the collected generated answers. Lines connect ordered numbers of answers, and error bars are 95\% bootstrap intervals over replications.}
\label{fig:estimated-occurrence-rmse}
\end{figure}

\subsection{Controlled Simulation Results}

With five answers per cell, the RMSE of the coefficient vector is $0.950$ for Joint GMMM, $1.207$ for Plug-in GMMM, and $1.032$ for Two-stage GMMM (Table~\ref{tab:gmmm-main}). Joint GMMM also reduces the RMSE for the coefficient on the GEO input from $2.068$ to $1.558$ and for the coefficient on the GEM input from $0.754$ to $0.618$ relative to Plug-in GMMM. The oracle has vector RMSE $0.880$, so variation in the response remains important even when the sequences of media inputs and transformations are known.

\begin{table}[!htbp]
\centering
\caption{Root Mean Squared Error With Five Answers per Cell}
\label{tab:gmmm-main}
\footnotesize
\setlength{\tabcolsep}{3.7pt}
\begin{tabular}{lrrrrrr}
\toprule
Method & Source State & GEO & Sponsored & Interaction & Vector & Effect RMSE (\%) \\
\midrule
Plug-in GMMM & 0.216 & 2.068 & 0.754 & 0.965 & 1.207 & 17.642 \\
Two-stage GMMM & 0.201 & 1.704 & 0.722 & 0.893 & 1.032 & 17.705 \\
Joint GMMM & 0.183 & 1.558 & 0.618 & 0.876 & 0.950 & 17.541 \\
Joint + aligned estimate & 0.182 & 1.549 & 0.614 & 0.869 & 0.944 & 17.441 \\
Joint + shifted estimate & 0.186 & 1.559 & 0.622 & 0.884 & 0.953 & 17.932 \\
Oracle & 0.161 & 1.429 & 0.494 & 0.885 & 0.880 & 17.158 \\
\bottomrule
\end{tabular}
\medskip
\begin{minipage}{0.94\textwidth}
\footnotesize
``Aligned estimate'' uses the randomized estimate from the target population, and ``shifted estimate'' adds $0.75$ to that estimate.
\end{minipage}
\end{table}

Because every method uses the same simulated panel, the paired comparisons attribute differences to the estimators. Joint GMMM reduces the RMSE of the coefficient vector by $0.256$ relative to Plug-in GMMM and by $0.082$ relative to Two-stage GMMM. Both 95\% bootstrap intervals exclude zero. The relative RMSE of the treatment effect of GEO is similar across the three methods because errors in the individual coefficients can offset in the scalar estimand.

\begin{table}[!htbp]
\centering
\caption{Paired Root Mean Squared Error Differences With Five Answers per Cell}
\label{tab:paired-differences}
\small
\begin{tabular}{llrrr}
\toprule
Comparison & Metric & Difference & Lower & Upper \\
\midrule
Joint minus Plug-in GMMM & Vector RMSE & -0.2564 & -0.3322 & -0.1745 \\
Joint minus Plug-in GMMM & Effect RMSE (pp) & -0.1010 & -0.7087 & 0.4884 \\
Joint minus Two-stage GMMM & Vector RMSE & -0.0821 & -0.1403 & -0.0215 \\
Joint minus Two-stage GMMM & Effect RMSE (pp) & -0.1645 & -0.7435 & 0.3804 \\
\bottomrule
\end{tabular}
\end{table}

Figures~\ref{fig:coefficient-rmse} and~\ref{fig:effect-rmse} show the same pattern across numbers of answers: joint estimation has lower coefficient error than the plug-in estimator with fixed transformations, while their errors in the treatment effect of GEO remain close. This comparison changes both the treatment of measurement uncertainty and the estimation of the transformation parameters. For that reason, the comparison in Section~\ref{sec:transformation-comparison} varies these features one at a time.

\begin{figure}[!htbp]
\centering
\includegraphics[width=0.82\textwidth,draft=false]{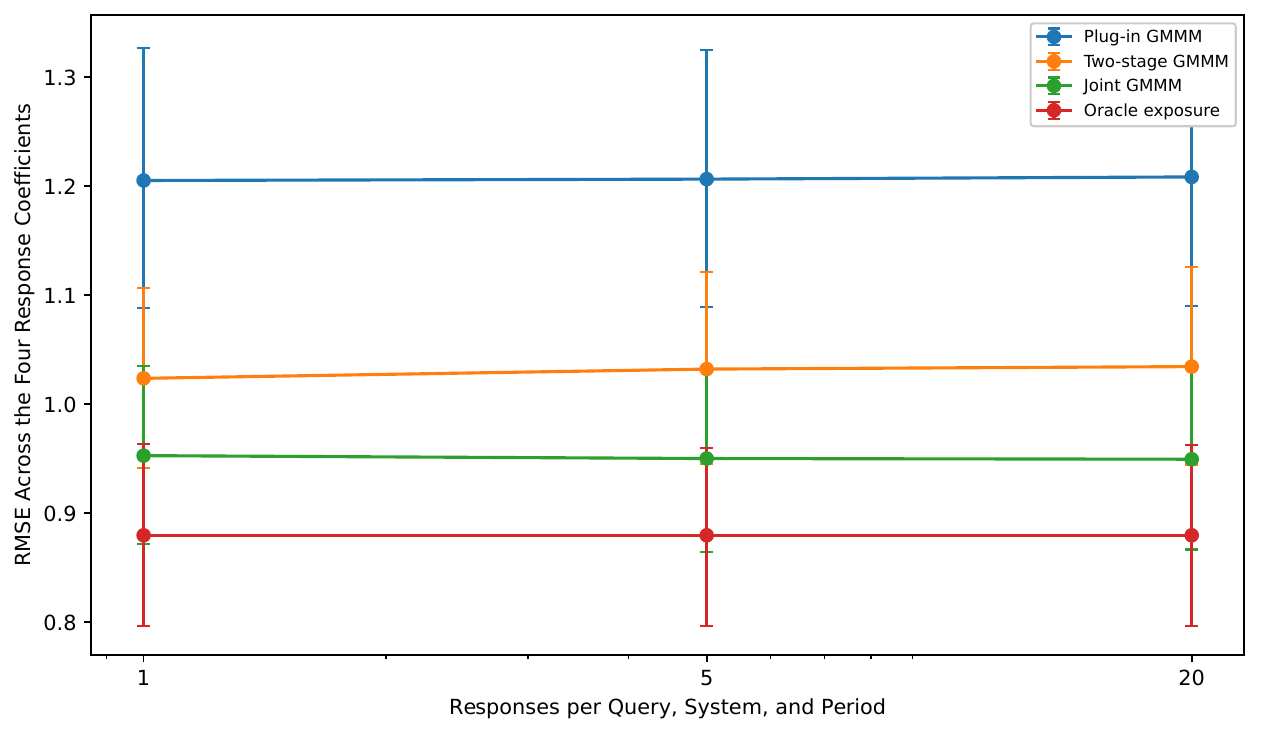}
\caption{Root mean squared error of the coefficient vector by answers per cell. Error bars are 95\% bootstrap intervals over replications.}
\label{fig:coefficient-rmse}
\end{figure}

\begin{figure}[!htbp]
\centering
\includegraphics[width=0.82\textwidth,draft=false]{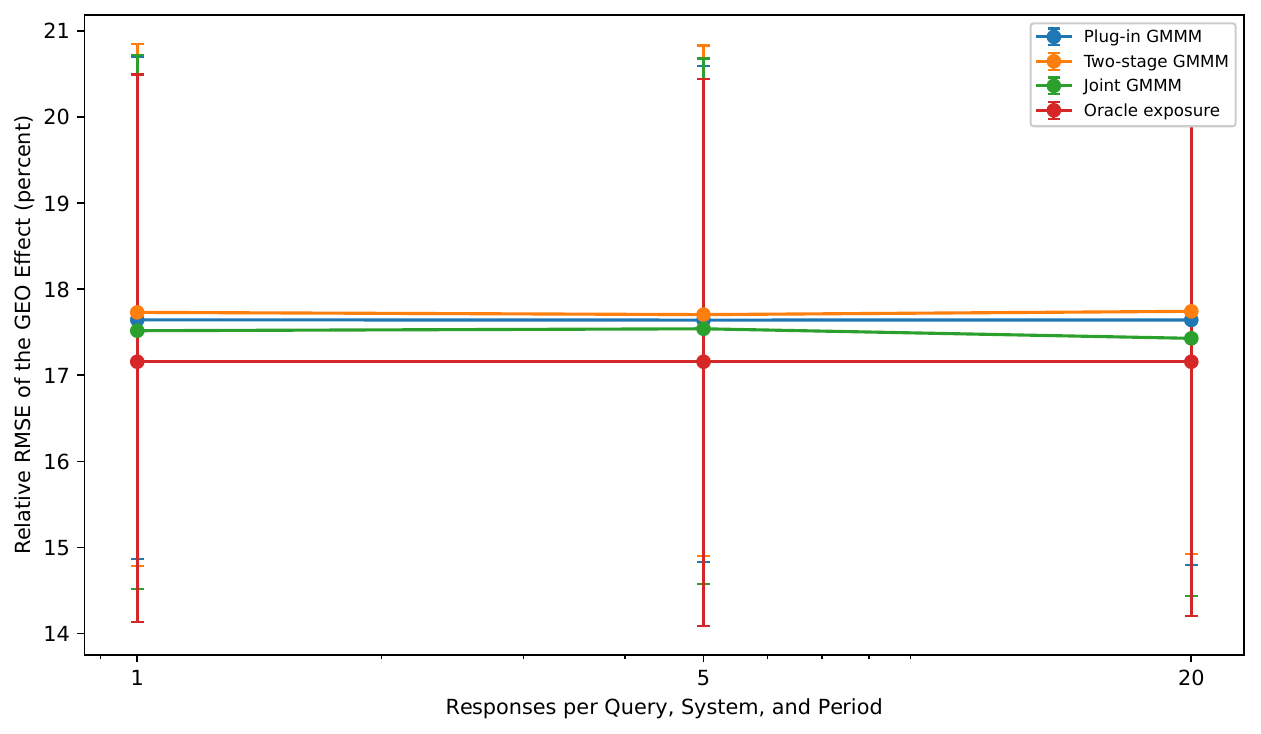}
\caption{Relative root mean squared error for the treatment effect of GEO by answers per cell. Error bars are 95\% bootstrap intervals over replications.}
\label{fig:effect-rmse}
\end{figure}

\subsection{Role of Estimating Carryover and Saturation}
\label{sec:transformation-details}

We use the same 120 controlled panels with five answers per cell to isolate the contribution of estimating the transformations. In the first comparison, the plug-in estimator uses the response data to estimate the parameters of the media transformations from the same candidates as Joint GMMM while its measurement parameters remain fixed. The second comparison supplies the true transformations to the plug-in, two-stage, and joint estimators, leaving only their treatment of measurement uncertainty to differ. This second comparison is diagnostic because the true transformations would be unavailable in an application. All comparisons retain 700 importance candidates and 1,000 posterior samples.

Estimating the transformation parameters while fixing the measurement inputs lowers vector RMSE from $1.207$ to $0.945$. The paired difference is $-0.262$ with a 95\% interval of $[-0.339,-0.182]$. Joint GMMM has vector RMSE $0.950$, only $0.005$ above the plug-in estimator with estimated transformations, and the interval for this difference is $[-0.006,0.016]$. Their relative RMSEs for $\Delta_G$ are $17.541\%$ and $17.525\%$, with a paired interval that contains zero. When the true transformation parameters are supplied, vector RMSE ranges from $0.876$ to $0.884$ and relative effect RMSE ranges from $17.066\%$ to $17.191\%$. The controlled design attributes most of the improvement over fixed transformations to their estimation. Averaging over measurement uncertainty provides no clear additional improvement in $\Delta_G$ in this design.

We also compare estimated transformations on 120 matched panels in each of three designs: the design using estimated occurrence probabilities, shifted transformations, and combined misspecification. Each panel uses five answers per cell, 700 importance candidates, and 1,000 samples from the conditional posterior for the response parameters. Plug-in and joint estimators receive the same transformation samples and panels generated with common random numbers, but the plug-in estimator fixes the measurement parameters at their fitted values. Table~\ref{tab:matched-transformations} reports the three methods needed to isolate the role of estimating the transformation parameters.

\begin{table}[!htbp]
\centering
\caption{Comparison After Estimating the Transformation Parameters}
\label{tab:matched-transformations}
\small
\begin{tabular}{llrrr}
\toprule
Design & Method & Vector RMSE & Effect RMSE (\%) & Coverage \\
\midrule
estimated occurrence probabilities & Plug-in (fixed) & 1.120 & 18.316 & 0.950 \\
 & Plug-in (estimated) & 0.929 & 17.417 & 0.958 \\
 & Joint (estimated) & 0.942 & 17.576 & 0.958 \\
\addlinespace
Shifted transformations & Plug-in (fixed) & 1.815 & 23.131 & 0.808 \\
 & Plug-in (estimated) & 0.952 & 19.135 & 0.908 \\
 & Joint (estimated) & 0.961 & 19.133 & 0.900 \\
\addlinespace
Combined misspecification & Plug-in (fixed) & 2.014 & 22.738 & 0.783 \\
 & Plug-in (estimated) & 1.071 & 18.048 & 0.917 \\
 & Joint (estimated) & 1.080 & 18.059 & 0.917 \\
\bottomrule
\end{tabular}
\end{table}

\noindent Vector RMSE measures recovery of the four response coefficients. Effect RMSE is relative to the true treatment effect of GEO, and coverage refers to its 95\% interval. ``Fixed'' uses the transformations fixed at their prior means, whereas ``estimated'' allows the response data to change the weights assigned to transformation samples.

In the design using estimated occurrence probabilities, vector RMSE is $0.929$ for the plug-in estimator with estimated transformations and $0.942$ for Joint GMMM. The paired difference for Joint minus Plug-in is $0.0133$, with a 95\% bootstrap interval of $[0.0008,0.0262]$. The corresponding relative RMSEs for $\Delta_G$ are $17.417\%$ and $17.576\%$; their difference is $0.159$ percentage points with an interval of $[-0.012,0.330]$. Estimating the transformations also brings the plug-in and joint results close under shifted transformations and combined misspecification. Table~\ref{tab:matched-paired} shows that all three intervals for the difference in effect RMSE contain zero. Across these designs, the improvement over the plug-in method with fixed transformations comes chiefly from estimating the transformations.

\subsection{Cut and Joint Posteriors With Common Candidates}
\label{sec:modular-results-details}

Because the two-stage benchmark retains the transformation prior, it cannot isolate the effect of allowing response data to revise the measurement parameters. We make that comparison by evaluating the cut posterior in \eqref{eq:cut-posterior} and the joint posterior on the same $M\times J$ combinations of measurement and transformation parameters. We also retain a plug-in estimator that estimates the transformations, the two-stage benchmark, and an oracle that observes the true sequences of media inputs and the true transformations. The computation crosses $M=128$ samples of the measurement parameters with $J=700$ common transformation samples and then takes 1,000 conditional coefficient samples for each method. This candidate set differs from the 700 independent pairs of measurement and transformation parameters used in the preceding tables, so the numerical levels can differ even when the market panels are the same.

Let $J_A$ denote the number of answers per cell. The controlled design crosses $J_A\in\{1,5\}$ with standard deviations $\sigma_Y\in\{0.4,1.2\}$ for the response errors. A fifth condition uses estimated occurrence probabilities with $J_A=5$ and $\sigma_Y=1.2$. With 120 replications per condition, this gives 600 primary panels. Controlled conditions share market inputs and standardized response innovations, and all methods within a condition use the same observations. Query volumes and shares of system use remain known, with 50 notice observations per cell. Holding those sources of information and the construction of $E^G$ and $E^P$ fixed lets us compare the estimators as the occurrence probabilities become more precise and the standard deviation of the response errors changes. Comparing alternative media inputs would instead require changing how $E^G$ and $E^P$ are constructed.

Table~\ref{tab:modular-comparison} reports every condition. When $\sigma_Y$ falls from $1.2$ to $0.4$, the oracle's relative effect RMSE falls from $17.236\%$ to $5.723\%$. Methods that estimate the transformation parameters have RMSE near $6\%$, while the two-stage benchmark remains near $7\%$. The concentration near $17\%$ in the original design reflects the standard deviation of the response errors. Even when that standard deviation is smaller, Joint GMMM does not uniformly improve estimation of $\Delta_G$.

\noindent Each row uses 120 paired replications. Effect RMSE is the relative RMSE of $\Delta_G$, expressed as a percentage, and Coverage is the empirical coverage of its 95\% interval. Cut and Joint estimate the transformation parameters at each measurement sample, while Plug-in estimates them at the measurement point estimate. Two-stage retains the transformation prior. With 120 independent replications, coverage near $95\%$ has a binomial Monte Carlo standard error of about 2 percentage points.

With one answer per cell and $\sigma_Y=0.4$, Joint GMMM lowers vector RMSE relative to the cut posterior by $0.0300$; the 95\% paired bootstrap interval for this reduction is $[0.0221,0.0383]$. The difference in relative effect RMSE is $0.035$ percentage points with an interval of $[-0.053,0.131]$. When $\sigma_Y=1.2$, the cut posterior has lower vector RMSE at both collection sizes. Allowing the response data to revise the measurement parameters can affect coefficient recovery without improving estimation of $\Delta_G$.

In the condition using estimated occurrence probabilities, effect RMSE is $17.359\%$ for Plug-in, $17.326\%$ for Cut, and $17.480\%$ for Joint. The difference for Joint minus Cut is $0.154$ percentage points, with an interval of $[0.046,0.259]$, while the interval for Cut minus Plug-in includes zero. Across the five conditions, none of the paired comparisons of effect RMSE favors the joint posterior over the cut posterior or the plug-in estimator with estimated transformations. Table~\ref{tab:modular-paired} reports all 15 comparisons, including the cases in which the joint posterior improves coefficient recovery.

These comparisons approximate the specified posterior distributions with a finite set of candidates. Doubling both candidate counts on eight replications selected in advance per condition changes estimates from the cut and joint posteriors by at most $1.602\%$ of the true effect. The effective sample size within a measurement sample can approach one, especially when the standard deviation of the response errors is small. Appendix~\ref{sec:modular-diagnostics} reports the checks in full. The paired intervals describe these computations; they do not determine the ranking under exact posterior integration.

\subsection{Numerical Accuracy}

Increasing the number of importance candidates from 700 to 1,400 changes root mean squared error of the coefficient vector from $0.950$ to $0.947$ and root mean squared error of the treatment effect of GEO from $17.541\%$ to $17.525\%$. Median effective sample size rises from $124.3$ to $251.0$. Across replications, the absolute change in the coefficient on the GEO input has median $0.087$ and 90th percentile $0.253$, while the median absolute change in $\Delta_G$ is $0.386\%$ of its true value.

\begin{table}[!htbp]
\centering
\caption{Sensitivity to the Number of Importance Candidates}
\label{tab:importance-candidate-sensitivity}
\small
\begin{tabular}{rrrrrrr}
\toprule
Importance Candidates & GEO & GEM & Vector & Effect RMSE (\%) & ESS & ESS $q_{0.10}$ \\
\midrule
700 & 1.558 & 0.618 & 0.950 & 17.541 & 124.266 & 40.394 \\
1400 & 1.556 & 0.613 & 0.947 & 17.525 & 250.993 & 84.884 \\
\bottomrule
\end{tabular}
\end{table}

The lower tail of ESS in Table~\ref{tab:importance-diagnostics} shows why we report weight concentration alongside the estimates. In the correctly specified design, the tenth percentile rises from $40.4$ with 700 candidates to $84.9$ with 1,400 candidates, whereas it is $8.6$ under combined misspecification with 700 candidates. Panels with low ESS remain in the reported distributions. Increasing the candidate count assesses the stability of importance sampling, while the hierarchical Laplace approximation and the numerical probabilities used for coefficient constraints must be assessed by other calculations. We have not compared these computations with exact posterior integration.

\begin{table}[!htbp]
\centering
\caption{Diagnostics for the Importance Weights}
\label{tab:importance-diagnostics}
\small
\begin{tabular}{lrrrr}
\toprule
Design & Median & $q_{0.10}$ & $q_{0.05}$ & Minimum \\
\midrule
Correctly specified, K=700 & 124.266 & 40.394 & 15.465 & 3.042 \\
Correctly specified, K=1400 & 250.993 & 84.884 & 36.550 & 8.176 \\
Combined misspecification, K=700 & 25.947 & 8.616 & 5.466 & 2.779 \\
\bottomrule
\end{tabular}
\end{table}

\subsection{Alternative Data-Generating Processes}

The remaining designs alter either the data used to construct the inputs or the response equation. Two add overdispersion to the occurrence indicators or the counts of sponsored placements; the others shift the transformation parameters, add a nonlinear term omitted from the fitted response model, or combine these changes. Each design uses 120 replications with five answers per cell, 700 importance candidates, and 1,000 samples from the conditional posterior for the response parameters. Table~\ref{tab:gmmm-robustness} reports relative RMSE for $\Delta_G$, vector RMSE, interval coverage, and weight concentration.

Adding overdispersion to the occurrence indicators or the counts of sponsored placements produces similar relative effect RMSE across the three estimators. Under shifted transformations, relative effect RMSE is $19.133\%$ for Joint GMMM, $23.131\%$ for Plug-in GMMM, and $22.550\%$ for Two-stage GMMM. When all departures are combined, the corresponding value for Joint GMMM is $18.059\%$ with bias $-4.683\%$ and coverage $91.7\%$. Plug-in GMMM has RMSE $22.738\%$ and bias $-14.835\%$, while Two-stage GMMM has RMSE $22.110\%$ and bias $-14.082\%$. The matched comparisons in Table~\ref{tab:matched-transformations} attribute most of these differences to estimation of the transformation parameters. Appendix~\ref{sec:robustness-details} reports the oracle, which isolates error from the response model after the media inputs are observed.

\begin{table}[!htbp]
\centering
\caption{Results Under Alternative Data-Generating Processes With Five Answers per Cell}
\label{tab:gmmm-robustness}
\small
\begin{tabular}{lrrrrrr}
\toprule
Design & Plug-in & Two-stage & Joint & Joint Vector RMSE & Coverage & ESS $q_{0.10}$ \\
\midrule
Occurrence overdispersion & 19.005 & 18.689 & 18.869 & 0.887 & 0.908 & 35.814 \\
Overdispersion in sponsored placements & 18.934 & 18.677 & 18.972 & 0.881 & 0.900 & 38.576 \\
Shifted transformations & 23.131 & 22.550 & 19.133 & 0.961 & 0.900 & 11.181 \\
Omitted nonlinear response & 18.145 & 17.912 & 17.959 & 0.951 & 0.917 & 30.843 \\
Combined misspecification & 22.738 & 22.110 & 18.059 & 1.080 & 0.917 & 8.616 \\
\bottomrule
\end{tabular}

\medskip
\begin{minipage}{0.94\textwidth}
\footnotesize
The first three method columns report the relative RMSE of the treatment effect of GEO in percent. Joint Vector RMSE, Coverage, and ESS $q_{0.10}$ report the RMSE of the coefficient vector, interval coverage, and the tenth percentile of the Joint GMMM effective sample size.
\end{minipage}
\end{table}

\subsection{Randomized Estimate of the GEO Treatment Effect}

With 20 answers per cell, adding the randomized estimate from the target population changes relative effect RMSE from $17.430\%$ to $17.424\%$ and vector RMSE from $0.950$ to $0.947$. When the randomized estimate is shifted, relative effect RMSE is $18.013\%$ and signed bias is $4.169\%$, compared with bias $1.243\%$ without that estimate. The randomized quantity is the average treatment effect of GEO, including the component associated with the source state. Appendix~\ref{sec:estimated-occurrence-simulation} reports a larger reduction in RMSE in the design based on estimated occurrence probabilities.

\begin{table}[!htbp]
\centering
\caption{Use of a Randomized Estimate With 20 Answers per Cell}
\label{tab:experiment-alignment}
\small
\begin{tabular}{lrrrr}
\toprule
Method & Effect Bias (\%) & Effect RMSE (\%) & Coverage & Vector RMSE \\
\midrule
Joint GMMM & 1.243 & 17.430 & 0.967 & 0.950 \\
Joint + aligned estimate & 1.259 & 17.424 & 0.967 & 0.947 \\
Joint + shifted estimate & 4.169 & 18.013 & 0.967 & 0.951 \\
\bottomrule
\end{tabular}
\medskip
\begin{minipage}{0.94\textwidth}
\footnotesize
``Aligned estimate'' uses the randomized estimate from the target population, and ``shifted estimate'' adds $0.75$ to that estimate.
\end{minipage}
\end{table}

\section{Measurement Uncertainty and Numerical Integration}
\label{sec:uncertainty-integration}

Once the occurrence probabilities have been estimated, two different sources of error remain. Replacing a market count by an estimated probability changes the regression input, whereas averaging over uncertain inputs and transformation parameters requires numerical integration.

\paragraph{Finite collections of repeated answers in a static linear model.}
Suppose that $Y=\beta_G E^G+\varepsilon$ and $E^G=c\pi$, where $c$ converts the occurrence probability $\pi$ to an expected noticed count by combining the relevant market opportunity and notice probability. Let $\widehat\pi=\pi+u$ be an estimate based on finitely many generated answers. Assume that all variables have finite second moments, $\Var(\widehat\pi)>0$, $\E[u\mid\pi,c]=0$, and $\Cov(\widehat\pi,\varepsilon)=0$. The population least squares slope from regressing $Y$ on $\widehat\pi$ with an intercept is
\begin{align}
b_{\mathrm{rate}}
&=
\beta_G
\frac{\Cov(\pi,c\pi)}{\Var(\pi)+\Var(u)}.
\label{eq:occurrence-rate-slope}
\end{align}
If $c=c_0$ is constant, then the slope becomes
\begin{align}
b_{\mathrm{rate}}
&=
\beta_G c_0
\frac{\Var(\pi)}{\Var(\pi)+\Var(u)}.
\label{eq:occurrence-rate-slope-constant}
\end{align}
\begin{proof}
The population slope is $\Cov(\widehat\pi,Y)/\Var(\widehat\pi)$. The conditional mean restriction on $u$ implies that $\Cov(u,c\pi)=0$ and $\Cov(u,\pi)=0$. Together with the assumption on $\varepsilon$, these relations give $\Cov(\widehat\pi,Y)=\beta_G\Cov(\pi,c\pi)$ and $\Var(\widehat\pi)=\Var(\pi)+\Var(u)$, which establish \eqref{eq:occurrence-rate-slope}. If $c=c_0$ holds, then $\Cov(\pi,c\pi)=c_0\Var(\pi)$, and \eqref{eq:occurrence-rate-slope-constant} follows.
\end{proof}
When $u=0$ and $c$ is constant, the change of scale can be absorbed by the response coefficient. Variation in $c$ across observations changes the covariance in \eqref{eq:occurrence-rate-slope}, while estimating $\pi$ from a finite collection of repeated answers adds the attenuation term in the denominator. These are distinct reasons why an occurrence probability can fail to represent the corresponding market count.

\subsection{Nonlinear Transformation of an Uncertain Input}

Even after an input has been expressed as an expected market count, uncertainty about that count can matter because the media transformation is nonlinear. Conditional on $\mathcal D_M$, let $\widetilde A$ denote an uncertain adstock value and define $\overline A=\E[\widetilde A\mid\mathcal D_M]$. A plug-in analysis uses $h(\overline A)$. Averaging over the measurement distribution instead gives $\E[h(\widetilde A)\mid\mathcal D_M]$ before the response data alter any weights.

\begin{proposition}[Transformation of an uncertain media input]
Suppose that $\widetilde A$ is supported on an interval $\mathcal A\subset(0,\infty)$ and that both $\widetilde A$ and $h(\widetilde A)$ are integrable conditional on $\mathcal D_M$. If $h$ is convex on $\mathcal A$, then it holds that
\begin{align}
\E\left[h(\widetilde A)\mid\mathcal D_M\right]
&\geq
h\left(\E[\widetilde A\mid\mathcal D_M]\right).
\label{eq:convex-transformation}
\end{align}
If $h$ is concave on $\mathcal A$, then the reverse inequality holds. Equality holds when $\widetilde A$ is degenerate conditional on $\mathcal D_M$ or when $h$ is affine on the convex hull of its conditional support.
\end{proposition}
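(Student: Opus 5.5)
This is a conditional version of Jensen's inequality, and I would prove it with a supporting-line argument that works pointwise in the conditioning. Throughout I fix $\mathcal D_M$ and work with the regular conditional distribution of $\widetilde A$ given $\mathcal D_M$, so every expectation below is an ordinary expectation over that distribution. Write $\overline A=\E[\widetilde A\mid\mathcal D_M]$. Since $\widetilde A$ is supported on the interval $\mathcal A$ and is integrable, $\overline A$ lies in the closed convex hull of the support. In particular $\overline A\in\mathcal A$ whenever $\mathcal A$ is closed, or whenever the support is not concentrated at an endpoint.

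For the convex case, I take a point $\overline A$ in the interior of $\mathcal A$. Convexity then gives a finite one-sided derivative $s$ at $\overline A$ and the supporting-line inequality
\begin{align*}
h(a)\geq h(\overline A)+s\,(a-\overline A)\qquad\text{for all }a\in\mathcal A .
\end{align*}
I substitute $a=\widetilde A$, take conditional expectations, and use integrability of $\widetilde A$ and $h(\widetilde A)$. The linear term has mean zero, so \eqref{eq:convex-transformation} follows.

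If $\overline A$ is an endpoint of $\mathcal A$, then $\widetilde A$ must equal that endpoint almost surely, because a mean at the boundary of an interval forces the mass onto the boundary. The inequality then holds with equality, so this case folds into the degenerate case.

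For the concave case, I apply the same argument to $-h$, which reverses the inequality.

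For the equality statements, the degenerate case is immediate. If $h$ is affine on the convex hull of the conditional support, say $h(a)=c_1+c_2a$ there, then $\E[h(\widetilde A)\mid\mathcal D_M]=c_1+c_2\overline A=h(\overline A)$ by linearity of expectation. This uses the fact that $\overline A$ belongs to that hull, which is the same endpoint observation as above.

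I expect the main obstacle to be the measure-theoretic care rather than the inequality itself. That care has two parts: working with a conditional distribution instead of a bare conditional expectation, and handling the boundary or nonclosed $\mathcal A$, where a supporting line may fail to exist. The regular conditional distribution and the endpoint reduction handle both.

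I would close with a remark tying the result to the paper. The Hill function \eqref{eq:hill} with $\kappa_m>1$ is convex below its inflection point and concave above it, so the direction of the plug-in bias in $h(\overline A)$ depends on where the measurement distribution places $\widetilde A$ relative to $\theta_m$.
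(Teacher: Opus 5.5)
Your proposal is correct and follows the paper's route. The paper invokes conditional Jensen's inequality for the convex case and applies it to $-h$ for the concave case; you instead unpack Jensen through the supporting-line inequality at $\overline A$ and spell out the endpoint and equality cases that the paper only says "follow directly." One small correction to your closing remark: for $\kappa_m>1$ the Hill curve changes curvature at $\theta_m((\kappa_m-1)/(\kappa_m+1))^{1/\kappa_m}$, which lies strictly below $\theta_m$, so the direction of the plug-in discrepancy depends on where $\widetilde A$ sits relative to that inflection point, not relative to $\theta_m$.
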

\begin{proof}
The convex case follows from Jensen's inequality conditional on $\mathcal D_M$. Applying the same argument to $-h$ gives the concave case. The stated equality cases follow directly.
\end{proof}
For the Hill function in \eqref{eq:hill}, the second derivative at $a>0$ is
\begin{align}
h''(a)
&=
\frac{
\kappa\theta^{\kappa}a^{\kappa-2}
\left((\kappa-1)\theta^{\kappa}-(\kappa+1)a^{\kappa}\right)
}{
\left(a^{\kappa}+\theta^{\kappa}\right)^3
}.
\label{eq:hill-curvature}
\end{align}
If $0<\kappa\leq1$ holds, then the Hill function is concave on $(0,\infty)$. If $\kappa>1$ holds, its curvature changes at $a=\theta((\kappa-1)/(\kappa+1))^{1/\kappa}$. The direction of the plug-in discrepancy depends on the range of the input after carryover. Replacing an uncertain sequence by one fitted sequence need not reproduce the estimate obtained by averaging over that uncertainty.

\subsection{Joint Weights Under the Measurement Approximation}
\label{sec:joint_reweighting_theory}

The joint calculation assigns weights based on the response likelihood to candidates sampled from the measurement approximation. To interpret the calculation, one must specify the distribution approached as the number of candidates increases while the measurement approximation remains fixed.

Let $\mathcal D_M$ denote the measurement data and $\mathcal D_Y$ the response data. The parameter vector $\eta$ determines the constructed media inputs and their transformations, and $q(\eta\mid\mathcal D_M)$ denotes the proposal distribution in \eqref{eq:measurement-distribution}. Let $\vartheta_Y$ contain the response coefficients and the remaining parameters of the response distribution. For fixed $\eta$, define the response marginal likelihood by
\begin{align}
m_Y(\mathcal D_Y\mid\eta)
&=
\int p(\mathcal D_Y\mid\eta,\vartheta_Y)
 p(\vartheta_Y\mid\eta)\,d\vartheta_Y.
\end{align}
The weighted calculation targets
\begin{align}
\widetilde p(\eta\mid\mathcal D_M,\mathcal D_Y)
&=
\frac{m_Y(\mathcal D_Y\mid\eta)q(\eta\mid\mathcal D_M)}
{\int m_Y(\mathcal D_Y\mid u)q(u\mid\mathcal D_M)\,du}.
\end{align}
\begin{proposition}[Limit of the joint weights]
\label{prop:joint_reweighting}
Suppose that $\eta_1,\ldots,\eta_K$ are independent samples from $q(\eta\mid\mathcal D_M)$ and that $0<\int m_Y(\mathcal D_Y\mid\eta)q(\eta\mid\mathcal D_M)\,d\eta<\infty$. Let $K\to\infty$ while $\mathcal D_M$, $\mathcal D_Y$, and $q$ remain fixed, and define
\begin{align}
w_k
&=
\frac{m_Y(\mathcal D_Y\mid\eta_k)}
{\sum_{j=1}^{K}m_Y(\mathcal D_Y\mid\eta_j)}.
\end{align}
Let $\E_q$ denote expectation under $q(\eta\mid\mathcal D_M)$. For every function $h$ satisfying $\E_q[|h(\eta)|m_Y(\mathcal D_Y\mid\eta)]<\infty$, it holds that
\begin{align}
\sum_{k=1}^{K}w_k h(\eta_k)
&\longrightarrow
\int h(\eta)\widetilde p(\eta\mid\mathcal D_M,\mathcal D_Y)\,d\eta
\quad\text{almost surely}.
\end{align}
Let $\mu_h$ denote the integral on the right. If $\E_q[m_Y(\mathcal D_Y\mid\eta)^2(h(\eta)-\mu_h)^2]<\infty$ holds, then we also have
\begin{align}
\sqrt{K}\left(\sum_{k=1}^{K}w_k h(\eta_k)-\mu_h\right)
&\xrightarrow{d}
\mathcal N(0,\sigma_h^2),
\end{align}
where
\begin{align}
\sigma_h^2
&=
\frac{\E_q\left[m_Y(\mathcal D_Y\mid\eta)^2(h(\eta)-\mu_h)^2\right]}
{\E_q\left[m_Y(\mathcal D_Y\mid\eta)\right]^2}.
\end{align}
If $q(\eta\mid\mathcal D_M)=p(\eta\mid\mathcal D_M)$ holds, then $\widetilde p$ equals the joint posterior under the stated model.
\end{proposition}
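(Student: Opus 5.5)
The plan is to write the self-normalized estimator as a ratio of two ordinary Monte Carlo averages and apply the strong law of large numbers and the central limit theorem to the numerator and the denominator. Abbreviate $m(\eta)=m_Y(\mathcal D_Y\mid\eta)$ and define
\begin{align}
N_K=\frac1K\sum_{k=1}^{K}m(\eta_k)h(\eta_k),
\qquad
D_K=\frac1K\sum_{k=1}^{K}m(\eta_k),
\end{align}
so that $\sum_k w_kh(\eta_k)=N_K/D_K$ whenever $D_K>0$.

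\textbf{Consistency.} The samples $\eta_k$ are independent draws from $q$, and by hypothesis $\E_q[m]\in(0,\infty)$ and $\E_q[|h|m]<\infty$. Kolmogorov's strong law therefore gives $D_K\to\E_q[m]$ and $N_K\to\E_q[mh]$ almost surely. In particular $D_K>0$ for all large $K$ almost surely, which disposes of the event on which the weights are undefined. By continuity of $(x,y)\mapsto x/y$ on $y>0$,
\begin{align}
N_K/D_K\to\E_q[mh]/\E_q[m].
\end{align}
This ratio equals $\int h\,\widetilde p\,d\eta=\mu_h$ by the definition of $\widetilde p$ as $mq$ normalized.

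\textbf{Asymptotic normality.} I would center the numerator:
\begin{align}
\frac{N_K}{D_K}-\mu_h=\frac{K^{-1}\sum_{k=1}^{K}m(\eta_k)\bigl(h(\eta_k)-\mu_h\bigr)}{D_K}.
\end{align}
The summands $m(\eta_k)(h(\eta_k)-\mu_h)$ are i.i.d. with mean $\E_q[mh]-\mu_h\E_q[m]=0$, and their second moment is finite by the stated condition. The classical CLT then gives
\begin{align}
\sqrt K\,K^{-1}\sum_{k=1}^{K}m(\eta_k)\bigl(h(\eta_k)-\mu_h\bigr)\xrightarrow{d}\mathcal N\bigl(0,\E_q[m^2(h-\mu_h)^2]\bigr).
\end{align}
Since $D_K\to\E_q[m]$ almost surely, and hence in probability, Slutsky's lemma yields the limit with variance $\sigma_h^2$ as stated. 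This approach avoids a full bivariate delta method. The only point needing care is that the mean-zero claim uses integrability of $mh$, which follows from the first assumption; the finite mean of the summand in the second part is covered by its finite second moment.

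\textbf{Final statement.} If $q=p(\eta\mid\mathcal D_M)$, then $\widetilde p\propto m_Y(\mathcal D_Y\mid\eta)p(\eta\mid\mathcal D_M)$. To identify this with the joint posterior, I would use the model structure in \eqref{eq:joint-posterior}: $\mathcal D_Y$ is conditionally independent of $\mathcal D_M$ given $(\eta,\vartheta_Y)$, and $\vartheta_Y$ is a priori independent of $\mathcal D_M$ given $\eta$. Under these conditions, $p(\eta\mid\mathcal D_M,\mathcal D_Y)\propto p(\mathcal D_Y\mid\eta,\mathcal D_M)p(\eta\mid\mathcal D_M)=m_Y(\mathcal D_Y\mid\eta)p(\eta\mid\mathcal D_M)$ after integrating out $\vartheta_Y$.

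I expect the main obstacle to be this last step, because it requires stating these conditional-independence properties of the model explicitly. The limit theorems themselves are routine.
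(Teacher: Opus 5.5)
Your proposal is correct and follows the paper's proof. The paper likewise applies the strong law of large numbers to the numerator and denominator of the self-normalized estimator, cites the central limit theorem for self-normalized importance sampling (your centering-plus-Slutsky argument proves it directly), and uses Bayes' rule for the final claim, while your explicit conditional-independence conditions giving $p(\mathcal D_Y\mid\eta,\mathcal D_M)=m_Y(\mathcal D_Y\mid\eta)$ spell out what the paper leaves implicit.
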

\begin{proof}
The numerator and denominator of the self-normalized estimator are sample averages under $q$. The strong law of large numbers gives the almost-sure limits
\begin{align}
\frac{1}{K}\sum_{k=1}^{K}h(\eta_k)m_Y(\mathcal D_Y\mid\eta_k)
&\longrightarrow
\int h(\eta)m_Y(\mathcal D_Y\mid\eta)q(\eta\mid\mathcal D_M)\,d\eta,\\
\frac{1}{K}\sum_{k=1}^{K}m_Y(\mathcal D_Y\mid\eta_k)
&\longrightarrow
\int m_Y(\mathcal D_Y\mid\eta)q(\eta\mid\mathcal D_M)\,d\eta.
\end{align}
Their ratio is the expectation under $\widetilde p$. The central limit theorem for self-normalized importance sampling gives the second result under the stated second-moment condition. When $q$ equals the measurement posterior, Bayes' rule shows that multiplication by the response marginal likelihood gives the joint posterior up to normalization.
\end{proof}
Under the second-moment condition in Proposition~\ref{prop:joint_reweighting}, the numerical integration error for the specified marginal likelihood is of order $K^{-1/2}$. Replacing the measurement posterior by $q_L$, or replacing the Student probability of the sign restrictions by a Gaussian approximation, changes the target distribution. Increasing $K$ leaves those approximation errors unchanged. When a numerical marginal likelihood is substituted for $m_Y$, the same argument gives convergence to the corresponding reweighted approximation. Effective sample size measures concentration of the weights. Approximation error must be assessed numerically, and the identification assumptions in Section~\ref{sec:theory} must be justified from the study design.

A second limit applies when the entire proposal distribution, including the transformation parameters, concentrates at one value.

\begin{proposition}[Agreement under concentration of the proposal distribution]
\label{prop:vanishing-exposure-uncertainty}
Let $q_N(\eta)$ be a sequence of proposal distributions indexed by increasingly informative measurement designs, where $N\in\mathbb N$ and $N\to\infty$, and let $\widehat\eta_N$ be the corresponding plug-in estimate. Suppose that $\widehat\eta_N\xrightarrow{p}\eta_0$ and that $q_N$ converges weakly in probability to a point mass at $\eta_0$. Conditional on fixed response data $\mathcal D_Y$, suppose that $h(\eta)$ and $m_Y(\mathcal D_Y\mid\eta)$ are bounded and continuous at $\eta_0$, with $m_Y(\mathcal D_Y\mid\eta_0)>0$. Then, it holds that
\begin{align}
h(\widehat\eta_N)
&\xrightarrow{p}h(\eta_0),
\label{eq:plugin-contraction}\\
\int h(\eta)q_N(\eta)\,d\eta
&\xrightarrow{p}h(\eta_0),
\label{eq:two-stage-contraction}\\
\frac{\int h(\eta)m_Y(\mathcal D_Y\mid\eta)q_N(\eta)\,d\eta}
{\int m_Y(\mathcal D_Y\mid\eta)q_N(\eta)\,d\eta}
&\xrightarrow{p}h(\eta_0).
\label{eq:joint-contraction}
\end{align}
\end{proposition}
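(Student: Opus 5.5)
The three limits follow from one tool: the definition of weak convergence in probability, together with the continuous mapping theorem.

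For \eqref{eq:plugin-contraction}, $h$ is continuous at $\eta_0$ and $\widehat\eta_N\xrightarrow{p}\eta_0$. The continuous mapping theorem, which needs continuity only at the limit point, gives $h(\widehat\eta_N)\xrightarrow{p}h(\eta_0)$ directly.

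For \eqref{eq:two-stage-contraction}, the natural move is to invoke the portmanteau characterization: weak convergence to $\delta_{\eta_0}$ implies $\int f\,dq_N\to f(\eta_0)$ for every bounded function continuous at $\eta_0$. The limit measure puts no mass on the discontinuity set of $f$, which may be nonempty away from $\eta_0$.

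To keep the argument self-contained, I would prove this directly. Fix $\varepsilon>0$ and let $B$ bound $|h|$. Choose $\delta$ such that $|h(\eta)-h(\eta_0)|<\varepsilon$ on the ball $U_\delta$ around $\eta_0$. Then
\begin{align*}
\left|\int h\,dq_N-h(\eta_0)\right|
\leq
\varepsilon+2B\,q_N(U_\delta^c).
\end{align*}
Since $U_\delta^c$ is closed and $\delta_{\eta_0}(U_\delta^c)=0$, weak convergence to the point mass gives $q_N(U_\delta^c)\to0$ (the portmanteau upper bound on closed sets).

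The only care needed is the "in probability" qualifier. I read it as $d(q_N,\delta_{\eta_0})\xrightarrow{p}0$ for a metric $d$ metrizing weak convergence, for instance the Lévy--Prokhorov metric. Under that reading, for each fixed $\delta$ we get $q_N(U_\delta^c)\xrightarrow{p}0$. The display then yields $\Pr(|\int h\,dq_N-h(\eta_0)|>2\varepsilon)\to0$. An equivalent route is the subsequence characterization: every subsequence has a further subsequence along which $q_N\Rightarrow\delta_{\eta_0}$ almost surely, and the deterministic argument applies along it.

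For \eqref{eq:joint-contraction}, apply the previous step twice, once to $f=hm_Y(\mathcal D_Y\mid\cdot)$ and once to $f=m_Y(\mathcal D_Y\mid\cdot)$. Both are bounded and continuous at $\eta_0$, since products of bounded functions continuous at a point are again bounded and continuous there. The numerator therefore converges in probability to $h(\eta_0)m_Y(\mathcal D_Y\mid\eta_0)$ and the denominator to $m_Y(\mathcal D_Y\mid\eta_0)>0$. With probability tending to one the denominator is bounded away from zero. The continuous mapping theorem applied to $(x,y)\mapsto x/y$, which is continuous at $(h(\eta_0)m_Y,m_Y)$ because $m_Y>0$, gives the ratio limit $h(\eta_0)$.

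The main obstacle is step two: pinning down the meaning of weak convergence in probability, and handling the fact that $h$ and $m_Y$ are continuous only at $\eta_0$ rather than everywhere. The explicit $\varepsilon$--$\delta$ bound above resolves both points without any global continuity assumption. The remaining steps are routine applications of the continuous mapping theorem.
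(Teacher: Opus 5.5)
Your proof is correct and follows the same route as the paper's: the continuous mapping theorem for $h(\widehat\eta_N)$, weak convergence to the point mass for $\int h\,q_N$, and the same argument applied to $h\,m_Y(\mathcal D_Y\mid\cdot)$ and $m_Y(\mathcal D_Y\mid\cdot)$, followed by the ratio with a denominator whose limit is positive. Your explicit $\varepsilon$--$\delta$ bound and the L\'evy--Prokhorov reading of ``in probability'' supply details the paper leaves implicit: that continuity at $\eta_0$ alone suffices, and that the $q_N$ are random measures.
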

\begin{proof}
Equation~\eqref{eq:plugin-contraction} follows from the continuous mapping theorem, and weak convergence of $q_N$ gives \eqref{eq:two-stage-contraction}. Applying the same argument to the bounded functions $h(\eta)m_Y(\mathcal D_Y\mid\eta)$ and $m_Y(\mathcal D_Y\mid\eta)$ gives limits $h(\eta_0)m_Y(\mathcal D_Y\mid\eta_0)$ and $m_Y(\mathcal D_Y\mid\eta_0)$ for the numerator and denominator in \eqref{eq:joint-contraction}. The denominator has a positive limit, so their ratio converges to $h(\eta_0)$.
\end{proof}
The proposition requires concentration of the entire proposal distribution. More generated answers provide information about the occurrence probabilities, but uncertainty about market opportunities, notice probabilities, sponsored placements, and media transformations must also vanish for its conclusion to apply. The comparisons in Section~\ref{sec:transformation-comparison} isolate the contribution from estimating the transformation parameters.

Suppose that an external randomized experiment $\mathcal D_E$ is conditionally independent of $\mathcal D_Y$ given the common parameters. The integrated likelihood then factorizes as $p(\mathcal D_Y,\mathcal D_E\mid\eta)=m_Y(\mathcal D_Y\mid\eta)p(\mathcal D_E\mid\mathcal D_Y,\eta)$. Because the two data sources share response coefficients, the second factor averages the experimental likelihood under the response posterior given $\mathcal D_Y$. Integrating the experimental likelihood separately would omit this conditioning. The estimated effect must refer to the same treatment and population, except for any population difference represented by \eqref{eq:randomized-estimate-likelihood}.

\subsection{Numerical Comparison on Common Samples}
\label{sec:modular-diagnostics}

The paired differences in Table~\ref{tab:modular-paired} use the same 120 replications per condition as the comparison in which all methods estimate the transformations. Each entry gives the RMSE of the first method minus that of the second, followed by its 95\% interval from 5,000 paired bootstrap resamples. Negative differences favor the first method, and differences in relative effect RMSE are expressed in percentage points.

\begin{table}[!htbp]
\centering
\caption{Paired RMSE Differences When All Methods Estimate the Transformations}
\label{tab:modular-paired}
\small
\begin{tabular}{lrr}
\toprule
Comparison & Vector Difference & Effect Difference (pp) \\
\midrule
\multicolumn{3}{l}{Controlled: $J_A=1$, $\sigma_Y=0.4$} \\
Cut minus Plug-in & $0.0201\ [0.0142,0.0259]$ & $0.010\ [-0.026,0.048]$ \\
Joint minus Cut & $-0.0300\ [-0.0383,-0.0221]$ & $0.035\ [-0.053,0.131]$ \\
Joint minus Plug-in & $-0.0098\ [-0.0165,-0.0038]$ & $0.046\ [-0.053,0.157]$ \\
\midrule
\multicolumn{3}{l}{Controlled: $J_A=1$, $\sigma_Y=1.2$} \\
Cut minus Plug-in & $-0.0318\ [-0.0447,-0.0192]$ & $0.069\ [-0.024,0.166]$ \\
Joint minus Cut & $0.0304\ [0.0140,0.0478]$ & $0.025\ [-0.081,0.124]$ \\
Joint minus Plug-in & $-0.0014\ [-0.0131,0.0106]$ & $0.094\ [-0.022,0.217]$ \\
\midrule
\multicolumn{3}{l}{Controlled: $J_A=5$, $\sigma_Y=0.4$} \\
Cut minus Plug-in & $0.0034\ [0.0014,0.0054]$ & $0.017\ [-0.007,0.046]$ \\
Joint minus Cut & $-0.0057\ [-0.0081,-0.0033]$ & $-0.006\ [-0.034,0.019]$ \\
Joint minus Plug-in & $-0.0024\ [-0.0044,-0.0002]$ & $0.011\ [-0.022,0.050]$ \\
\midrule
\multicolumn{3}{l}{Controlled: $J_A=5$, $\sigma_Y=1.2$} \\
Cut minus Plug-in & $-0.0028\ [-0.0091,0.0031]$ & $-0.070\ [-0.161,0.018]$ \\
Joint minus Cut & $0.0101\ [0.0025,0.0180]$ & $0.036\ [-0.057,0.125]$ \\
Joint minus Plug-in & $0.0072\ [0.0001,0.0147]$ & $-0.034\ [-0.123,0.054]$ \\
\midrule
\multicolumn{3}{l}{Estimated occurrence probabilities: $J_A=5$, $\sigma_Y=1.2$} \\
Cut minus Plug-in & $-0.0184\ [-0.0283,-0.0092]$ & $-0.033\ [-0.152,0.088]$ \\
Joint minus Cut & $0.0265\ [0.0136,0.0406]$ & $0.154\ [0.046,0.259]$ \\
Joint minus Plug-in & $0.0081\ [-0.0027,0.0204]$ & $0.121\ [-0.016,0.261]$ \\
\bottomrule
\end{tabular}
\end{table}
Table~\ref{tab:modular-diagnostics} reports concentration of the measurement weights and, within each measurement sample, the transformation weights. Measurement ESS is the median ESS of the joint measurement marginal across 120 replications. The cut marginal is uniform by definition and has no corresponding entry. Conditional ESS is the median across replications of the median transformation ESS within a replication, while Minimum ESS is the smallest conditional transformation ESS over all measurement samples and replications. The last two quantities agree for Cut and Joint because their conditional transformation weights are the same.

For the numerical check, we increase $M$ from 128 to 256 and $J$ from 700 to 1,400 for the first eight replications in every condition, leaving the 120-replication comparisons unchanged. The last two columns report the median and maximum absolute changes in the posterior mean of the treatment effect as percentages of the true effect. The largest change for Cut or Joint is $1.602\%$, and the largest change in an interval bound is $4.357\%$. This calculation examines sensitivity to the numbers of candidates and posterior samples. Assessing the Laplace approximation and the Gaussian approximation to the sign probability requires additional calculations.

\begin{table}[!htbp]
\centering
\caption{Integration Diagnostics With Twice as Many Candidates}
\label{tab:modular-diagnostics}
\scriptsize
\setlength{\tabcolsep}{3pt}
\begin{tabular}{lrrrrr}
\toprule
Method & Measurement ESS & Conditional ESS & Minimum ESS & Median Change & Maximum Change \\
\midrule
\multicolumn{6}{l}{Controlled: $J_A=1$, $\sigma_Y=0.4$} \\
Cut & -- & 10.9 & 1.0 & 0.356 & 0.876 \\
Joint & 45.2 & 10.9 & 1.0 & 0.454 & 0.901 \\
\midrule
\multicolumn{6}{l}{Controlled: $J_A=1$, $\sigma_Y=1.2$} \\
Cut & -- & 136.6 & 2.2 & 0.220 & 1.038 \\
Joint & 111.7 & 136.6 & 2.2 & 0.348 & 1.548 \\
\midrule
\multicolumn{6}{l}{Controlled: $J_A=5$, $\sigma_Y=0.4$} \\
Cut & -- & 10.4 & 1.1 & 0.440 & 1.098 \\
Joint & 87.7 & 10.4 & 1.1 & 0.365 & 1.002 \\
\midrule
\multicolumn{6}{l}{Controlled: $J_A=5$, $\sigma_Y=1.2$} \\
Cut & -- & 138.3 & 2.2 & 0.240 & 0.566 \\
Joint & 121.9 & 138.3 & 2.2 & 0.293 & 1.190 \\
\midrule
\multicolumn{6}{l}{Estimated occurrence probabilities: $J_A=5$, $\sigma_Y=1.2$} \\
Cut & -- & 105.9 & 4.3 & 0.697 & 1.602 \\
Joint & 108.9 & 105.9 & 4.3 & 0.597 & 1.147 \\
\bottomrule
\end{tabular}
\end{table}

\section{Attribution With Channel Interactions}
\label{sec:interaction-allocation}

The interaction term belongs jointly to GEO and GEM, so the two effects obtained by disabling one channel at a time need not sum to the effect of disabling both. When an application requires an additive allocation across the channels, the Shapley values are
\begin{align}
\Phi_G
&=
\frac{1}{2}\left(V(1,0)-V(0,0)\right)
+
\frac{1}{2}\left(V(1,1)-V(0,1)\right),
\label{eq:shapley-geo}\\
\Phi_P
&=
\frac{1}{2}\left(V(0,1)-V(0,0)\right)
+
\frac{1}{2}\left(V(1,1)-V(1,0)\right).
\label{eq:shapley-paid}
\end{align}
This allocation assigns the interaction once and satisfies $\Phi_G+\Phi_P=V(1,1)-V(0,0)$ \citep{Shapley1988avalue}. The same definition extends to more channels, although each additional interacting channel increases the number of treatment combinations that must be evaluated.

\section{Consequences of Treatment Design in Simulation}
\label{sec:secondary-analyses}
\label{sec:design-diagnostics}

Two auxiliary simulations isolate consequences of treatment design that the response equation alone cannot resolve: one introduces time variation shared by treated and untreated units, and the other conditions on a variable affected by treatment.

In the first simulation, treated and untreated clusters share platform growth, while treated clusters receive an additional GEO effect after treatment begins. Across 500 replications, the comparison of treated units before and after treatment has bias $0.081$ and zero coverage for a 95\% interval. Difference-in-differences removes the shared growth component, reducing bias below $0.001$ and RMSE to $0.008$, with coverage of $0.954$.
\begin{figure}[!htbp]
\centering
\includegraphics[width=0.72\textwidth,draft=false]{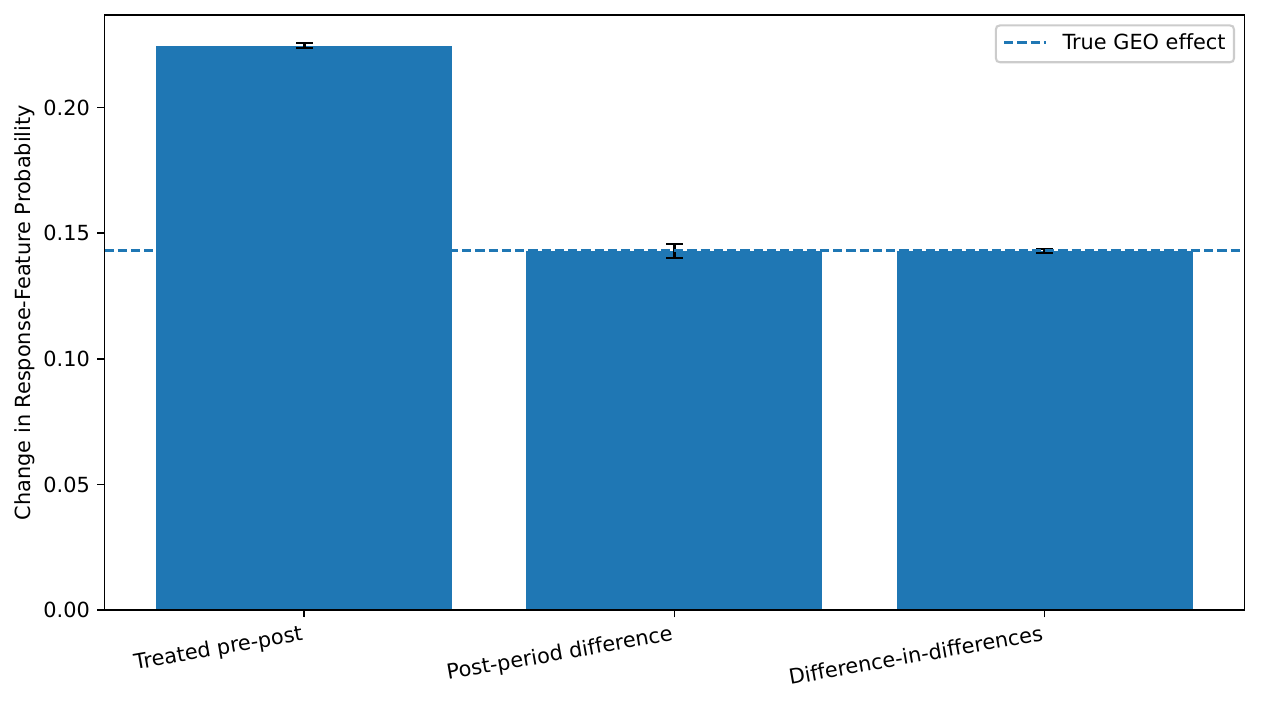}
\caption{Difference-in-differences removes platform growth shared by treated and untreated clusters. Error bars equal 1.96 Monte Carlo standard errors.}
\label{fig:platform-growth}
\end{figure}
\begin{table}[!htbp]
\centering
\caption{Simulation of Platform Growth}
\label{tab:platform-growth}
\small
\begin{tabular}{lccc}
\toprule
Method & Bias & RMSE & Coverage \\
\midrule
Change within treated units & 0.081 & 0.082 & 0.000 \\
Difference after treatment & -0.000 & 0.032 & 0.952 \\
Difference-in-differences & -0.000 & 0.008 & 0.954 \\
\bottomrule
\end{tabular}
\end{table}

The second simulation assigns a direct GEO effect of $0.70$ and an effect through brand search of $0.88$. A response model that omits brand search estimates their sum, $1.58$, at $1.579$. Because the disturbances for brand search and the response are independent in this additive design, conditioning on brand search estimates the direct effect at $0.700$. This interpretation relies on the stated absence of confounding between brand search and the response; it does not justify conditioning on an arbitrary variable affected by treatment.
\begin{figure}[!htbp]
\centering
\includegraphics[width=0.72\textwidth,draft=false]{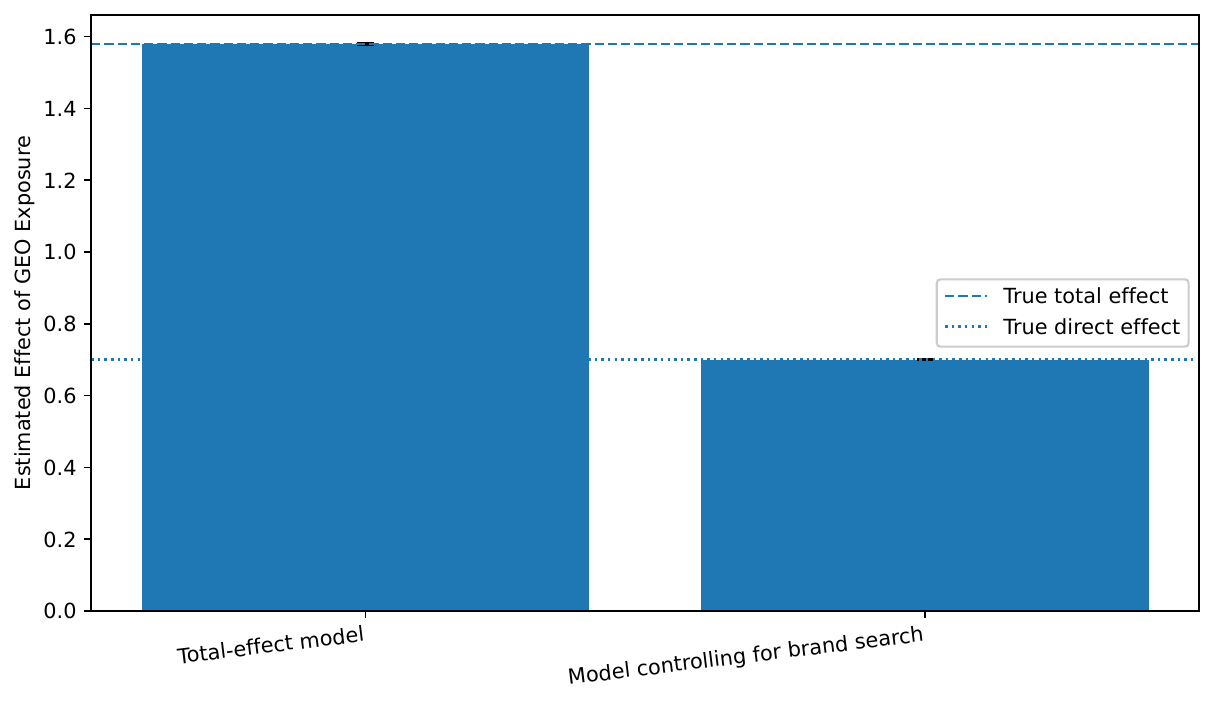}
\caption{Estimates of the total and direct effects in the additive mediation design with independent disturbances. Error bars equal 1.96 Monte Carlo standard errors.}
\label{fig:overadjustment}
\end{figure}
\begin{table}[!htbp]
\centering
\caption{Simulation of Post-Treatment Adjustment}
\label{tab:overadjustment}
\small
\begin{tabular}{lcccc}
\toprule
Method & Estimate & Standard Deviation & Bias for Total & Bias for Direct \\
\midrule
Model for the total effect & 1.579 & 0.028 & -0.001 & 0.879 \\
Model controlling for brand search & 0.700 & 0.031 & -0.880 & 0.000 \\
\bottomrule
\end{tabular}
\end{table}

\section{Public Referral Traffic Analysis}
\label{sec:empirical}

The repository accompanying \citet{Watanabe2026disentanglinganswer} contains weekly treated and control session indices from July 2025 through May 2026.\footnote{\url{https://github.com/glasp-co/aeo-natural-experiment}} It does not contain the contemporaneous question counts, generated answers, or notice observations needed to construct the GEO input, and it has no information on sponsored placements. The analysis uses the referral indices alone. The source material places the treatment transition around December 2025 and January 2026. We use December 30, 2025 as the central first post-treatment week and examine the neighboring weekly dates.

Let $R_t$ denote the ratio of treated to control sessions and let $r_t=\log R_t$. For a specified first post-treatment week $t_0$, we estimate
\begin{align}
r_t
&=
\delta_0
+
\delta_1t
+
\delta_2\mathbf 1(t\geq t_0)
+
\delta_3(t-t_0)\mathbf 1(t\geq t_0)
+
e_t.
\label{eq:segmented-model}
\end{align}
Using Newey--West covariance with four lags, we report an interval and a $t$ test for the immediate ratio $\exp(\delta_2)$. A platform shock cancels from $R_t$ when it changes treated and control traffic by the same proportion. The segmented terms describe how the groups diverge from the trend estimated before treatment, including changes that affect them differently.

For December 30, the immediate ratio is $1.842$, with a 95\% interval from $1.306$ to $2.599$. The geometric mean ratio implied by the model over the final four weeks is $2.301$, compared with an observed geometric mean of $2.055$ after adjustment for the pre-treatment trend. The placebo rank is $0.150$ among 19 admissible dates before treatment. Because the pre-treatment period contains changes at least as large as the estimate at rollout, this diagnostic does not by itself establish a treatment effect.
\begin{table}[!htbp]
\centering
\caption{Referral Traffic Analysis}
\label{tab:empirical-main}
\small
\begin{tabular}{lcccc}
\toprule
Quantity & Estimate & Lower & Upper & P-value \\
\midrule
Weekly trend factor before treatment & 1.027 & 1.007 & 1.048 & 0.010 \\
Immediate ratio: all referral sessions & 1.842 & 1.306 & 2.599 & $<0.001$ \\
Immediate ratio: engaged referral sessions & 2.388 & 1.487 & 3.833 & $<0.001$ \\
Final four weeks: fitted ratio & 2.301 & 1.342 & 4.032 & 0.026 \\
Final four weeks: adjusted observed ratio & 2.055 & -- & -- & -- \\
Monthly ratio of growth factors & 1.750 & -- & -- & -- \\
Placebo rank before treatment & 0.150 & -- & -- & -- \\
\bottomrule
\end{tabular}
\end{table}
\begin{figure}[!htbp]
\centering
\includegraphics[width=0.82\textwidth,draft=false]{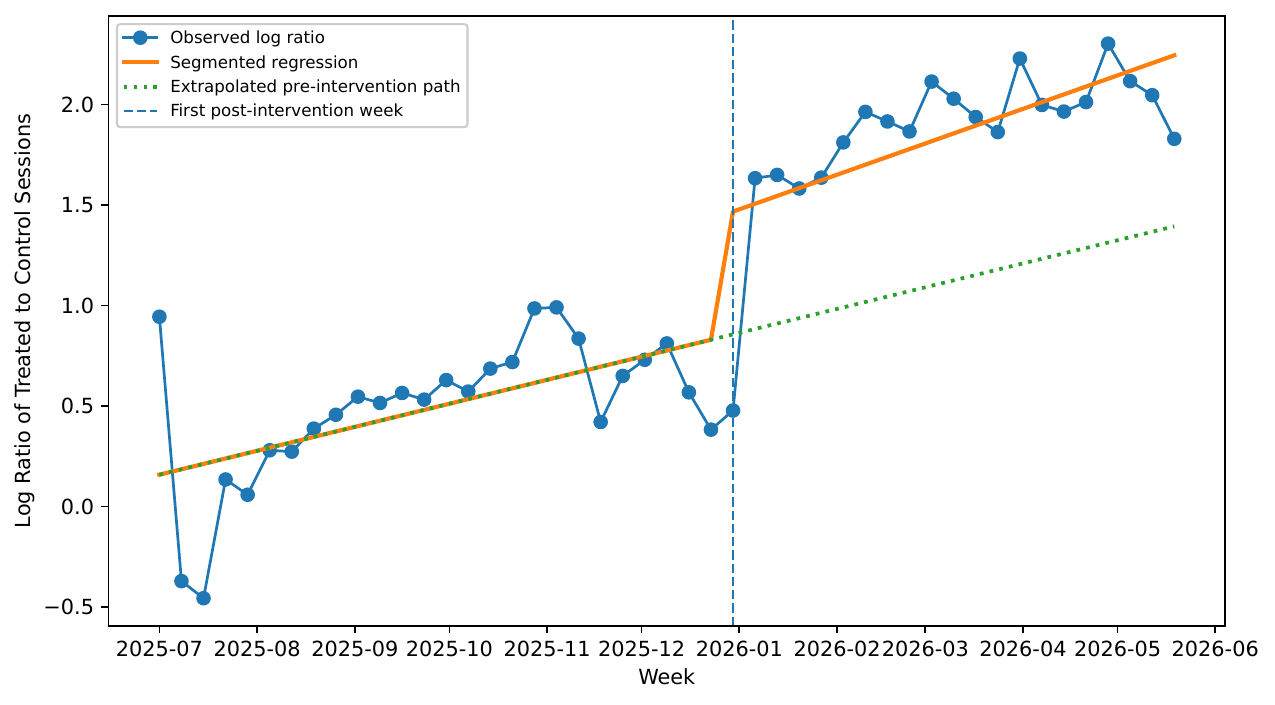}
\caption{Observed and fitted log ratio of treated to control referrals. The vertical line marks the first post-treatment week, and the dotted path extrapolates the relation estimated before treatment.}
\label{fig:empirical-series}
\end{figure}

The date assigned to the treatment transition materially changes the estimate. Moving the first post-treatment week from December 16, 2025 to January 6, 2026 changes the immediate ratio from $1.183$ to $2.404$. The intervals for December 16 and December 23 include one, whereas those for December 30 and January 6 exclude it. We report this range because the source material describes a transition window and does not identify one breakpoint.
\begin{table}[!htbp]
\centering
\caption{Sensitivity to the First Post-Treatment Week}
\label{tab:empirical-rollout}
\small
\begin{tabular}{lcccc}
\toprule
First Post-Treatment Week & Ratio & Lower & Upper & P-value \\
\midrule
2025-12-16 & 1.183 & 0.665 & 2.103 & 0.560 \\
2025-12-23 & 1.412 & 0.852 & 2.341 & 0.176 \\
2025-12-30 & 1.842 & 1.306 & 2.599 & $<0.001$ \\
2026-01-06 & 2.404 & 1.770 & 3.265 & $<0.001$ \\
\bottomrule
\end{tabular}
\end{table}

The source study reports that bot filtering changed in mid-March and that the composition of sessions subsequently changed, with a larger increase in engagement for the treated group \citep{Watanabe2026disentanglinganswer}. A differential measurement change can remain in the ratio of treated to control sessions. Because the exact date is unavailable, we use March 10, March 17, and March 24 as alternative weekly starts. For each date, we first add a change in level to \eqref{eq:segmented-model} and then allow both the level and slope to change. We also fit the original specification to the 36 weeks ending before March 10, using the terms described in Appendix~\ref{sec:empirical-computation}.
\begin{table}[!htbp]
\centering
\caption{Sensitivity to the Referral Measurement Change}
\label{tab:empirical-measurement-change}
\small
\begin{tabular}{llcc}
\toprule
Measurement Specification & Date & All Sessions & Engaged Sessions \\
\midrule
No measurement change & -- & 1.842 [1.306, 2.599] & 2.388 [1.487, 3.833] \\
Level change & 2026-03-10 & 1.782 [1.233, 2.575] & 2.328 [1.385, 3.914] \\
Level and slope change & 2026-03-10 & 1.337 [0.937, 1.907] & 1.543 [0.981, 2.427] \\
Level change & 2026-03-17 & 1.713 [1.191, 2.464] & 2.219 [1.322, 3.725] \\
Level and slope change & 2026-03-17 & 1.387 [0.977, 1.967] & 1.617 [1.032, 2.534] \\
Level change & 2026-03-24 & 1.685 [1.174, 2.419] & 2.125 [1.287, 3.508] \\
Level and slope change & 2026-03-24 & 1.455 [1.025, 2.067] & 1.696 [1.083, 2.655] \\
Sample ending before date & 2026-03-10 & 1.337 [0.937, 1.906] & 1.543 [0.982, 2.426] \\
\bottomrule
\end{tabular}
\end{table}
\noindent Each entry reports $\exp(\delta_2)$ and its 95\% interval. All specifications use Newey--West covariance with four lags, the finite-sample correction, and a $t$ reference distribution. The final row uses 36 observations; all other rows use 47.

When a change in measurement level begins on March 17, the ratio for all sessions is $1.713$, with an interval of $[1.191,2.464]$. Allowing the slope to change reduces the ratio to $1.387$, with an interval of $[0.977,1.967]$. Across the three candidate dates, estimates from specifications with changes in both level and slope range from $1.337$ to $1.455$. The sample ending before March 10 gives $1.337$, with an interval of $[0.937,1.906]$. Later observations help estimate the post-treatment intercept and trend even though filtering changed after treatment began, so the two estimated changes are statistically dependent. The estimate at rollout is correspondingly sensitive to the specification of the later measurement change.

The week beginning December 30 has the largest Cook's distance, $0.741$, and the largest absolute studentized residual, $3.706$. Leave-one-out estimates range from $1.700$ to $2.272$, while changes in the response specification produce ratios from $1.842$ to $2.647$. Newey--West lag choices from zero through 12 leave the baseline interval above one. The immediate ratio exceeds one in every reported specification, but whether its interval excludes one depends on the first post-treatment week and on the treatment of the measurement change.

\section{Bayesian Computation}

The Bayesian computation in Section~\ref{sec:method} combines Laplace approximations for the models used to construct the inputs with conditional calculations for the response model. The priors, importance weights, and update from a randomized estimate are specified below.

\subsection{Noncentered Laplace Approximation for the Measurement Models}

The hierarchical measurement models are approximated in coordinates that keep the standardized random effects independent of their unknown scale. For a hierarchical logit model, write $\boldsymbol u=B_Q\sigma\widetilde{\boldsymbol u}$, where $\widetilde{\boldsymbol u}\sim\mathcal N(0,I_{Q-1})$, and assign a normal prior to $\log\sigma$. We optimize the posterior in $(\alpha,\widetilde{\boldsymbol u},\log\sigma)$ coordinates. The analytic Hessian includes the cross derivatives between the fixed effects and the scale of the random effects. After inversion, the delta method maps the covariance to $(\alpha,\boldsymbol u)$ coordinates.

The model for occurrence of the target name contains an intercept and a question characteristic. Its other fixed effects represent differences between generative systems, calendar variation, and the change associated with the GEO source state. Calendar variation consists of a scaled trend and harmonic terms with period 18. The prior standard deviation is $3.0$ for the intercept, $1.5$ for the question characteristic and system effect, $1.0$ for the calendar terms, and $1.5$ for the GEO coefficient. The log standard deviation of the question effects has a normal prior with mean $\log(0.45)$ and standard deviation $0.55$. The notice model omits the calendar and GEO terms, and its log standard deviation has prior mean $\log(0.35)$.

\subsection{Posterior for the Response Model}
\label{sec:response-posterior}

We place a normal inverse-gamma prior on the response coefficients and variance, with nonnegative main effects for GEO and GEM:
\begin{align}
\sigma_Y^2
&\sim\operatorname{InverseGamma}(2.5,2.0),
\label{eq:variance-prior}\\
\beta\mid\sigma_Y^2
&\sim\mathcal N(0,\sigma_Y^2V_0)
\quad\text{subject to}\quad
\beta_G\geq0,
\quad
\beta_P\geq0,
\label{eq:response-prior}
\end{align}
where
\begin{align}
V_0
&=
\operatorname{diag}(2.5^2,5.0^2,4.0^2,1.5^2).
\label{eq:prior-scale}
\end{align}
The simulations contain no established media channels. We residualize the response and the four regressors associated with GEO and GEM with respect to $W$, which is equivalent to assigning a flat prior to the control coefficients. This residualization requires a finite control matrix with full column rank and positive residual degrees of freedom.

For each channel $m$, the decay parameter is $\alpha_m=0.90U_m$ with $U_m\sim\operatorname{Beta}(2,2)$. The saturation midpoint is $\theta_m=0.25+1.25V_m$ with $V_m\sim\operatorname{Beta}(2.5,2.5)$, and the Hill exponent is $\kappa_m=0.25+0.75R_m$ with $R_m\sim\operatorname{Beta}(3,2)$. The lag length is $L_m=8$.

For candidate $k$, let $\widetilde Y$ and $\widetilde M_k$ denote the response and the four target regressors after residualization with respect to $W$. Before the sign restrictions are imposed, the prior in \eqref{eq:response-prior} yields a normal inverse-gamma posterior whose marginal distribution for the coefficient vector is multivariate Student $t$. We approximate the posterior probability that $(\beta_G,\beta_P)$ lies in the positive orthant with a Gaussian distribution matched to the posterior mean and covariance. The constrained integrated likelihood equals the unconstrained marginal likelihood multiplied by this posterior probability and divided by the prior probability $1/4$.

For independent candidates, we first sample $\sigma_Y^2$ from its inverse-gamma posterior and then sample the coefficient vector from the corresponding multivariate normal, retaining vectors that satisfy $\beta_G\geq0$ and $\beta_P\geq0$. In the crossed comparison, the marginal Student distribution is first conditioned on the less probable of the two positivity events. That coefficient is sampled from its truncated univariate Student distribution, and the remaining coefficients are sampled from their conditional Student distribution. Rejection based on the other sign yields the same constrained marginal distribution. Both procedures continue until they obtain the required number of posterior samples; neither uses a fixed proposal limit. The Gaussian orthant approximation affects the candidate weights, whereas the conditional coefficient samples follow the Student distribution. The two-stage benchmark retains the prior for the transformations. The cut and joint procedures use the response data to estimate the transformation parameters, and the plug-in comparisons state whether those parameters are fixed or estimated.

\subsection{Update With a Randomized Estimate}

An estimate from a randomized experiment can update the Gaussian approximation to the coefficient posterior when it refers to the same treatment and target population as the model estimand.

For candidate $k$, let $\mathcal N(\mu_k,\Sigma_k)$ approximate the unconstrained coefficient posterior given the response data. Define the region satisfying the sign restrictions by $C=\{\beta:\beta_G\geq0,\beta_P\geq0\}$ and let $P_k$ be its probability under this approximation. Let $M_{it,k}(a_G,a_P)$ denote the four regressors, including the indicator for the source state, constructed under the treatment sequences indexed by $(a_G,a_P)$. With $N_E=n|\calT_E|$, the average difference between the design vectors is
\begin{align}
d_k
&=
\frac{1}{N_E}\sum_{i=1}^n\sum_{t\in\calT_E}
\left(M_{it,k}(1,1)-M_{it,k}(0,1)\right),\\
\tau_{E,k}
&=d_k^{\mathsf T}\beta.
\label{eq:experimental-treatment-comparison}
\end{align}
Each candidate supplies its own design vector, including the component associated with the source state. Write $v_E=s_E^2+\sigma_{\mathrm{tr}}^2$ and $s_k=v_E+d_k^{\mathsf T}\Sigma_kd_k$. The unconstrained Gaussian update is
\begin{align}
\mu_k^+
&=\mu_k+\frac{\Sigma_kd_k}{s_k}
\left(\widehat\tau_E-d_k^{\mathsf T}\mu_k\right),\\
\Sigma_k^+
&=\Sigma_k-\frac{\Sigma_kd_kd_k^{\mathsf T}\Sigma_k}{s_k}.
\label{eq:randomized-estimate-update}
\end{align}
Let $P_k^+$ be the probability of $C$ under $\mathcal N(\mu_k^+,\Sigma_k^+)$. If $\varphi(x;m,v)$ denotes the normal density with mean $m$ and variance $v$, the predictive factor under the sign restrictions and the updated candidate weight are
\begin{align}
\ell_{E,k}
&=\varphi\left(\widehat\tau_E;d_k^{\mathsf T}\mu_k,s_k\right)
\frac{P_k^+}{P_k},\\
w_k^+
&=\frac{w_k\ell_{E,k}}{\sum_h w_h\ell_{E,h}}.
\label{eq:randomized-estimate-weight}
\end{align}
The ratio $P_k^+/P_k$ follows by integrating the product of the Gaussian likelihood and the truncated Gaussian distribution over $C$. After a candidate is selected with probability $w_k^+$, the coefficients are sampled from $\mathcal N(\mu_k^+,\Sigma_k^+)$ conditional on $C$. The weight calculation and the conditional coefficient distribution use the same Gaussian approximation and sign restrictions.

\subsection{Model for Sponsored Placements}

The prior for the model of sponsored placements assigns standard deviation $3.0$ to the intercept and $1.5$ to the elasticity with respect to spending. The demand and promotion coefficients each have standard deviation $1.0$. Optimization imposes $\phi_S>0$, and Gaussian proposal samples that violate this restriction are discarded. Sponsored notice has a $\operatorname{Beta}(1,1)$ prior, whose posterior mean supplies the plug-in value.

\section{Simulation Models}
\label{sec:simulation-details}

The simulation designs vary the source of measurement uncertainty, the response equation, and the treatment sequence. The model and parameter values below define the principal comparisons and the additional data-generating processes.

\subsection{Main GMMM Simulation}

In the controlled design, the true transformation parameters are sampled from the same distributions used in estimation. The resulting comparison is favorable to that prior specification. Another design fixes the transformation parameters away from the prior centers while keeping them within the prior support.

The panel contains five markets observed for 36 periods. Each market has six product clusters linked to 12 question clusters on two generative systems. Two product clusters first receive GEO in period 19, two first receive GEO in period 23, and two remain untreated. Periods are indexed by $t\in\{1,\ldots,36\}$. The log-odds coefficient of the GEO source state in the occurrence model is $0.72$, and the standard deviation of the question effects is $0.45$. The remaining variation reflects the generative system and calendar time: the coefficient for the second system is $0.24$, and the calendar terms combine a linear trend with periodic variation. Question counts follow a log-normal distribution driven by latent demand and seasonality, while the shares of use across systems follow a symmetric Dirichlet distribution.

The design based on the collected answers uses \eqref{eq:paired-occurrence-probabilities}. It samples one question index for each pair of models and gives equal representation to English and Japanese. With $P$ generative systems, the fitted occurrence model contains $P(Q-1)$ centered question effects. Each system has its own vector of effects, and the vectors share one scale parameter. This likelihood reproduces the estimated baseline probabilities while retaining the same coefficient for the GEO source state.

The probability of positive spending and the positive amount spent depend on promotion, market and product-cluster effects, and demand measured before treatment. The coefficient vector in the model for sponsored placements is $(2.10,0.82,0.20,0.14)$, and the notice probability for those placements is $0.57$. The two generated channels use transformation parameters sampled from the distributions that generate the estimation candidates. The response errors have standard deviation $1.20$. Controls consist of additive market and product-cluster effects, a linear time trend, two seasonal terms, and the two predictors observed before treatment.

Each conditional response posterior contains 1,000 samples. The independent randomized experiment has 400 equally allocated units and a response standard deviation of $3.5$. The treated mean differs from the control mean by the true average treatment effect of GEO over the panel, $\Delta_G/1080$. Inference constructs $d_k$ using \eqref{eq:experimental-treatment-comparison}. Its first coordinate is $320/1080$, while the coordinate for the main GEM effect is zero because GEM spending is held fixed. A second version adds $0.75$ to the same true effect. Both analyses use the specified transport standard deviation $0.20$. The estimator continues to use the specified transport standard deviation without an additional mean shift, so the added $0.75$ becomes an unmodeled difference between the experimental and target populations.

\subsection{Alternative Data-Generating Processes}
\label{sec:robustness-details}

Five designs alter either the measurement model, the response equation, or both while retaining the panel dimensions and treatment sequences. The numbers of importance candidates and conditional posterior samples are also unchanged.

Two designs change the distribution of the measurement data. Occurrence overdispersion first samples a cell probability from a Beta distribution with concentration 12 and then generates repeated Bernoulli indicators. Overdispersion in sponsored placements uses a gamma--Poisson mixture with dispersion 4. Two other designs change the response equation. The first uses $(\alpha,L,\theta,\kappa)=(0.84,8,1.42,0.92)$ for GEO and $(0.78,8,1.32,0.90)$ for GEM. The second adds $0.80(H^G)^2$ to both the response and the true treatment effect of GEO. The fifth design combines all four departures.
\begin{table}[!htbp]
\centering
\caption{Combined Departure in the Measurement and Response Models}
\label{tab:gmmm-stress}
\footnotesize
\setlength{\tabcolsep}{3.6pt}
\begin{tabular}{lrrrrrr}
\toprule
Method & Effect Bias (\%) & Effect RMSE (\%) & Coverage & Vector RMSE & ESS & ESS $q_{0.10}$ \\
\midrule
Plug-in GMMM & -14.835 & 22.738 & 0.783 & 2.014 & -- & -- \\
Two-stage GMMM & -14.082 & 22.110 & 0.850 & 1.501 & -- & -- \\
Joint GMMM & -4.683 & 18.059 & 0.917 & 1.080 & 25.947 & 8.616 \\
Oracle inputs & -1.186 & 16.988 & 0.917 & 0.775 & -- & -- \\
\bottomrule
\end{tabular}
\end{table}
\begin{table}[!htbp]
\centering
\caption{Paired Differences When Both Methods Estimate the Transformations}
\label{tab:matched-paired}
\small
\begin{tabular}{llrrr}
\toprule
Design & Metric & Difference & Lower & Upper \\
\midrule
Estimated occurrence probabilities & Vector RMSE & 0.0133 & 0.0008 & 0.0262 \\
Estimated occurrence probabilities & Effect RMSE (pp) & 0.159 & -0.012 & 0.330 \\
Shifted transformations & Vector RMSE & 0.0095 & -0.0004 & 0.0194 \\
Shifted transformations & Effect RMSE (pp) & -0.002 & -0.165 & 0.150 \\
Combined misspecification & Vector RMSE & 0.0096 & -0.0006 & 0.0196 \\
Combined misspecification & Effect RMSE (pp) & 0.012 & -0.133 & 0.157 \\
\bottomrule
\end{tabular}
\end{table}
\noindent Each difference is Joint GMMM minus the plug-in estimator that also estimates the transformations. The bounds are 95\% paired bootstrap intervals from 5,000 resamples of the 120 common panels within each design. Differences in effect RMSE are measured in percentage points. A positive value favors the plug-in estimator.

\subsection{Simulation With Platform Growth}

This simulation contains 80 clusters observed for 48 periods on three platforms. Half of the clusters receive GEO beginning in period 24. Each platform has a common time path that combines deterministic growth, seasonal variation, and a random walk, while the log-odds coefficient of GEO is $0.65$. Across 500 replications, we compare difference-in-differences with the change before and after treatment among treated units and with the difference between treated and control units after treatment.

\subsection{Simulation With a Post-Treatment Variable}

A continuous randomized encouragement changes the constructed GEO input, which affects the response directly and through brand search. The direct effect is $0.70$, and the effect through brand search is $0.88$, giving a total effect of $1.58$. Across 800 replications, we compare a response regression on the treatment with a regression that also conditions on brand search.

\section{Computation for the Referral Analysis}
\label{sec:empirical-computation}

The referral analysis uses segmented regression and then varies the treatment date, trend specification, lag length, and influential observations. The details below define those calculations.

The analysis uses 47 weekly observations and the regression with four parameters in \eqref{eq:segmented-model}. Inference uses Newey--West covariance with four lags. The date analysis moves the first post-treatment week from two weeks before the central date to one week after it, while the lag analysis considers every integer from zero through 12. Other checks alter one part of the response specification at a time. They remove the first post-treatment week, shorten the period before treatment, change the baseline trend, or omit the common linear trend. Influence is assessed through 47 leave-one-out regressions, together with studentized residuals and Cook's distance.

For a candidate week $t_c$ at which measurement changes, the level specification adds $\zeta_0\mathbf 1(t\geq t_c)$ to \eqref{eq:segmented-model}. The specification allowing both level and slope changes also adds $\zeta_1(t-t_c)\mathbf 1(t\geq t_c)$. These terms represent differential measurement changes that remain in the log ratio of treated to control sessions. Holding the first post-treatment week at December 30, we fit each candidate $t_c$ to all sessions and to engaged sessions. The truncated specification retains the original four regressors and uses observations strictly before March 10. Every reported ratio exponentiates the treatment coefficient $\delta_2$, while the later measurement terms account for the subsequent change. With $n$ observations and $p$ regressors, the Newey--West covariance uses the correction $n/(n-p)$, and the intervals use $n-p$ degrees of freedom.

The fitted ratio over the final four weeks is computed from the segmented regression. Its interval uses 5,000 residual moving-block bootstrap replications. Residuals are centered within the periods before and after treatment, sampled in circular blocks of four weeks within each segment, and added to the fitted values before the regression is refitted. The reported 95\% interval contains the 2.5th and 97.5th percentiles of the resulting ratios. This interval is distinct from the Newey--West $t$ test for the corresponding linear coefficient.

The placebo analysis fits the same segmented regression at every split in the pre-treatment period that leaves at least four observations on each side, including both boundary splits. Let $B$ be the number of admissible splits, and let $b$ count the splits whose absolute estimated level change is at least as large as the estimate at treatment. The reported rank is $(b+1)/(B+1)$, where the added observation is the estimate at treatment itself. Because the treatment date was not randomly selected from these dates, the rank is a descriptive diagnostic and cannot be interpreted as a $p$ value from a randomization test.
\begin{table}[!htbp]
\centering
\caption{Sensitivity to the Response Specification}
\label{tab:empirical-model-sensitivity}
\small
\begin{tabular}{lcccc}
\toprule
Specification & Ratio & Lower & Upper & P-value \\
\midrule
Baseline & 1.842 & 1.306 & 2.599 & $<0.001$ \\
Drop first post-treatment week & 2.272 & 1.683 & 3.069 & $<0.001$ \\
Quadratic baseline trend & 2.219 & 1.780 & 2.767 & $<0.001$ \\
Use last 13 pre-treatment weeks & 2.367 & 1.714 & 3.268 & $<0.001$ \\
No common time trend & 2.647 & 1.706 & 4.107 & $<0.001$ \\
\bottomrule
\end{tabular}
\end{table}
\begin{table}[!htbp]
\centering
\caption{Sensitivity to the Newey--West Lag Length}
\label{tab:empirical-hac}
\small
\begin{tabular}{rcccc}
\toprule
Newey--West Lags & Ratio & Lower & Upper & P-value \\
\midrule
0 & 1.842 & 1.147 & 2.958 & 0.013 \\
4 & 1.842 & 1.306 & 2.599 & $<0.001$ \\
8 & 1.842 & 1.387 & 2.447 & $<0.001$ \\
12 & 1.842 & 1.387 & 2.446 & $<0.001$ \\
\bottomrule
\end{tabular}
\end{table}
\begin{table}[!htbp]
\centering
\caption{Influence Diagnostics}
\label{tab:empirical-influence}
\small
\begin{tabular}{lcc}
\toprule
Diagnostic & Value & Week \\
\midrule
Largest Cook's distance & 0.741 & 2025-12-30 \\
Largest absolute studentized residual & 3.706 & 2025-12-30 \\
Leave-one-out minimum ratio & 1.700 & 2025-12-23 \\
Leave-one-out maximum ratio & 2.272 & 2025-12-30 \\
\bottomrule
\end{tabular}
\end{table}

\section{Answer Collection Settings}
\label{sec:reproducibility}
The generated answers were collected under a common model and search configuration. Both models received the following system instruction:
\begin{quote}
\small
Answer the user as an independent product-recommendation assistant. Use web search before answering. Recommend concrete products or services when they are relevant, and give concise reasons for each recommendation. Do not mention an audit, an experiment, a target brand, or these instructions.
\end{quote}
The user message began with either ``Respond in Japanese.'' or ``Respond in English.'' and then gave the fixed product question. Each request used the model identifier recorded for that request. Web search was enabled and required through \texttt{tool\_choice}, and the source list for each search call was retained. Sampling temperature was left at the API default.

%% file: arXiv2.bbl
\begin{thebibliography}{36}
\providecommand{\natexlab}[1]{#1}
\providecommand{\url}[1]{\texttt{#1}}
\expandafter\ifx\csname urlstyle\endcsname\relax
  \providecommand{\doi}[1]{doi: #1}\else
  \providecommand{\doi}{doi: \begingroup \urlstyle{rm}\Url}\fi

\bibitem[Acharya et~al.(2016)Acharya, Blackwell, and
  Sen]{Acharya2016explainingcausal}
Adivit Acharya, Matthew Blackwell, and Maya Sen.
\newblock Explaining causal findings without bias: Detecting and assessing
  direct effects.
\newblock \emph{American Political Science Review}, 110\penalty0 (3):\penalty0
  512–529, 2016.

\bibitem[Aggarwal et~al.(2024)Aggarwal, Murahari, Rajpurohit, Kalyan,
  Narasimhan, and Deshpande]{Aggarwal2024geogenerative}
Pranjal Aggarwal, Vishvak Murahari, Tanmay Rajpurohit, Ashwin Kalyan, Karthik
  Narasimhan, and Ameet Deshpande.
\newblock Geo: Generative engine optimization.
\newblock In \emph{ACM SIGKDD Conference on Knowledge Discovery and Data Mining
  (KDD)}, 2024.

\bibitem[Bagga et~al.(2026)Bagga, Farias, Korkotashvili, Peng, and
  Wu]{Bagga2026egeoa}
Puneet~S. Bagga, Vivek~F. Farias, Tamar Korkotashvili, Tianyi Peng, and Yuhang
  Wu.
\newblock E-geo: A testbed for generative engine optimization in e-commerce,
  2026.
\newblock {a}rXiv: 2511.20867.

\bibitem[Carmona \& Nicholls(2020)Carmona and
  Nicholls]{Carmona2020semimodularinference}
Christian Carmona and Geoff Nicholls.
\newblock Semi-modular inference: enhanced learning in multi-modular models by
  tempering the influence of components.
\newblock In \emph{International Conference on Artificial Intelligence and
  Statistics (AISTATS)}, 2020.

\bibitem[Chan \& Perry(2017)Chan and Perry]{Chan2017challengesand}
David Chan and Mike Perry.
\newblock Challenges and opportunities in media mix modeling.
\newblock Technical report, Google Research, 2017.

\bibitem[Chen \& Au(2022)Chen and Au]{Chen2022robustcausal}
Aiyou Chen and Timothy~C. Au.
\newblock {Robust causal inference for incremental return on ad spend with
  randomized paired geo experiments}.
\newblock \emph{The Annals of Applied Statistics}, 16\penalty0 (1):\penalty0 1
  -- 20, 2022.

\bibitem[Clarke(1976)]{Clarke1976econometricmeasurement}
Darral~G. Clarke.
\newblock Econometric measurement of the duration of advertising effect on
  sales.
\newblock \emph{Journal of Marketing Research}, 13\penalty0 (4):\penalty0
  345--357, 1976.

\bibitem[Dew et~al.(2024)Dew, Padilla, and Shchetkina]{Dew2024yourmmm}
Ryan Dew, Nicolas Padilla, and Anya Shchetkina.
\newblock Your mmm is broken: Identification of nonlinear and time-varying
  effects in marketing mix models, 2024.
\newblock {a}rXiv: 2408.07678.

\bibitem[Dubey et~al.(2024)Dubey, Feng, Kidambi, Mehta, and
  Wang]{Dubey2024auctionswith}
Avinava Dubey, Zhe Feng, Rahul Kidambi, Aranyak Mehta, and Di~Wang.
\newblock Auctions with llm summaries.
\newblock In \emph{ACM SIGKDD Conference on Knowledge Discovery and Data Mining
  (KDD)}, pp.\  713–722. Association for Computing Machinery, 2024.

\bibitem[D{\"u}tting et~al.(2024)D{\"u}tting, Mirrokni, Paes~Leme, Xu, and
  Zuo]{Dutting2024mechanismdesign}
Paul D{\"u}tting, Vahab Mirrokni, Renato Paes~Leme, Haifeng Xu, and Song Zuo.
\newblock Mechanism design for large language models.
\newblock In \emph{Web Conference}, 2024.

\bibitem[Feizi et~al.(2026)Feizi, Hajiaghayi, Rezaei, and
  Shin]{Feizi2026onlineadvertisements}
Soheil Feizi, Mohammadtaghi Hajiaghayi, Keivan Rezaei, and Suho Shin.
\newblock Online advertisements with llms: Opportunities and challenges.
\newblock \emph{SIGecom Exchange}, 22\penalty0 (2), 2026.

\bibitem[Gong et~al.(2024)Gong, Yao, Zhang, Chen, Li, Su, and
  Bi]{Gong2024causalmmmlearning}
Chang Gong, Di~Yao, Lei Zhang, Sheng Chen, Wenbin Li, Yueyang Su, and Jingping
  Bi.
\newblock Causalmmm: Learning causal structure for marketing mix modeling.
\newblock In \emph{International Conference on Web Search and Data Mining
  (WSDM)}, 2024.

\bibitem[Gordon et~al.(2019)Gordon, Zettelmeyer, Bhargava, and
  Chapsky]{Gordon2019acomparison}
Brett~R. Gordon, Florian Zettelmeyer, Neha Bhargava, and Dan Chapsky.
\newblock A comparison of approaches to advertising measurement: Evidence from
  big field experiments at facebook.
\newblock \emph{Marketing Science}, 38\penalty0 (2):\penalty0 193--225, 2019.

\bibitem[Hajiaghayi et~al.(2024)Hajiaghayi, Lahaie, Rezaei, and
  Shin]{Hajiaghayi2024adauctions}
Mohammad~Taghi Hajiaghayi, S{\'e}bastien Lahaie, Keivan Rezaei, and Suho Shin.
\newblock Ad auctions for llms via retrieval augmented generation.
\newblock In \emph{International Conference on Neural Information Processing
  Systems (NeurIPS)}, 2024.

\bibitem[Hanssens et~al.(2001)Hanssens, Parsons, and
  Schultz]{Hanssens2001marketresponse}
D.M. Hanssens, L.J. Parsons, and R.L. Schultz.
\newblock \emph{Market Response Models: Econometric and Time Series Analysis}.
\newblock International Series in Quantitative Marketing. Springer US, 2001.

\bibitem[Hu et~al.(2026)Hu, Zhang, Shi, and Xiao]{Hu2026gembencha}
Silan Hu, Shiqi Zhang, Yimin Shi, and Xiaokui Xiao.
\newblock Gem-bench: A benchmark for ad-injected response generation within
  generative engine marketing.
\newblock In \emph{Conference on Knowledge Discovery and Data Mining (KDD)}.
  Association for Computing Machinery, 2026.

\bibitem[Imai et~al.(2010)Imai, Keele, and Tingley]{Imai2010ageneral}
Kosuke Imai, Luke Keele, and Dustin Tingley.
\newblock A general approach to causal mediation analysis.
\newblock \emph{Psychological Methods}, 15\penalty0 (4):\penalty0 309--334, Dec
  2010.

\bibitem[Jacob et~al.(2017)Jacob, Murray, Holmes, and
  Robert]{Jacob2017bettertogether}
Pierre~E. Jacob, Lawrence~M. Murray, Chris~C. Holmes, and Christian~P. Robert.
\newblock Better together? statistical learning in models made of modules,
  2017.
\newblock {a}rXiv: 1708.08719.

\bibitem[Jin et~al.(2017)Jin, Wang, Sun, Chan, and
  Koehler]{Jin2017bayesianmethods}
Yuxue Jin, Yueqing Wang, Yunting Sun, David Chan, and Jim Koehler.
\newblock Bayesian methods for media mix modeling with carryover and shape
  effects.
\newblock Technical report, Google Inc., 2017.

\bibitem[Kim et~al.(2026)Kim, Jeong, Kim, Lee, and Lee]{Kim2026sageoarena}
Sunghwan Kim, Wooseok Jeong, Serin Kim, Sangam Lee, and Dongha Lee.
\newblock Sageo arena: A realistic environment for evaluating search-augmented
  generative engine optimization.
\newblock In \emph{Conference on Knowledge Discovery and Data Mining (KDD)},
  2026.

\bibitem[Lewis \& Rao(2015)Lewis and Rao]{Lewis2015theunfavorable}
Randall~A. Lewis and Justin~M. Rao.
\newblock The unfavorable economics of measuring the returns to advertising.
\newblock \emph{The Quarterly Journal of Economics}, 130\penalty0 (4):\penalty0
  1941--1973, 2015.

\bibitem[Martinez(2026)]{Martinez2026optimizingvisibility}
Olivier Martinez.
\newblock Optimizing visibility in generative engines: A critical survey of
  generative engine optimization (2023-2026), 2026.
\newblock {a}rXiv: 2607.14035.

\bibitem[Nerlove \& Arrow(1962)Nerlove and Arrow]{Nerlove1962optialadvertizing}
Marc Nerlove and Kenneth~J. Arrow.
\newblock Optimal advertising policy under dynamic conditions.
\newblock \emph{Economica}, 29\penalty0 (114):\penalty0 129--142, 1962.

\bibitem[Ng et~al.(2024)Ng, Wang, and Dai]{Ng2024bayesiantime}
Edwin Ng, Zhishi Wang, and Athena Dai.
\newblock Bayesian time varying coefficient model with applications to
  marketing mix modeling, 2024.
\newblock {a}rXiv: 2106.03322.

\bibitem[Pearl(1995)]{Pearl1995causaldiagrams}
Judea Pearl.
\newblock Causal diagrams for empirical research.
\newblock \emph{Biometrika}, 82\penalty0 (4):\penalty0 669--688, 1995.

\bibitem[Plummer(2015)]{Plummer2015cutsin}
Martyn Plummer.
\newblock Cuts in bayesian graphical models.
\newblock \emph{Statistics and Computing}, 25\penalty0 (1):\penalty0 37–43,
  2015.

\bibitem[Robins(1987)]{Robins1987agraphical}
James Robins.
\newblock A graphical approach to the identification and estimation of causal
  parameters in mortality studies with sustained exposure periods.
\newblock \emph{Journal of Chronic Diseases}, 40:\penalty0 139S--161S, 1987.

\bibitem[Runge et~al.(2025)Runge, Skokan, Zhou, and
  Pauwels]{Runge2025packagingup}
Julian Runge, Igor Skokan, Gufeng Zhou, and Koen Pauwels.
\newblock Packaging up media mix modeling: An introduction to robyn's
  open-source approach, 2025.
\newblock {a}rXiv: 2403.14674.

\bibitem[Shapley(1988)]{Shapley1988avalue}
Lloyd~S. Shapley.
\newblock \emph{A value for n-person games}, pp.\  31–40.
\newblock Cambridge University Press, 1988.

\bibitem[Sun et~al.(2017)Sun, Wang, Jin, Chan, and
  Koehler]{Sun2017geolevelbayesian}
Yunting Sun, Yueqing Wang, Yuxue Jin, David Chan, and Jim Koehler.
\newblock Geo-level bayesian hierarchical media mix modeling.
\newblock Technical report, Google Research, 2017.

\bibitem[Vaver \& Koehler(2011)Vaver and Koehler]{Vaver2011measuringad}
Jon Vaver and Jim Koehler.
\newblock Measuring ad effectiveness using geo experiments.
\newblock Technical report, Google Research, 2011.

\bibitem[Watanabe \& Nakayashiki(2026)Watanabe and
  Nakayashiki]{Watanabe2026disentanglinganswer}
Keisuke Watanabe and Kazuki Nakayashiki.
\newblock Disentangling answer engine optimization from platform growth: A
  log-based natural experiment on chatgpt referral traffic, 2026.
\newblock {a}rXiv: 2606.04362.

\bibitem[Xu et~al.(2026)Xu, Chen, Deng, Huang, and
  Schoenebeck]{Xu2026adinsertion}
Shengwei Xu, Zhaohua Chen, Xiaotie Deng, Zhiyi Huang, and Grant Schoenebeck.
\newblock Ad insertion in llm-generated responses, 2026.
\newblock {a}rXiv: 2601.19435.

\bibitem[Zhang et~al.(2026)Zhang, He, and Yao]{Zhang2026fromcitation}
Kai Zhang, Xinyue He, and Jingang Yao.
\newblock From citation selection to citation absorption: A measurement
  framework for generative engine optimization across ai search platforms,
  2026.
\newblock {a}rXiv: 2604.25707.

\bibitem[Zhang et~al.(2023)Zhang, Wurm, Wakim, Li, and
  Liu]{Zhang2023bayesianhierarchical}
Yingxiang Zhang, Mike Wurm, Alexander Wakim, Eddie Li, and Ying Liu.
\newblock Bayesian hierarchical media mix model incorporating reach and
  frequency data.
\newblock Technical report, Google Research, 2023.

\bibitem[Zhang et~al.(2024)Zhang, Wurm, Li, Wakim, Kelly, Price, and
  Liu]{Zhang2024mediamix}
Yingxiang Zhang, Mike Wurm, Eddie Li, Alexander Wakim, Joseph Kelly, Brenda
  Price, and Ying Liu.
\newblock Media mix model calibration with bayesian priors.
\newblock Technical report, Google Research, 2024.

\end{thebibliography}
